\documentclass{article}

\usepackage{amsthm}
\usepackage{amsmath}
\usepackage{amssymb}
\usepackage{authblk}
\usepackage{breqn}
\usepackage{booktabs}
\usepackage{graphicx}
\usepackage{caption}
\usepackage{multicol}
\usepackage[margin=0.5in]{geometry}

\usepackage[backend=biber, style=numeric, sorting=none]{biblatex}

\graphicspath{./figures/}

\newcommand{\norm}[1]{\left\lVert#1\right\rVert}

\DeclareMathOperator*{\argmax}{argmax}

\newtheorem{theorem}{Theorem}[section]
\newtheorem*{theorem*}{Theorem}
\newtheorem{lemma}{Lemma}[section]
\newtheorem{proposition}{Proposition}[section]
\newtheorem{corollary}{Corollary}[section]
\newtheorem{definition}{Definition}[section]

\title{Informationally Compressive Anonymization: Non-Degrading Sensitive Input Protection for Privacy-Preserving Supervised Machine Learning}
\author{Jeremy J. Samuelson: EVP, Artificial Intelligence \& Innovation}
\affil{Integrated Quantum Technologies}
\date{May 2026}

\begin{document}

\maketitle

\begin{abstract}
    Modern machine learning systems increasingly rely on sensitive data, creating significant privacy, security, and regulatory risks that existing privacy-preserving machine learning (ppML) techniques—such as Differential Privacy (DP) and Homomorphic Encryption (HE)—address only at the cost of degraded performance, increased complexity, or prohibitive computational overhead. This paper introduces Informationally Compressive Anonymization (ICA) and the VEIL architecture, a privacy-preserving ML framework that achieves strong privacy guarantees through architectural and mathematical design rather than noise injection or cryptography. ICA embeds a supervised, multi-objective encoder within a trusted Source Environment to transform raw inputs into low-dimensional, task-aligned latent representations, ensuring that only irreversibly anonymized vectors are exported to untrusted training and inference environments. The paper rigorously proves that these encodings are structurally non-invertible using topological and information-theoretic arguments, showing that inversion is logically impossible—even under idealized attacker assumptions—and that, in realistic deployments, the attacker’s conditional entropy over the original data diverges, driving reconstruction probability to zero. Unlike prior autoencoder-based ppML approaches, ICA preserves predictive utility by aligning representation learning with downstream supervised objectives, enabling low-latency, high-performance ML without gradient clipping, noise budgets, or encryption at inference time. The VEIL architecture enforces strict trust boundaries, supports scalable multi-region deployment, and naturally aligns with privacy-by-design regulatory frameworks, establishing a new foundation for enterprise ML that is secure, performant, and safe by construction—even in the face of post-quantum threats.
\end{abstract}

\begin{multicols}{2}

\section{Introduction}

Today, ML applications are nearly ubiquitous in many industry sectors, and the speed with which ML is being adopted is continuing to increase. The successful implementation of ML often requires very large amounts of training data. However, the need for such volumes of data to develop highly effective models has raised serious privacy concerns among regulators. For certain application domains, such as financial services and healthcare, the data needed can be extremely sensitive in nature. Furthermore, bad actors have been quick to respond to the widespread use of ML, and have developed new methods for exploiting ML solutions \cite{7958568, 10.1145/2810103.2813677, 10.1145/3624010}. A given ML model may face multiple types of adversarial attacks depending on the type of access to the model an attacker gains. Among the most common ML attacks is the interception and exfiltration of sensitive input features in-flight from the data source to the deployed ML model's prediction API \cite{tramèr2016stealingmachinelearningmodels, liu2025datareconstructionattacksdefenses, ibm2024mlops, siposova2023dataexfiltration}.\\
In response to this emerging threat, many regulations and ethical data policies, such as the EU's General Data Protection Regulation (GDPR), the California Consumer Privacy Act (CCPA), the California Privacy Rights Act (CPRA), and the Health Insurance Portability and Accountability Act (HIPAA), were put in place to raise awareness and require ML practitioners to take precautions against attacks and data breaches that might occur in a given ML pipeline. Researchers have developed various techniques to defend against privacy attacks. Among the most common techniques are Differential Privacy (DP), Federated Learning (FL), and Homomorphic Encryption (HE) \cite{10.1145/3460427}. However, multiple studies have shown that these methods preserve data privacy at the expense of model performance \cite{DBLP:journals/corr/abs-1812-03224} and can add a great deal of additional computational burden \cite{s23031252, 9734024}, rendering such methods impractical, or even useless, for some industry applications that require low response times and high predictive reliability.\\
The autoencoder neural network architecture was introduced, and has long been used, as a methodology for nonlinear dimensionality reduction that produces a latent vector space embedding for use with other ML algorithms \cite{goodfellow2016}. Some researchers have shown that this latent vector space embedding can serve as a deep learning (DL) powered encoding for privacy-preserving data transfer over public channels when necessary \cite{sagar2019, quinteroossa2022privacypreserving}. However, information loss due to the Information Bottleneck Principle \cite{tishby2015deeplearninginformationbottleneck} often results in degradation of predictive performance in the downstream ML model.\\
The purpose of this paper is two-fold: first, to describe the current state-of-the-art in ppML methodologies, with a specific focus on the Multi-Objective Supervised Convolution-Residual AutoEncoder (SCRAE) architecture introduced by Ouaari et al.\ in 2023 \cite{ouaari2023robust}. This novel architecture aligns the representation produced by the encoder with the prediction objective of the downstream ML model. This supervised approach is combined with a multi-objective framework that enforces certain properties in the latent representation of the inputs to provide privacy protection by obscuring the original data dimensionality, format, distribution, and content in a way that prevents accurate reconstruction, while maintaining predictive utility.\\
This combined architecture is first developed in a way that is specialized for classification and is then extended to regression to produce a methodology that is generalized to all supervised learning.

\section{A Brief Introduction to AutoEncoders}

In this section, a brief introduction to and discussion of the autoencoder architecture, also known as an auto-associative neural network \cite{bishop2009}, is given before showing, in an upcoming section, how the inherent information loss and resulting ML model degradation have been mitigated in the framework introduced by Ouaari et al.\\
Principal Component Analysis (PCA) can be considered as a factor analysis that learns a linear mapping $\mathbf{x} \mapsto \mathbf{z}$, called the \textbf{encoder}, $f_e$, and another linear mapping $\mathbf{z} \mapsto \mathbf{x}$, called the \textbf{decoder}, $f_d$. The overall reconstruction function is of the form $\hat{\mathbf{x}} = f_d\left(f_e\left(\mathbf{x}\right)\right)$. The model is then trained to minimize the squared reconstruction error, $\mathcal{L}\left(\boldsymbol{\theta}\right) = \norm{\mathbf{x} - \hat{\mathbf{x}}}_2^2$ \cite{murphy2022}.\\
Now consider a feed-forward artificial neural network with a single hidden layer consisting of $E$ neurons, in which the hidden units are computed as $\mathbf{z} = W_1\mathbf{x}$, and the output is a reconstruction of the input given by $\hat{\mathbf{x}} = W_2\mathbf{z}$, where $\mathbf{x} \in \mathbb{R}^D$, $W_1$ is an $E \times D$ matrix, $W_2$ is a $D \times E$ matrix, and $E < D$. Thus, the output of the model is given by $\hat{\mathbf{x}} = W_2W_1\mathbf{x} = \hat{W}\mathbf{x}$. If this model is trained to minimize the squared reconstruction error, $\mathcal{L}\left(\hat{W}\right) = \sum_{n=1}^N\norm{\mathbf{x}_n - \hat{W}\mathbf{x}_n}_2^2$, it has been shown \cite{baldi198953, karhunen1995} that $\hat{W}$ is an orthogonal projection onto the first $E$ eigenvectors of the empirical covariance matrix of the data, $\Sigma_X \in \mathbb{R}^{D \times D}$. Therefore, this \textbf{linear autoencoder} is equivalent to PCA \cite{jolliffe2002}.\\
If the architecture of this autoencoder is extended to include nonlinear activations, this produces a model that has been proven \cite{japkowics2000} to be strictly \textit{more} powerful than PCA\@. Such a model is capable of learning very useful representations of data, even under significant compression to a relatively low-dimensional latent space \cite{murphy2023}.
\begin{center}
\includegraphics[scale=0.35]{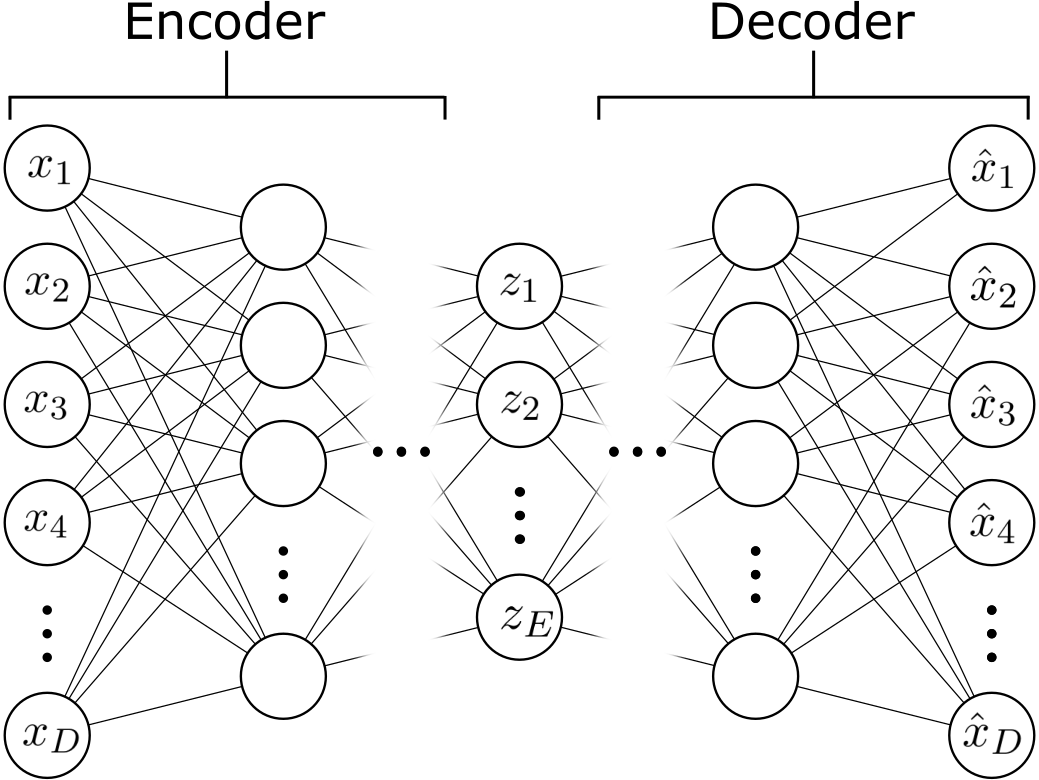}
\captionof{figure}{A typical ``bottleneck'' AutoEncoder, yielding an undercomplete representation of the input data}
\label{fig:autoencoder}
\end{center}
In general, autoencoders may have arbitrarily many hidden layers with nonlinear activations, yielding arbitrarily complex mappings. The hidden layers often form a bottleneck between the input and its reconstruction, as shown in Figure \ref{fig:autoencoder}.
\begin{center}
\includegraphics[scale=0.35]{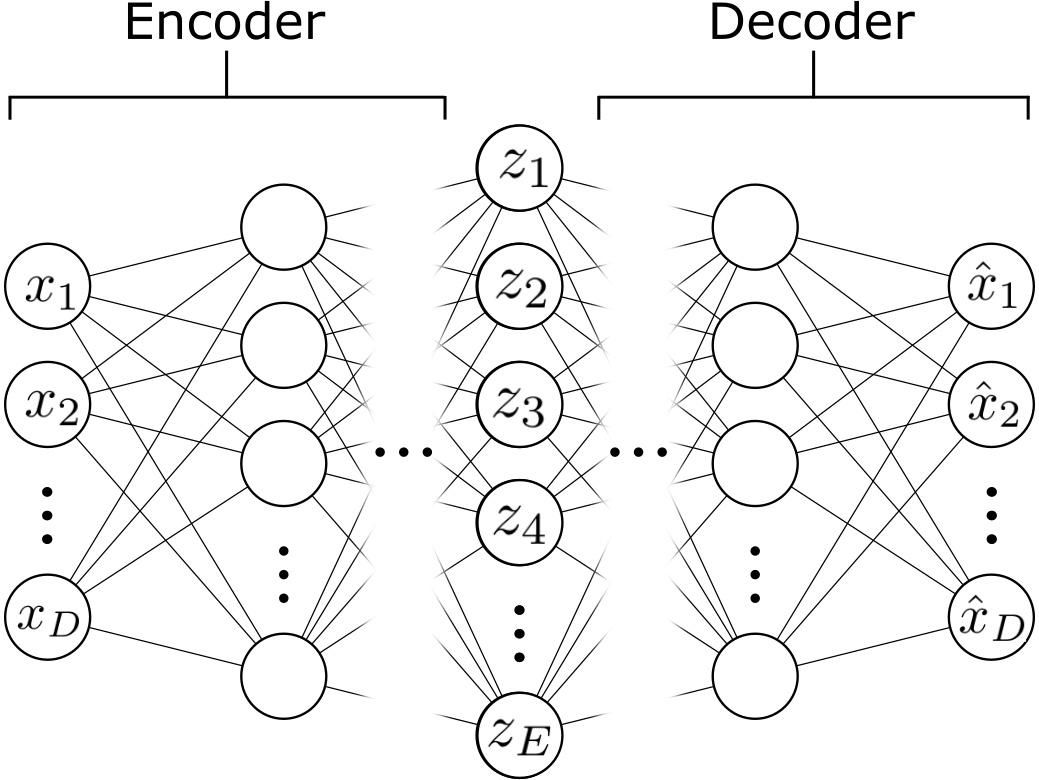}
\captionof{figure}{An AutoEncoder yielding an overcomplete representation of the input data}
\label{fig:autoencoder2}
\end{center}
Of course, if the hidden layers are wide enough, there is nothing to stop such a model from merely learning the identity function \cite{murphy2022}. To prevent this degenerate solution, the model must be restricted in some way. The simplest approach is to use a narrow bottleneck, where $E \ll D$. The resulting latent embedding is called an \textbf{undercomplete representation}. An alternative approach is to use $E \gg D$, as shown in Figure \ref{fig:autoencoder2}, which yields an \textbf{overcomplete representation}.
\begin{center}
\includegraphics[scale=0.35]{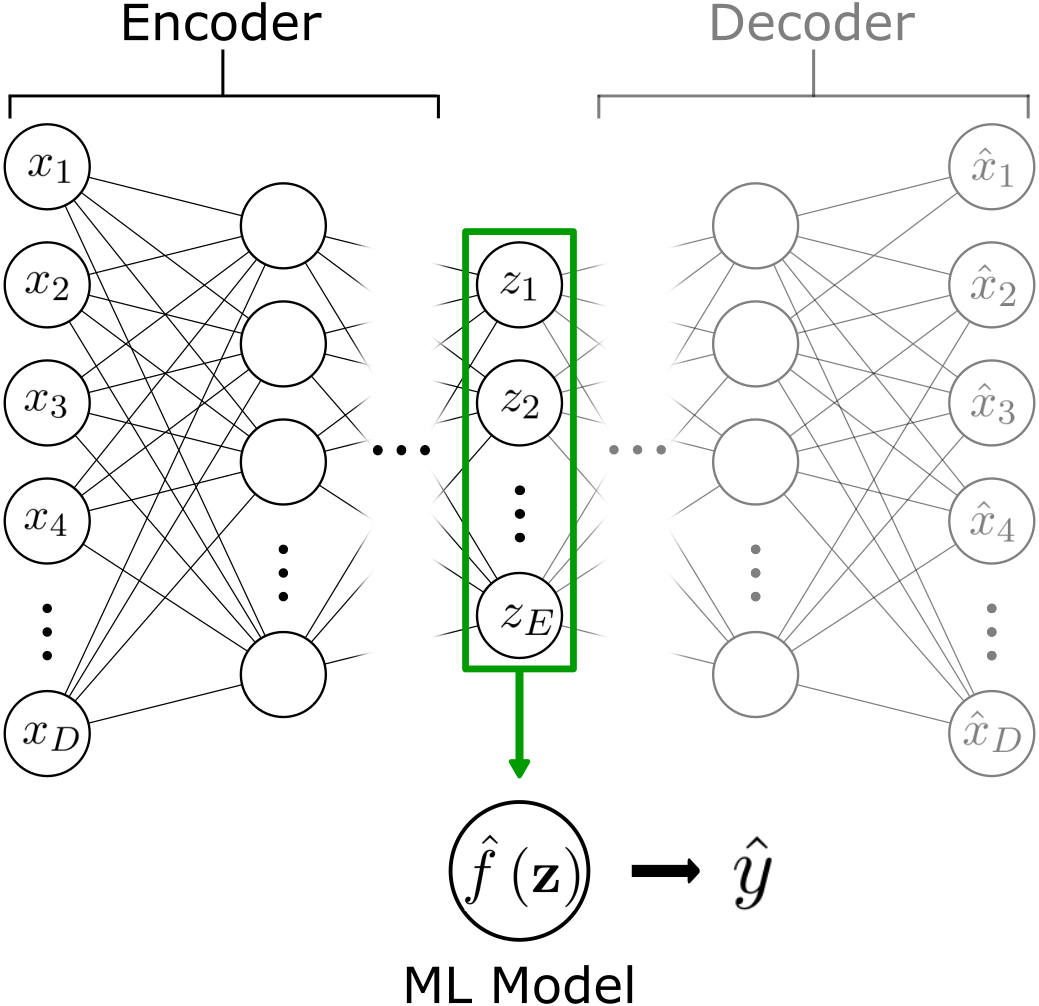}
\captionof{figure}{The latent representation of an AutoEncoder can be directly input into another ML model}
\label{fig:autoencoder3}
\end{center}
When using an overcomplete representation, additional regularization is needed to prevent the model from overfitting, such as adding noise to the inputs, forcing hidden activations to be sparse, or forcing weights to be small by adding a penalty term to the loss function. The autoencoder architecture is extremely flexible in its applications, as the densely-connected layers of the feed-forward architecture may also be substituted for convolutional, recurrent, or residual layers, to adapt the model to encode images, text, or other types of data, as needed.\\
As previously mentioned, the autoencoder architecture is often used as a method for reducing the dimensionality of input data for other ML models. This means the latent representations produced by autoencoders do not need to be decoded prior to model inference, as ML models are capable of learning patterns and mappings directly from the latent vector space $\mathbf{z} \mapsto \mathbf{y}$, as illustrated in Figure \ref{fig:autoencoder3}.\\
This is due to the fact that the autoencoder must simultaneously learn the encoder, $f_e$, and the decoder, $f_d$, ensuring that the latent representation has retained enough information from the original input to accurately reconstruct it. Therefore, enough information is present in the latent embedding to support accurate ML inferences, despite the fact that the data have been given a complex, nonlinear encoding \cite{10.3389/fninf.2023.1074653}.\\
As mentioned before, when training ML models on latent encodings from autoencoders, there is some amount of degradation in the performance of the ML model due to the obfuscation of the input data. This has long been accepted as an unavoidable trade off for reducing dimensionality to achieve a smaller model or to protect the input data from bad actors.

\section{Related Work and the Current State of AutoEncoder-Based Privacy Protection}

Earlier applications of autoencoders for secure transmission treated the latent encoding as a cipher. Sagar and Kumar (2019) proposed ``AutoEncoder Artificial Neural Network Public Key Cryptography'' (AANNPKC) for encrypting messages over unsecured public channels \cite{sagar2019}. This was done by training identical autoencoders at both the sender and receiver nodes of a communication channel, with the shared training data acting as the encryption key. They benchmarked throughput and latency against several classical ciphers (DES, 3DES, Blowfish, IDEA, AES, and RC6), reporting higher performance for small payloads and demonstrating feasibility. However, this method relied on maintaining the secrecy of both the encoder and decoder trained parameters, as discovery of these parameters would make decoding of the transmitted data possible. This methodology also suffered from degradation of data due to the information compression in undercomplete autoencoders.\\
More recent work by Ouaari et al. (2023) \cite{ouaari2023robust} explored combating this degradation and improving autoencoder-based prediction utility by training a Supervised Convolution-Residual AutoEncoder (SCRAE) architecture. In this framework, the training of the encoder is aligned to two objectives to simultaneously minimize the reconstruction error and the prediction error of a supervised classifier that ingests the encoding. This is expressed by the authors in the following loss function.
\begin{dmath}
\mathcal{L}\left(\mathbf{x}, y, \boldsymbol{\theta}_e, \boldsymbol{\theta}_d, \boldsymbol{\theta}_c\right) = \frac{1}{N}\sum_{n=1}^N \left[ \mathcal{L}_R\left(\mathbf{x}_n, f_d\left(f_e\left(\mathbf{x}_n, \boldsymbol{\theta}_e\right), \boldsymbol{\theta}_d\right)\right) + \mathcal{L}_M\left(y_n, \hat{f}\left(f_e\left(\mathbf{x}_n, \boldsymbol{\theta}_e\right), \boldsymbol{\theta}_c\right)\right)\right]
\end{dmath}
In the next section, this multi-objective approach is extended to include other objectives enforced on the latent representation. These additional objectives lead to beneficial geometric properties in the latent representation that are shown to further preserve, or enhance, the predictive utility of the encoding. This extension of the multi-objective training framework is also combined with aspects of a multi-level convolutional neural network architecture proposed by Xu and Duraisamy (2020) \cite{Xu_2020}, allowing the autoencoder to learn robust encodings across multiple levels, or dimensionalities.

\section{Multi-Level, Multi-Objective AutoEncoders for Classification}

To begin the extension of this multi-objective framework, the overall objective scaffolding is clarified by separating the individual objectives. It is then extended by adding the new objective terms and modified with coefficients for each term to enable setting or modifying the relative prioritization of each objective. This enhanced multi-objective function is expressed as follows.
\begin{dmath}
\mathcal{L} = \lambda_{\text{recon}}\mathcal{L}_{\text{recon}} + \lambda_{\text{repr}}\mathcal{L}_{\text{repr}} + \lambda_{\text{pred}}\mathcal{L}_{\text{pred}} + \lambda_{\text{reg}}\mathcal{L}_{\text{reg}}
\label{eq:multi-objective_loss}
\end{dmath}
It becomes clear that the overall loss has four components: the autoencoder reconstruction loss $\left(\mathcal{L}_{\text{recon}}\right)$, the loss function enforcing useful characteristics in the latent representation $\left(\mathcal{L}_{\text{repr}}\right)$, the prediction loss of the downstream ML model $\left(\mathcal{L}_{\text{pred}}\right)$, and a final term for any regularization on the latent representation $\left(\mathcal{L}_{\text{reg}}\right)$. The $\lambda$ values are the aforementioned hyperparameters used to specify the relative importance of each objective.\\
The most common objective function used to measure reconstruction loss is the Ordinary Least Squares loss, also known as the Mean Squared Error (MSE), which is adequate.
\begin{dmath}
\mathcal{L}_{\text{OLS}} = \frac{1}{2N}\sum_{i=1}^N \norm{\mathbf{x}_i - \hat{\mathbf{x}}_i}_2^2
\end{dmath}
Given that the goal is to create a ppML system, a desirable property of the latent encodings is that they cannot support reconstruction. In future sections, it will also be established that no corresponding decoder is deployed as part of the ppML system. For these reasons, the recommended default is $\lambda_{\text{recon}}=0$. Unlike other systems that use autoencoders, including the approach defined by Ouaari et al., under this new objective structure, minimizing the reconstruction error is demoted from a primary objective to an optional regularization term that can be added to resist latent collapse during training, which will be discussed in future sections. As such, $\lambda_{\text{recon}}$ should always be kept small relative to the other objective prioritization hyperparameters.\\
Starting with the SCRAE architecture, the densely-connected hidden layers of the encoder are concatenated into a single vector, $\Psi$, before being passed to the downstream ML model. This creates a multi-level autoencoder, as the encoder must learn to create optimal latent representations both at the last layer of the encoder, as well as at the level of this concatenation \cite{Xu_2020}. The representation objective function, $\mathcal{L}_{\text{repr}}$, is applied to $\Psi$ to ensure the produced latent representation has the desired properties.\\
In the case of classification, a model must have clearly discernible class clusters to accurately classify observations. Ideally, these class clusters will be as separable as possible to enable discrimination via the classifier's learned decision boundary. To achieve this, the objective applied to the concatenated latent representation is the Center loss \cite{10.1007/978-3-319-46478-7_31},
\begin{dmath}
\mathcal{L}_{\text{C}} = \frac{1}{2N}\sum_{i=1}^N \norm{\mu_{y_i} - \Psi_i}_2^2
\end{dmath}
where $\mu_{y_i} \in \mathbb{R}^E$ is the latent center of the class to which observation $i$ belongs. The introduction of the Center loss minimizes intra-class distance and improves target resolution for the deployed ML model as visualized in Figures \ref{fig:no_center_loss} and \ref{fig:center_loss}.
\begin{center}
\includegraphics[scale=0.65]{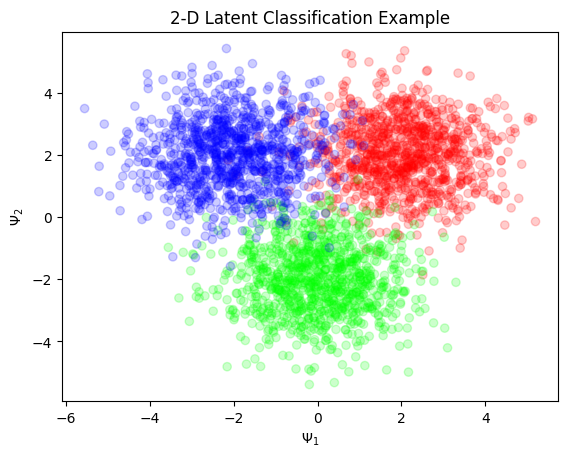}
\captionof{figure}{2-D latent representation without the Center loss}
\label{fig:no_center_loss}
\end{center}
\begin{center}
\includegraphics[scale=0.65]{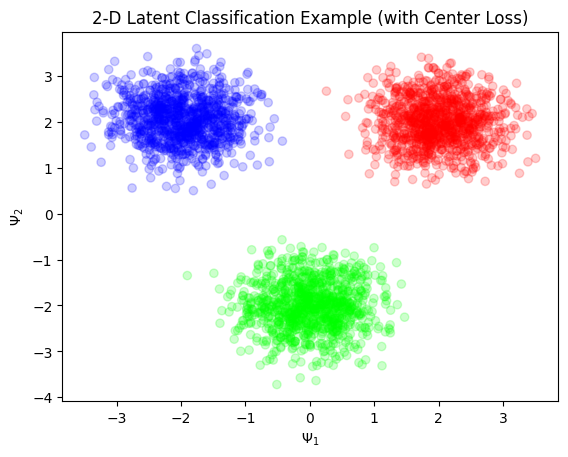}
\captionof{figure}{2-D latent representation after training with Center loss}
\label{fig:center_loss}
\end{center}
Any common prediction objective function used in classification is suitable to train the downstream model and to maximize the prediction utility of the encoder. Common choices include the Cross-Entropy loss function,
\begin{dmath}
\mathcal{L}_{\text{CE}} = -\frac{1}{N}\sum_{i=1}^N\sum_{k=1}^K y_{ik}\text{ln}p_{ik}
\end{dmath}
or the Hinge loss function, more commonly used to train perceptron \cite{minsky1988perceptrons} and Support Vector Machine \cite{cristianini2000svm} models.
\begin{dmath}
\mathcal{L}_{\text{Hinge}} = \frac{1}{N}\sum_{i=1}^N\sum_{k \neq y_i} \text{max}\left\{0, 1 + \langle W_k, \mathbf{x}_i\rangle - \langle W_{y_i}, \mathbf{x}_i\rangle\right\}
\end{dmath}
There are many options for model regularization within ML and DL\@. To achieve stability within the latent representation and to dampen the high training variance sometimes observed in neural networks, a PCA Cosine Similarity loss is employed to enforce alignment between the learned nonlinear manifold and the geometry of PCA, which is equivalent to a linear autoencoder, as established in Section 2.
\begin{dmath}
\mathcal{L}_{\text{PCA}} = \frac{1}{N}\sum_{i=1}^N 1 - \frac{f_2\left(\Psi_i\right)\cdot\left(\mathbf{x}_{\text{PCA}}\right)_i}{\norm{f_2\left(\Psi_i\right)}\norm{\left(\mathbf{x}_{\text{PCA}}\right)_i}}
\end{dmath}
Here, $f_2:\mathbb{R}^{\sum_{l=1}^Lm_l} \rightarrow \mathbb{R}^2$, $m_l$ is the dimensionality of densely-connected layer $l$ of the encoder, and $\left(\mathbf{x}_{\text{PCA}}\right)_i \in \mathbb{R}^2$ is the vector consisting of the first two principal components of the original input data, $\mathbf{x}_i$. This enforced PCA alignment also preserves the global geometric structure, as PCA does \cite{GOU2023364}, thereby placing some restrictions on the learned latent representation and reducing variance by imposing bias.\\
All of these components are combined to produce the final autoencoder architecture, visualized in Figure \ref{fig:crae}.
\begin{center}
\includegraphics[scale=0.4]{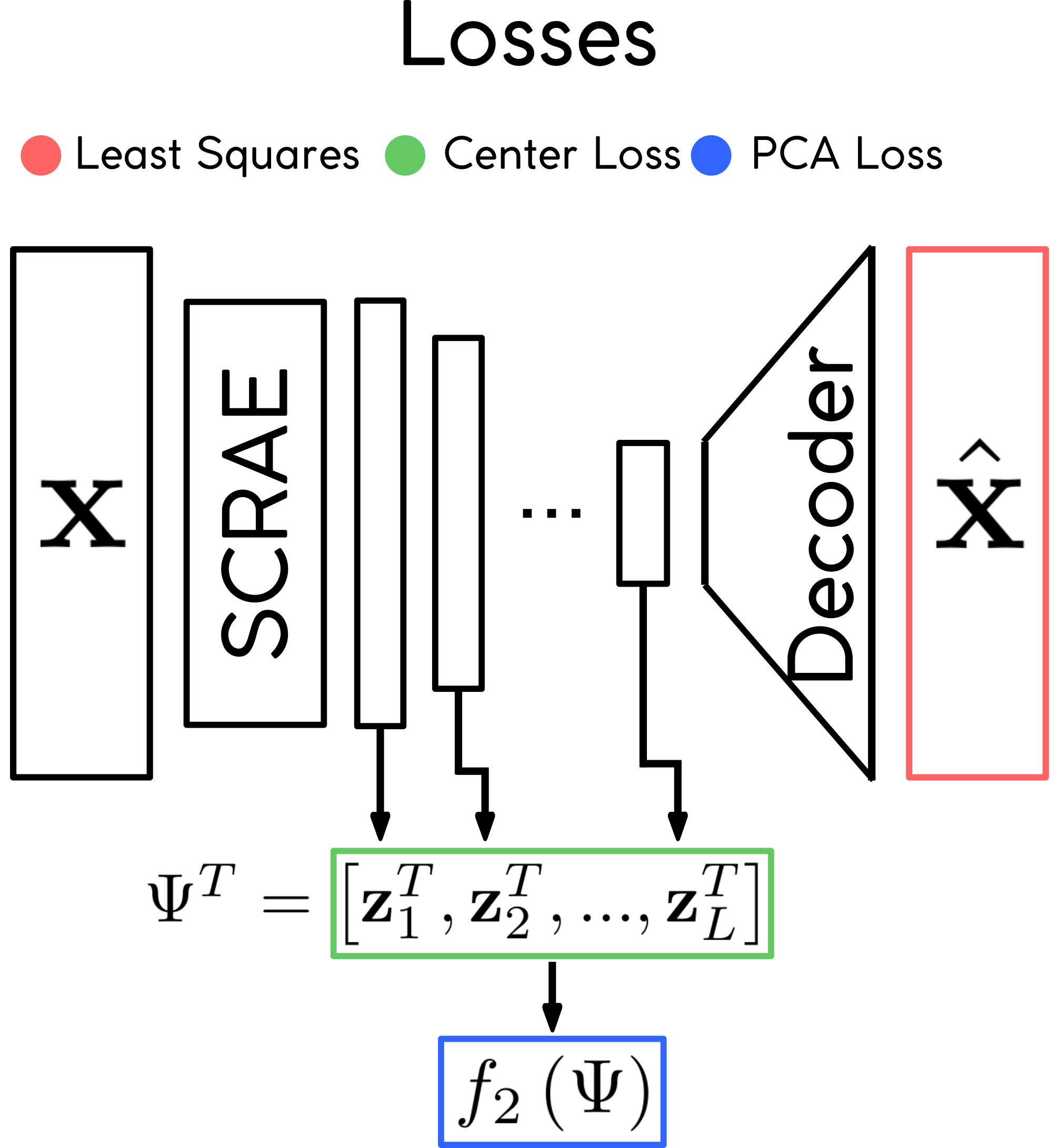}
\captionof{figure}{Multi-Level, Multi-Objective SCRAE architecture}
\label{fig:crae}
\end{center}
The combination of these objectives and architectural aspects yields a powerful privacy-preserving encoding with astonishing predictive utility. To demonstrate this, a performance benchmark study was conducted on the well-known MNIST dataset \cite{lecun1998gradient}. First, a Logistic Regression model was trained on the raw 784 dimensional input. For comparison, a Logistic Regression model was also trained under DP with $\epsilon=7.992$ and $\delta=2.78\times10^{-10}$. A third Logistic Regression was trained on 2-dimensional latent encodings produced by an unsupervised densely-connected autoencoder, as described in Section 2. Finally, a fourth Logistic Regression was trained with an SCRAE architecture, also emitting 2-dimensional latent encodings. Accuracy of each model pipeline was measured on a held-out test set, and predictive performance was compared against the raw baseline. The results are shown in Table \ref{tab:class_results}.
\begin{center}
\begin{tabular}{lcc}
\toprule
\textbf{Model Pipeline} & \textbf{Test Accuracy} & \textbf{vs. Raw} \\
\midrule
Raw Data & $92.42\%$ & $+0.00\%$ \\
DP & $91.78\%$ & $-0.64\%$ \\
Dense AE & $19.69\%$ & $-72.73\%$ \\
SCRAE & $98.61\%$ & $+6.19\%$ \\
\bottomrule
\end{tabular}
\captionof{table}{MNIST Test Accuracy Comparison}
\label{tab:class_results}
\end{center}
The degradation of the densely connected autoencoder is evident, as the test accuracy of the model trained on this encoding dropped by 72.73 percentage points compared with the model trained on the raw data. A small amount of predictive degradation was also observed in the DP pipeline as an expected privacy-utility trade-off. In contrast, the model trained on the SCRAE encoding not only maintained test accuracy relative to the raw data but also exceeded it by 6.19 percentage points, outperforming the densely connected autoencoder pipeline by 78.92 percentage points and the DP pipeline by 6.83 percentage points in out-of-sample accuracy. This is a significant breakthrough in ppML, as the degradation in downstream ML model performance is no longer a necessary trade-off to be accepted for the sake of security.\\
The informational compression achieved by the SCRAE encoder in this experiment also warrants explicit quantification. Each raw MNIST image is a $28\times28$ pixel grayscale image in which each pixel is represented as an 8-bit integer, yielding a raw representation of $784 \times 8\text{ bits}=6,272\text{ bits}$ per image. The SCRAE encoding reduces each image to a 2-dimensional latent vector, with each dimension stored as a 32-bit floating-point value, yielding an encoded representation of $2 \times 32\text{ bits}=64\text{ bits}$. This corresponds to a compression ratio of $64/6,272 \approx 0.0102$, reflecting a reduction in storage and transmission requirements given by the following.
$$\left(1-\frac{64}{6,272}\right)\times 100\% \approx 98.98\%$$
Critically, this compression is not a degradation of the input signal. As Table \ref{tab:class_results} demonstrates, it is the direct result of the Information Bottleneck Principle \cite{tishby2015deeplearninginformationbottleneck} being deliberately harnessed by the multi-objective encoder to discard variation that is not predictively useful while concentrating task-relevant structure within the latent geometry. The SCRAE does not produce a lossy approximation of the raw input; it produces a representation that is, by the metric of downstream predictive accuracy, strictly more informative.\\
The contrast between this outcome and the representation overhead introduced by HE is stark. Ciphertext expansion is a structural and unavoidable property of lattice-based HE schemes, arising from the requirement that plaintext values be embedded within high-degree polynomial rings to support arithmetic operations in the encrypted domain \cite{tang2025hefdivs}. Rather than compressing inputs, HE necessarily expands them. This has been studied empirically in the context of neural network inference. In the CryptoNets system \cite{gilad2016cryptonets}, a well-known HE-based framework applied to MNIST classification, each pixel of the input image is encoded as a separate ciphertext polynomial. Under the original parameterization of that system, a single MNIST image occupying 784 bytes in raw form requires approximately 367.5 megabytes of ciphertext prior to batching---an expansion factor of 468,750.\\
Even when 4,096 images are packed together using Single Instruction, Multiple Data (SIMD) batching to amortize the polynomial overhead, the per-image ciphertext size remains approximately 91.9 kilobytes, representing a roughly 117-fold expansion relative to the raw image \cite{gilad2016cryptonets}. More recent empirical analyses of HE applied to high-dimensional feature spaces have reported ciphertext expansion factors reaching as high as 500 times the plaintext size \cite{tang2025hefdivs}, reflecting a penalty that grows with input dimensionality. In addition to this data expansion, HE-based inference incurs latencies of tens to hundreds of seconds per inference on standard hardware, even with optimized implementations \cite{gilad2016cryptonets, brutzkus2019lola}, rendering it incompatible with the low-latency production deployments described in Section 8 of this paper and demanded by most industry applications.\\
DP does not modify the structure or raw size of input data and therefore does not impose ciphertext expansion in the sense described above. Instead, under the canonical DP-SGD algorithm \cite{abadi2016deep}, privacy guarantees are enforced at training time through the clipping and noise injection of per-example gradients before model parameter updates are applied. This mechanism introduces its own efficiency trade-offs, however. Because DP-SGD must compute gradients on a per-example rather than per-batch basis prior to clipping, the matrix-algebra parallelism exploited by GPU-based training pipelines is substantially undermined, introducing computational overhead that has been identified as a practical deployment obstacle \cite{nist2023dp}.\\
Beyond computational cost, the injection of Gaussian noise into per-example gradients degrades predictive accuracy in proportion to the strength of the privacy guarantee selected, as a fundamental consequence of the privacy-accuracy trade-off inherent to DP \cite{DBLP:journals/corr/abs-1812-03224}. This trade-off must be managed through a carefully chosen privacy budget expressed as an $\left(\epsilon, \delta\right)$-pair, parameters whose selection requires domain expertise and which must be tracked across the entire training lifecycle \cite{abadi2016deep, nist2023dp}.\\
ICA via the SCRAE, and the VEIL architecture introduced in Section 8 of this paper, avoid both categories of overhead entirely. Because raw data never cross the trust boundary, no noise injection is required at training time, gradient clipping is unnecessary, and there are no privacy budgets to maintain. The 98.98\% reduction in input size demonstrated in this experiment is therefore not merely an incidental efficiency gain; it is an integral consequence of a privacy mechanism that achieves its guarantees through architectural separation rather than through the imposition of representational or computational overhead on the training or inference pipeline.\\
While this section covers the utility and efficiency-preserving properties of this encoding, it does not address the privacy-preserving properties. A deeper theoretical analysis of the privacy-preserving properties of the latent encodings is found in Section 9 of this paper, and Section 10 describes supporting empirical evidence for these properties through experiments involving multiple simulated privacy attacks, including attempted input reconstruction on both the DP and SCRAE modeling pipelines.

\section{Adapting the Multi-Level, Multi-Objective AutoEncoder to Regression}

With the multi-level architecture and multi-objective loss function scaffolding developed for classification, the task of adapting this framework to regression amounts to finding objectives that achieve the desired effects in the context of target variable spaces of infinite cardinality. Beginning with the multi-objective loss function given by Equation \ref{eq:multi-objective_loss}, each component is considered, and suitable substitutes are found when necessary.\\
The reconstruction loss is agnostic to whether the downstream ML task is classification or regression, as is the PCA similarity loss. The most obvious change needed is the prediction loss of the downstream ML model $\left(\mathcal{L}_{\text{pred}}\right)$. The prediction loss used in classification, such as Cross-Entropy or Hinge loss, can be replaced with any objective function suited to regression. Examples of commonly used objective functions are the Ordinary Least Squares (OLS) loss
\begin{dmath}
\mathcal{L}_{\text{OLS}} = \frac{1}{2N}\sum_{i=1}^N \norm{\mathbf{y}_i - \hat{\mathbf{y}}_i}_2^2
\end{dmath}
and the Mean Absolute Error (MAE) loss.
\begin{dmath}
\mathcal{L}_{\text{MAE}} = \frac{1}{N}\sum_{i=1}^N \norm{\mathbf{y}_i - \hat{\mathbf{y}}_i}_1
\end{dmath}
For a robust regression model, which has the benefits of both the OLS and MAE loss functions, the Huber loss function \cite{10.1214/aoms/1177703732, meyer2020alternativeprobabilisticinterpretationhuber} is recommended.
\begin{dmath}
\mathcal{L}_{\delta} = 
    \begin{cases}
        \frac{1}{2}\left(y_i - \hat{y}_i\right)^2 & \text{for} \left|y_i - \hat{y}_i\right| < \delta\\
        \left|y_i - \hat{y}_i\right|\delta - \frac{1}{2}\delta^2 & \text{for} \left|y_i - \hat{y}_i\right| \geq \delta
    \end{cases}
\end{dmath}
This leaves only the representation loss $\left(\mathcal{L}_{\text{repr}}\right)$ to consider. The Center loss function relies on discrete target values and is therefore also specific to the case of classification. Here, an objective function is needed to enforce the desired properties in the latent representation---smoothness and similar latent representations for observations with similar target values---but that is suitable for regression target variable spaces of infinite cardinality.\\
The first proposed objective function constructs a densely-connected, weighted graph in which the nodes are the latent representations of the observations, $\Psi$, and the edge weights, $\gamma_{ij}$, are determined by the similarity of the corresponding target values as follows.
\begin{dmath}
\gamma_{ij} = e^{-\frac{1}{\sigma^2}\norm{y_i-y_j}^2}
\label{eq:reg_similarity_kernel}
\end{dmath}
Here, $\sigma$ is a hyperparameter that controls how quickly the edge weight decreases as the distance between targets becomes larger. This can either be automatically tuned to the distribution of $y$ as,
$$\sigma = \frac{1}{2}\text{IQR}\left(\frac{y - \mu_y}{\sqrt{\sigma_y}}\right)$$
or it can be found via a hyperparameter search in training. An objective function that then enforces similar latent representations for observations with similar target values is given by the following.
\begin{dmath}
\mathcal{L}_{\text{Lap}} = \frac{1}{2N\left(N-1\right)}\sum_{i=1}^N\sum_{j\neq i} \gamma_{ij}\norm{\Psi_i-\Psi_j}_2^2
\end{dmath}
This loss is the Dirichlet energy of $\Psi$ on the above-described latent representation graph \cite{zhou2021dirichletenergyconstrainedlearning}. Therefore, minimizing it pulls together latent encodings whose targets are close, and doesn't enforce proximity when targets are far apart, yielding a smooth, target-aware embedding, rather than discrete class clusters, which is ideal for regression models. Thus, this objective function is a suitable replacement for the Center loss.\\
For an alternative perspective on this loss function, define $\Gamma = \left[\gamma_{ij}\right]$ and $D = \text{diag}\left(e_i\right)$ where $e_i=\sum_{j \neq i}\gamma_{ij}$. Then, the unnormalized graph Laplacian is $L=D-\Gamma$. Stack the latent encodings as a matrix $\mathbf{\Psi} \in \mathbb{R}^{N \times E}$ where the $i$-th row is $\Psi_i$. Then, the following equivalence emerges.
\begin{dmath}
\frac{1}{2N}\sum_{i=1}^N\sum_{j \neq i}\gamma_{ij}\norm{\Psi_i-\Psi_j}_2^2 = \frac{1}{N}\text{tr}\left(\mathbf{\Psi}^TL\mathbf{\Psi}\right)
\end{dmath}
Thus, the above-described loss is the graph-smoothness penalty familiar from Laplacian Eigenmaps \cite{choi2025geometricmachinelearningeeg}. Minimizing $\text{tr}\left(\mathbf{\Psi}^TL\mathbf{\Psi}\right)$ makes $\mathbf{\Psi}$ vary slowly across edges that connect similar targets, while remaining stationary across edges that connect dissimilar targets. For this reason, this loss is referred to as the Graph-Laplacian loss function, or $\mathcal{L}_{\text{Lap}}$.\\
The Graph-Laplacian loss preserves a continuum; that is, observations with similar targets are given similar latent representations. It preserves order, or rank, because $\gamma_{ij}$ decays with $\norm{\mathbf{y}_i - \mathbf{y}_j}$. Therefore, larger target gaps allow larger latent distances, thus encoding the ordering in $\mathbf{y}$. When combined with $\mathcal{L}_{\text{recon}}$ and $\mathcal{L}_{\text{pred}}$, the Graph-Laplacian loss is also robust to trivial collapse, in which all latent encodings, $\Psi_i$, are equal. The Graph-Laplacian then shapes the latent geometry, rather than defining it alone.\\
The gradient with respect to one encoding, $\Psi_i$, is as follows.
\begin{dmath}
\frac{\partial\mathcal{L}_{\text{Lap}}}{\partial\Psi_i} = \frac{1}{N\left(N-1\right)}\sum_{j \neq i}\gamma_{ij}\left(\Psi_i - \Psi_j\right)
\end{dmath}
Therefore, each $\Psi_i$ is pushed towards the weighted average of its target neighbors, further indicating a direct local smoothness force.\\
A study analogous to the one described in Section 4 was conducted using the Ames Housing dataset \cite{decock2011ames}. In its raw form, this dataset has 80 features. After feature engineering and selection, the design space had 282 dimensions. In this study, the term ``raw data'' will refer to the engineered inputs from the 282-dimensional design space. The downstream model for all pipelines in this study was Linear Regression, which was used to predict house sale prices. The optimized settings for the DP pipeline were $\left(\epsilon, \delta\right)=\left(7.999, 1.82\times10^{-7}\right)$. The latent dimensionality of the two autoencoder pipelines was $E=14$, the lowest encoding dimensionality at which the SCRAE could maintain baseline raw model performance. The results are shown in Tables \ref{tab:reg_results_r2} and \ref{tab:reg_results_rmsle}.
\begin{center}
\begin{tabular}{lcc}
\toprule
\textbf{Model Pipeline} & \textbf{Test $R^2$} & \textbf{vs. Raw} \\
\midrule
Raw Data & $0.9315$ & $+0.0000$ \\
DP & $0.8464$ & $-0.0850$ \\
Dense AE & $0.8993$ & $-0.0321$ \\
SCRAE & $0.9421$ & $+0.0106$ \\
\bottomrule
\end{tabular}
\captionof{table}{Ames Housing Test $R^2$ Comparison}
\label{tab:reg_results_r2}
\end{center}
\begin{center}
\begin{tabular}{lcc}
\toprule
\textbf{Model Pipeline} & \textbf{Test RMSLE} & \textbf{vs. Raw} \\
\midrule
Raw Data & $0.1279$ & $+0.0000$ \\
DP & $0.1642$ & $+0.0362$ \\
Dense AE & $0.1496$ & $+0.0217$\\
SCRAE & $0.1199$ & $-0.0081$ \\
\bottomrule
\end{tabular}
\captionof{table}{Ames Housing Test RMSLE Comparison}
\label{tab:reg_results_rmsle}
\end{center}
These results show that the adaptation of the multi-level, multi-objective autoencoder to regression has been successful, as the out-of-sample performance of the downstream ML model consuming the SCRAE encoding once again shows no performance degradation, unlike the DP and standard autoencoder pipelines, and is on par with the performance of the model trained on the raw data.\\
This demonstrates that the extent to which a multi-level, multi-objective autoencoder can compress information and data while maintaining the downstream model's predictive utility depends on the specific data and modeling task. Therefore, the dimensionality of the latent encoding is a hyperparameter that should be optimized during training and validation. In this case, the raw inputs were represented as 32-bit floating-point numbers. Thus, the encoded model inputs had a compression ratio of $14/282 \approx 0.0496$, resulting in reduced storage and transmission requirements as follows.
$$\left(1-\frac{14}{282}\right)\times 100\% \approx 95.04\%$$
This regression-adapted multi-level, multi-objective autoencoder exhibits the same ability as the classification variant to compress information, thereby protecting sensitive inputs from bad actors, while improving efficiency and preserving the predictive utility of supervised ML models.

\section{Practical Considerations for Training with the Graph-Laplacian Loss}

Having been shown that predictive utility may be preserved under substantial informational compression, the practical conditions under which that result may be trained reliably must now be considered. In particular, the introduction of the Graph-Laplacian loss, while beneficial for enforcing a smooth and target-aware latent geometry, is accompanied by computational and optimization considerations that become material as batch size, latent dimensionality, and target noise increase. For this reason, the behavior of the training objective must be examined not only from a theoretical standpoint, but also from the standpoint of scalability, memory efficiency, and stability under realistic learning conditions.

\subsection{Managing Computational Complexity}

For a batch of size $B$, the Graph-Laplacian loss function computes $B\left(B-1\right) \approx B^2$ edge weights, leading to a time and memory complexity of approximately $\mathcal{O}\left(B^2\right)$. Assuming single precision (float32) tensors, for batches above $B \approx 1,024$, tensors of size $B^2$ start to dominate GPU RAM\@. In cases where larger batches, $B \geq 1,024$, are needed, switching to a sparsified variant is recommended.\\
A $k$-NN sparsification of the Graph-Laplacian loss function is proposed in which only the $k$ strongest target neighbors are kept for each node. For scalar targets, the $k$ nearest neighbors are simply the $k$ nodes with the smallest absolute difference. These can be found efficiently with a single sort on the target, $y$. This avoids the computation of a $B \times B$ distance matrix, and instead locates all nearest neighbor candidates in $\mathcal{O}\left(B\text{ ln}B\right)$. Then, trimming to the top $k$ per row results in a total complexity of $\mathcal{O}\left(B\text{ ln}B + kB\right)$.\\
For vector targets, the sort-window logic would need to be replaced by a $k$-NN search in the target space, which means tolerating a one-time $\mathcal{O}\left(B^2\right)$ computation, if done naively. This may be tolerable on batches of moderate size. With optimized software, such as FAISS or ScaNN, it should be possible to construct a $k$-NN graph in the target space in sub-quadratic time.\\
To further control the balance between complexity and smoothness, different schemes for symmetrization within the $k$-NN graph may be used. The most straightforward case is to use no symmetrization. In this case, the edges of the $k$-NN graph are directed. That is, the set of neighbor edges, $\kappa$, is defined as follows.
\begin{equation}
\kappa = \left\{ \left(i \rightarrow j\right): j \in \mathcal{N}_k\left(i\right) \right\}
\label{eq:knn_none}
\end{equation}
Here, each ordered pair contributes separately. The set $\kappa$ may contain the edge $i \rightarrow j$, but not $j \rightarrow i$. This method preserves asymmetries naturally present in the $k$-NN graph. For a batch of size $B$ and a latent encoding dimensionality of $E$, the edge computation with no symmetrization has complexity $\mathcal{O}\left(kBE\right)$. In this scenario, the Graph-Laplacian loss induces directed smoothness, since it is implemented as a sum over directed edges.\\
The $k$-NN graph may be symmetrized to either increase smoothness at the cost of additional complexity, or to reduce complexity at the risk of oversparsity. In the first case, the set of neighbor edges is constructed as an undirected union that includes all edges that appear in either direction.
\begin{equation}
\kappa = \left\{ \left\{i, j\right\}: j \in \mathcal{N}_k\left(i\right) \vee i \in \mathcal{N}_k\left(j\right)\right\}
\label{eq:knn_union}
\end{equation}
This can increase the number of edges up to $2kB$, leading to much better connectivity in the $k$-NN graph. This has the effect of stabilizing training and removing potential ``one-way'' artifacts, thereby improving the richness of the latent representation. For this reason, this method is proposed as the default, despite the increased complexity, which is $\mathcal{O}\left(2kBE\right)$.\\
The second symmetrization case involves keeping only the edges that appear in both directions, leading to the following set of neighbor edges.
\begin{equation}
\kappa = \left\{ \left\{i, j\right\}: j \in \mathcal{N}_k\left(i\right) \wedge i \in \mathcal{N}_k\left(j\right)\right\}
\label{eq:knn_intersection}
\end{equation}
This leads to fewer, higher-confidence edges, as the neighborhoods are mutually ``agreed upon'', yielding a very clean local structure in the latent representation that is robust to outliers and asymmetric noise in the target space. The reduced number of edges can reduce complexity, which is bounded above by $\mathcal{O}\left(kBE\right)$. However, this can lead to a $k$-NN graph that is too sparse, or even empty when $k$ is small, $B$ is small, or when noise in the target space is large. In any case, this method results in the following sparsified Graph-Laplacian loss $\left(\widetilde{\mathcal{L}}_{\text{Lap}}\right)$.
\begin{dmath}
\widetilde{\mathcal{L}}_{\text{Lap}} = \frac{1}{2kB}\sum_{i=1}^B\sum_{j\in\mathcal{N}_k\left(i\right)}\gamma_{ij}\norm{\Psi_i-\Psi_j}_2^2
\end{dmath}
Here, $\mathcal{N}_k\left(i\right)$ is the set of the $k$ nearest target neighbors of $\Psi_i$. The gradients of this sparsified function still flow through $\Psi$, giving the following gradient with the same local smoothing effect.
\begin{dmath}
\frac{\partial\widetilde{\mathcal{L}}_{\text{Lap}}}{\partial\Psi_i} = \frac{1}{kB}\sum_{j\in\mathcal{N}_k\left(i\right)}\gamma_{ij}\left(\Psi_i-\Psi_j\right)
\end{dmath}
During training, the dense Graph-Laplacian should be used whenever possible for the simplest implementation and the smoothest behavior in the latent representation. For large batches ($B \geq 1,024$), higher-dimensional latents, or situations where prohibitive memory pressure is observed, the $k$-NN sparsified Graph-Laplacian should be used instead.\\
To test the predictive performance of these $k$-NN sparsified variants of the Graph-Laplacian loss, a study was conducted using the YearPredictionMSD dataset \cite{bertinmahieux2011millionsong}, which is a regression dataset containing approximately 515,000 instances, each consisting of 90 features, which makes it large enough that the dense neighborhood construction of the Graph-Laplacian loss becomes visibly expensive. Three variants of the SCRAE architecture were trained, one for each of the proposed symmetrization schemes, with a $k$-NN sparsified Graph-Laplacian loss, using $B=1,024$ and $k=8$.\\
This study follows the same structure as the studies described in Sections 4 and 5. Initially, the downstream model was a Linear Regression, due to compatibility constraints with the opacus package. However, as shown in Tables \ref{tab:lr_knn_r2_results}, \ref{tab:lr_knn_mae_results}, and \ref{tab:lr_knn_rmse_results}, the raw baseline performance was not very strong. The raw baseline model was trained on the raw 90-dimensional inputs. The DP pipeline settings were $\left(\epsilon, \delta\right)=\left(8.006, 4.65\times10^{-12}\right)$. Both autoencoder pipelines produced 16-dimensional latent encodings.
\begin{center}
\begin{tabular}{lcc}
\toprule
\textbf{Model Pipeline} & \textbf{Test $R^2$} & \textbf{vs. Raw} \\
\midrule
Raw Data & $0.2320$ & $+0.0000$ \\
DP & $0.1870$ & $-0.0450$ \\
Dense AE & $0.0955$ & $-0.1365$ \\
$k$-NN (none) & $0.3285$ & $+0.0965$ \\
$k$-NN (union) & $0.3279$ & $+0.0959$ \\
$k$-NN (intersection) & $0.3277$ & $+0.0957$ \\
\bottomrule
\end{tabular}
\captionof{table}{LR + $k$-NN Sparsified Test $R^2$ Comparison}
\label{tab:lr_knn_r2_results}
\end{center}
\begin{center}
\begin{tabular}{lcc}
\toprule
\textbf{Model Pipeline} & \textbf{Test MAE} & \textbf{vs. Raw} \\
\midrule
Raw Data & $6.5875$ & $+0.0000$ \\
DP & $6.8005$ & $+0.2130$ \\
Dense AE & $7.6206$ & $+1.0331$ \\
$k$-NN (none) & $5.8302$ & $-0.7573$ \\
$k$-NN (union) & $5.8221$ & $-0.7654$ \\
$k$-NN (intersection) & $5.8976$ & $-0.6889$ \\
\bottomrule
\end{tabular}
\captionof{table}{LR + $k$-NN Sparsified Test MAE Comparison}
\label{tab:lr_knn_mae_results}
\end{center}
\begin{center}
\begin{tabular}{lcc}
\toprule
\textbf{Model Pipeline} & \textbf{Test RMSE} & \textbf{vs. Raw} \\
\midrule
Raw Data & $9.5102$ & $+0.0000$ \\
DP & $9.7846$ & $+0.2744$ \\
Dense AE & $9.8688$ & $+1.0070$ \\
$k$-NN (none) & $8.8924$ & $-0.6178$ \\
$k$-NN (union) & $8.8964$ & $-0.6138$ \\
$k$-NN (intersection) & $8.8977$ & $-0.6125$ \\
\bottomrule
\end{tabular}
\captionof{table}{LR + $k$-NN Sparsified Test RMSE Comparison}
\label{tab:lr_knn_rmse_results}
\end{center}
In the above study, the SCRAE was observed to outperform every other pipeline across all metrics, while both DP and the densely connected autoencoder were observed to degrade model performance.\\
Given the poor performance of the raw baseline, the downstream model was upgraded to an XGBoost \cite{chen2016xgboost} regression model to represent a more realistic model deployment scenario. To implement DP with an XGBoost model, the DP-XGBoost package was used. To keep all four pipelines as uniform as possible, DP-XGBoost was used to implement the downstream regression model for all four pipelines. However, privacy mode was turned off for all non-DP pipelines. This ensured that any difference in observed performance was due to whether DP was used, and not due to any differences in implementation across different packages. The settings for the DP pipeline were $\epsilon=0.479199$ per tree; the $\delta$ parameter is not used by DP-XGBoost. Both autoencoder pipelines still emitted 16-dimensional encodings to the downstream model. The results are shown in Tables \ref{tab:xgb_knn_r2_results}, \ref{tab:xgb_knn_mae_results}, and \ref{tab:xgb_knn_rmse_results}.
\begin{center}
\begin{tabular}{lcc}
\toprule
\textbf{Model Pipeline} & \textbf{Test $R^2$} & \textbf{vs. Raw} \\
\midrule
Raw Data & $0.3331$ & $+0.0000$ \\
DP & $-0.0041$ & $-0.3372$ \\
Dense AE & $0.1714$ & $-0.1617$ \\
$k$-NN (none) & $0.3387$ & $+0.0056$ \\
$k$-NN (union) & $0.3393$ & $+0.0062$ \\
$k$-NN (intersection) & $0.3337$ & $+0.0006$ \\
\bottomrule
\end{tabular}
\captionof{table}{XGB + $k$-NN Sparsified Test $R^2$ Comparison}
\label{tab:xgb_knn_r2_results}
\end{center}
\begin{center}
\begin{tabular}{lcc}
\toprule
\textbf{Model Pipeline} & \textbf{Test MAE} & \textbf{vs. Raw} \\
\midrule
Raw Data & $6.1619$ & $+0.0000$ \\
DP & $8.1131$ & $+1.9512$ \\
Dense AE & $7.1880$ & $+1.0261$ \\
$k$-NN (none) & $5.9355$ & $-0.2264$ \\
$k$-NN (union) & $5.9332$ & $-0.2287$ \\
$k$-NN (intersection) & $6.0358$ & $-0.1261$ \\
\bottomrule
\end{tabular}
\captionof{table}{XGB + $k$-NN Sparsified Test MAE Comparison}
\label{tab:xgb_knn_mae_results}
\end{center}
\begin{center}
\begin{tabular}{lcc}
\toprule
\textbf{Model Pipeline} & \textbf{Test RMSE} & \textbf{vs. Raw} \\
\midrule
Raw Data & $8.8618$ & $+0.0000$ \\
DP & $10.8741$ & $+2.0123$ \\
Dense AE & $9.8781$ & $+1.0163$ \\
$k$-NN (none) & $8.8245$ & $-0.0373$ \\
$k$-NN (union) & $8.8207$ & $-0.0411$ \\
$k$-NN (intersection) & $8.8579$ & $-0.0039$ \\
\bottomrule
\end{tabular}
\captionof{table}{XGB + $k$-NN Sparsified Test RMSE Comparison}
\label{tab:xgb_knn_rmse_results}
\end{center}
In this more realistic study the raw baseline is now on par with every variant of the $k$-NN sparsified SCRAE across all performance metrics. Notably, the predictive performance of the DP pipeline collapsed completely after switching to the XGBoost model, even with a XGBoost-specialized implementation.\\
To quantify the computational benefit of each variant of the sparsified Graph-Laplacian loss, a separate benchmark study was run on the same data, using the same SCRAE architecture and the same number of nearest target neighbors. Four modes were compared: the unsparsified dense Graph-Laplacian and the three sparsified variants with no symmetrization, union symmetrization, and intersection symmetrization. The benchmark was executed on a single NVIDIA GeForce GTX 1650 GPU with 4 GB of VRAM, with batch sizes starting at 128 and doubling until reaching 4,096. For each batch size, median timings were computed over 8 measured iterations after 3 warm-up iterations. Notably, the dense objective failed entirely due to an Out-of-Memory (OOM) error at $B=4,096$ on the benchmark hardware, while all three sparsified variants remained tractable.\\
Immediately, it was observed that the $k$-NN sparsification had the intended effect of reducing the number of edges in the latent target-neighbor graph, thereby reducing computational complexity compared with the dense Graph-Laplacian. This is shown in Figure \ref{fig:edge_count}.
\begin{center}
\includegraphics[scale=0.45]{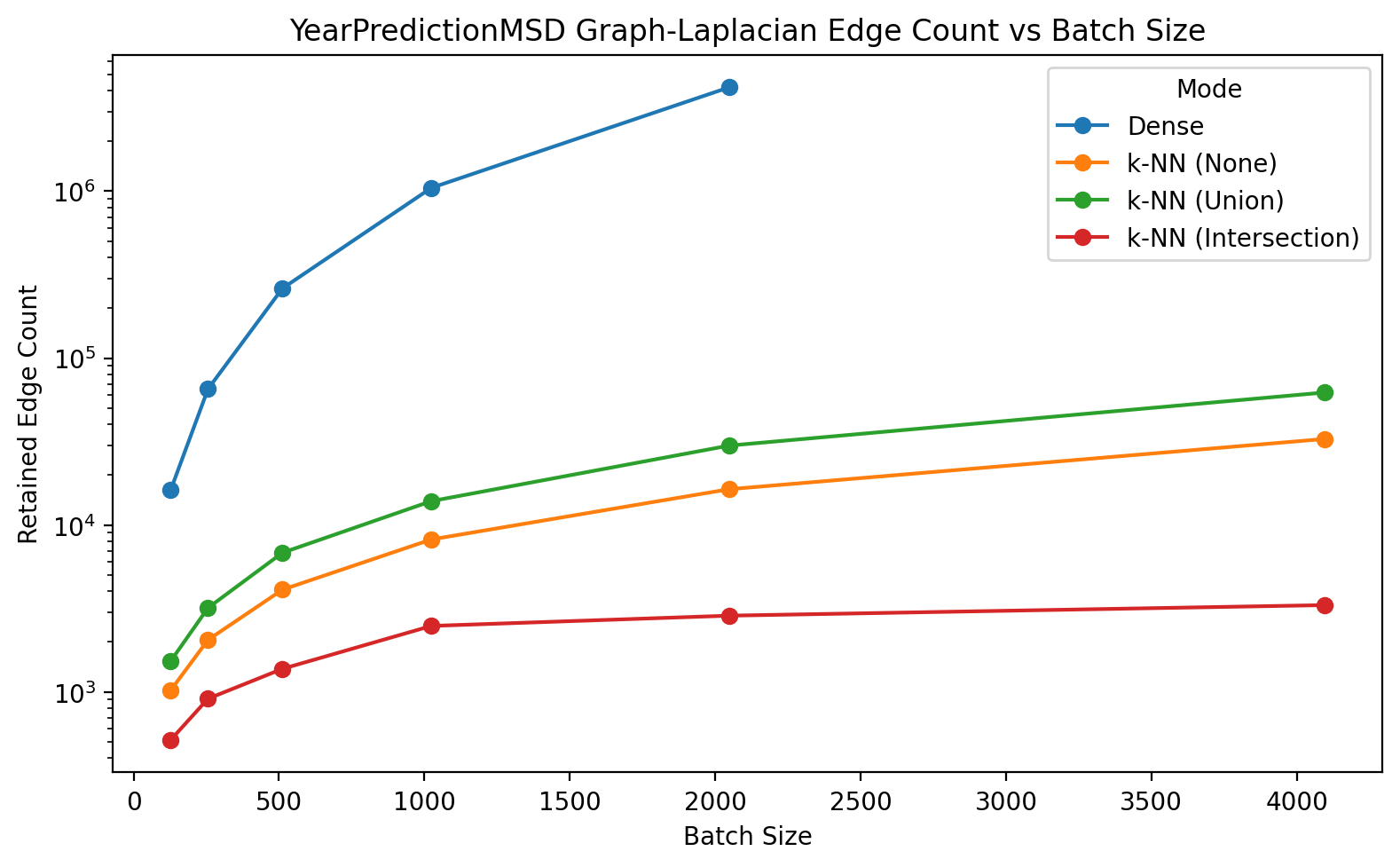}
\captionof{figure}{Latent Graph Edge count vs. Batch Size}
\label{fig:edge_count}
\end{center}
The end-to-end training step benchmark included the encoder forward pass, all loss function calculations, backward pass, and Adam update. At batch size $B=1,024$, the dense objective required $175.6$ ms per step and $2.53$ GB of peak allocated GPU memory, whereas the sparsified variants required only $43.1$--$62.8$ ms and $40$--$80$ MB\@. At $B=2,048$, the dense objective became prohibitive, requiring $5.55$ s per step and approximately $10.0$ GB of peak allocated memory, whereas the sparsified variants required only $18.4$ to $42.8$ ms and $47$ to $135$ MB\@. This corresponds to training-step speedups ranging from $129.7\times$ to $301.8\times$. Figures \ref{fig:train_step_time} through \ref{fig:train_step_speedup} visualize specific aspects of the observed scaling behavior.
\begin{center}
\includegraphics[scale=0.45]{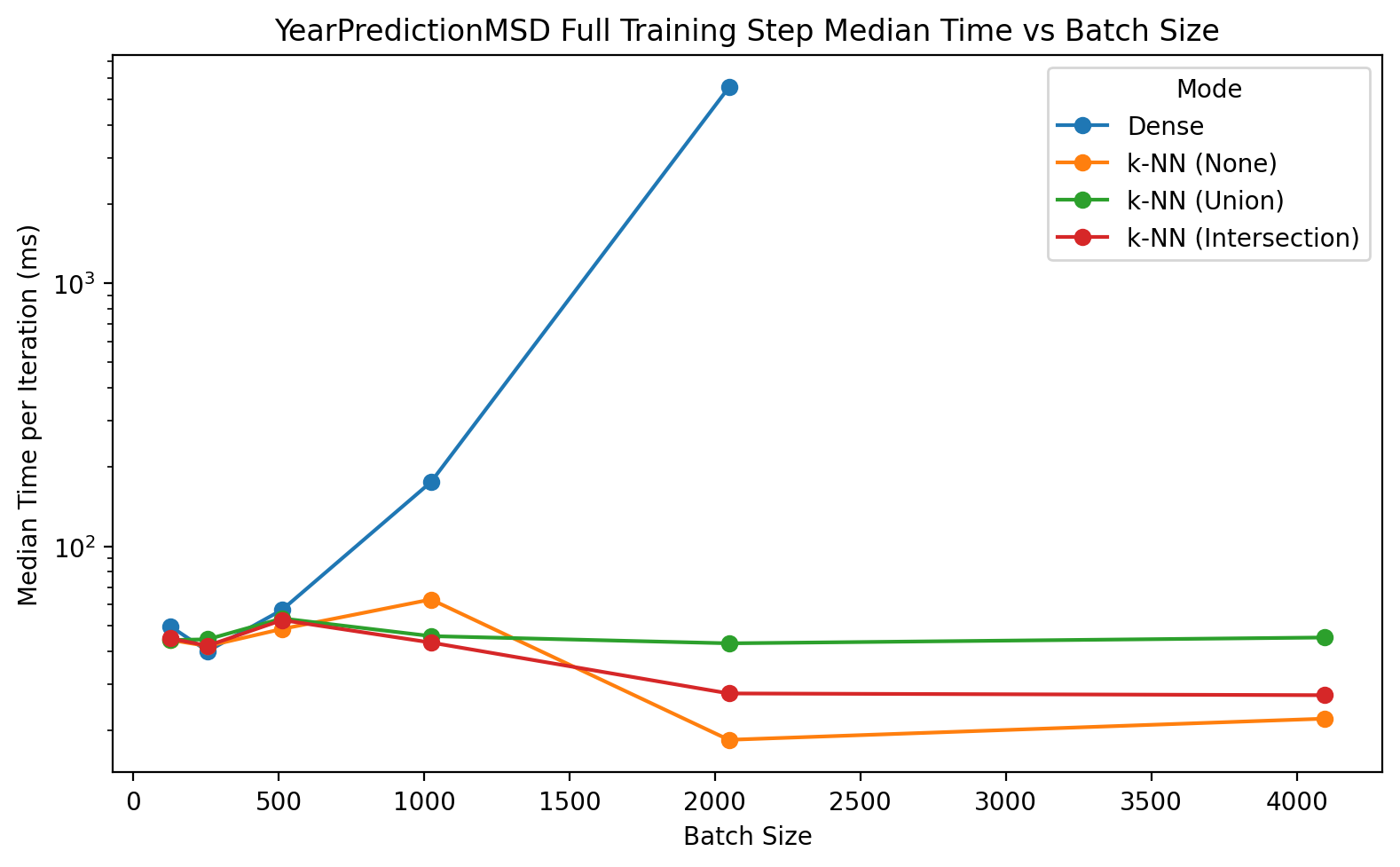}
\captionof{figure}{Train Step Time vs. Batch Size}
\label{fig:train_step_time}
\end{center}
\begin{center}
\includegraphics[scale=0.45]{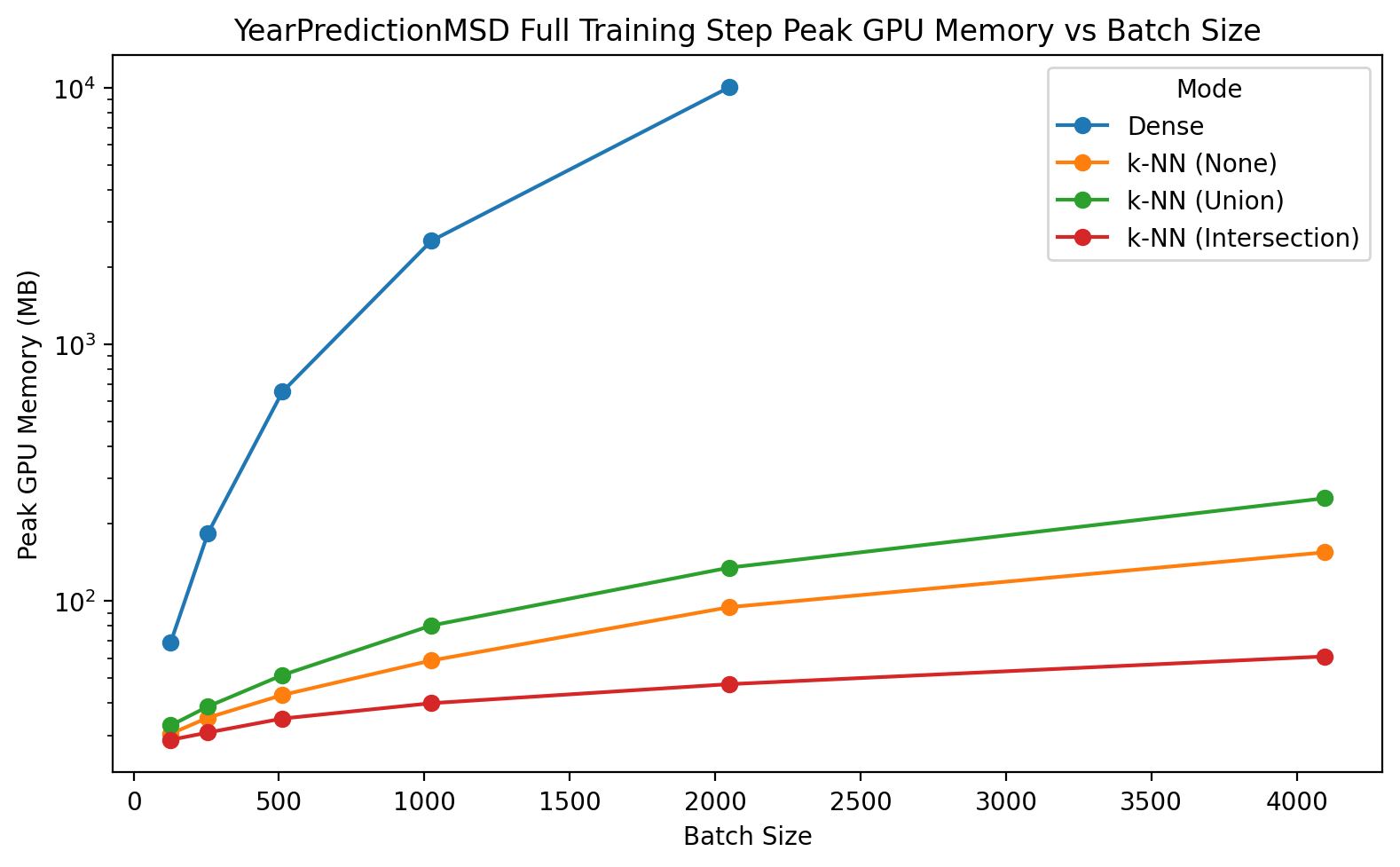}
\captionof{figure}{Train Step Peak Memory vs. Batch Size}
\label{fig:train_step_peak_mem}
\end{center}
\begin{center}
\includegraphics[scale=0.45]{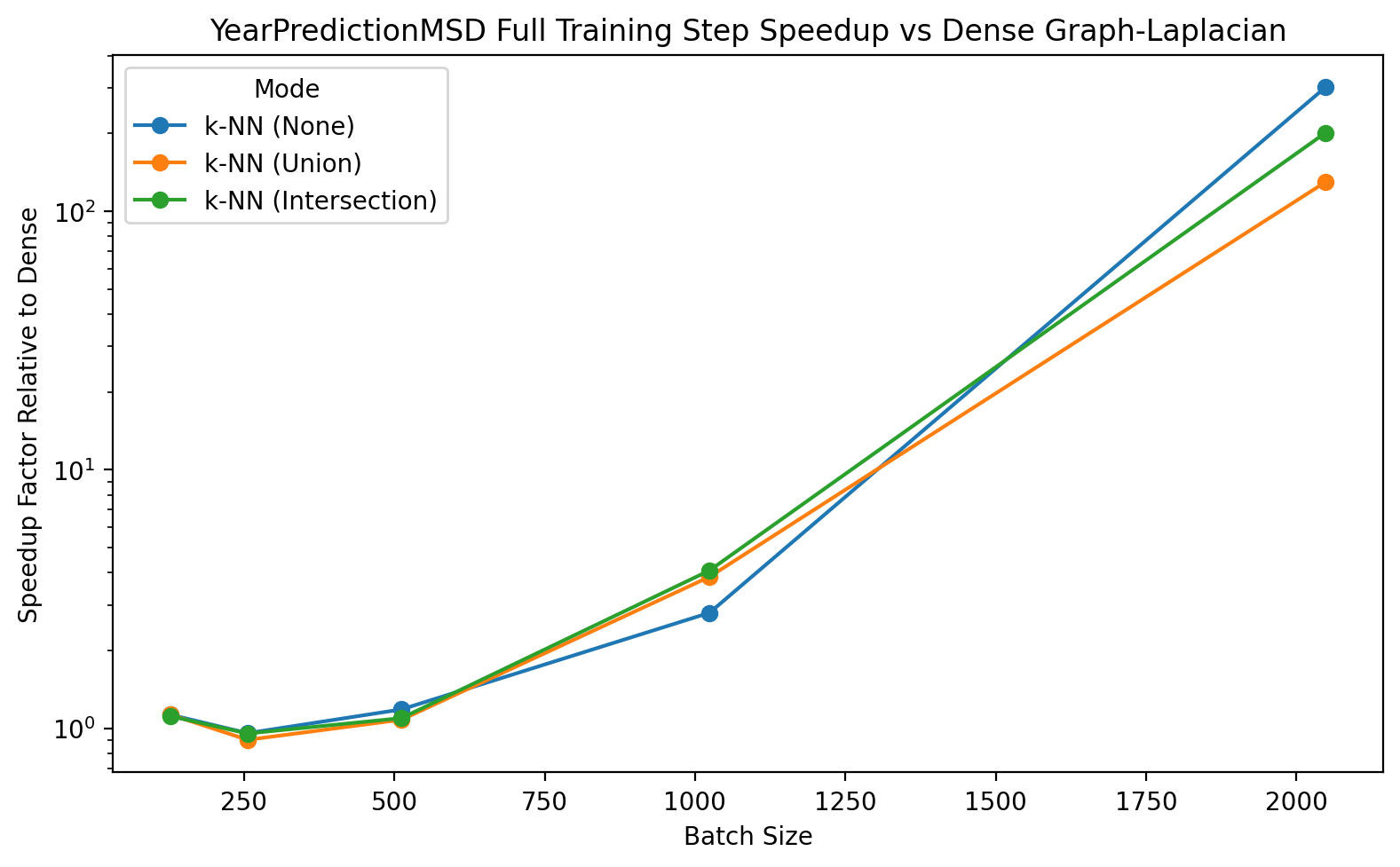}
\captionof{figure}{Speedup Factor Relative to Dense vs. Batch Size}
\label{fig:train_step_speedup}
\end{center}

\subsection{Training Stability with Noisy Targets}

For noisy targets, a slightly larger $\sigma$ should be used to increase smoothing. When using $\widetilde{\mathcal{L}}_{\text{Lap}}$, use a smaller value for $k$, as incorporating more nearest target neighbors allows more target noise to affect the smoothing process.\\
In any case, the size of $\sigma$ should be limited. If $\sigma$ is too large, then edge weights are allowed to be large for too many edges. This may lead to representation collapse in which all $\Psi_i$ are either identical or too similar to retain predictive utility. It should be noted that representation collapse is resisted and prevented by the presence of reconstruction and prediction losses in the multi-objective loss function, as it would render both reconstruction and prediction impossible. In cases where targets are particularly noisy, a hyperparameter search over all combinations of $\lambda$ and $\sigma$ may be used to achieve the desired result.\\
In difficult training scenarios with noisy targets, the Spearman rank correlation, $\rho$, is a useful diagnostic to track monotonicity in the similarity of latent representations, $\norm{\Psi_i-\Psi_j}$, and target similarity, $\norm{\mathbf{y}_i - \mathbf{y}_j}$. This should increase reliably as training progresses, as seen in Figure \ref{fig:spearman}.
\begin{center}
\includegraphics[scale=0.45]{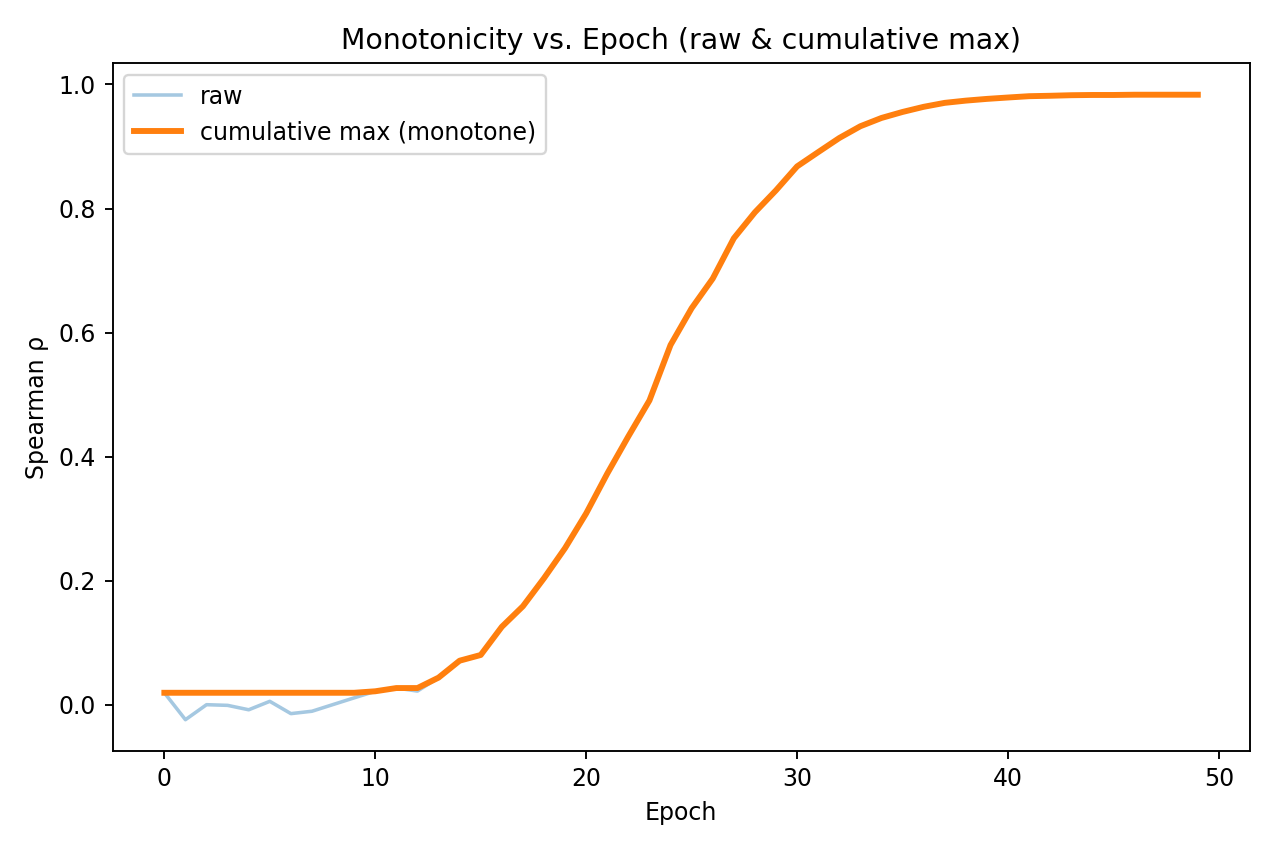}
\captionof{figure}{Spearman rank correlation, $\rho$, vs.\ training epochs}
\label{fig:spearman}
\end{center}
Some initial instability may be observed in early training epochs. However, if the algorithm is working as intended, the Spearman rank correlation is expected to become strictly increasing as training continues.\\
A similar diagnostic tool involves fitting a $k$-NN model in the latent space and measuring the out-of-sample $R_{k\text{-NN}}^2$ statistic. Two behaviors are expected from $R_{k\text{-NN}}^2$. First, this is also expected to become strictly increasing as training progresses, as shown in Figure \ref{fig:knn_r2}.
\begin{center}
\includegraphics[scale=0.45]{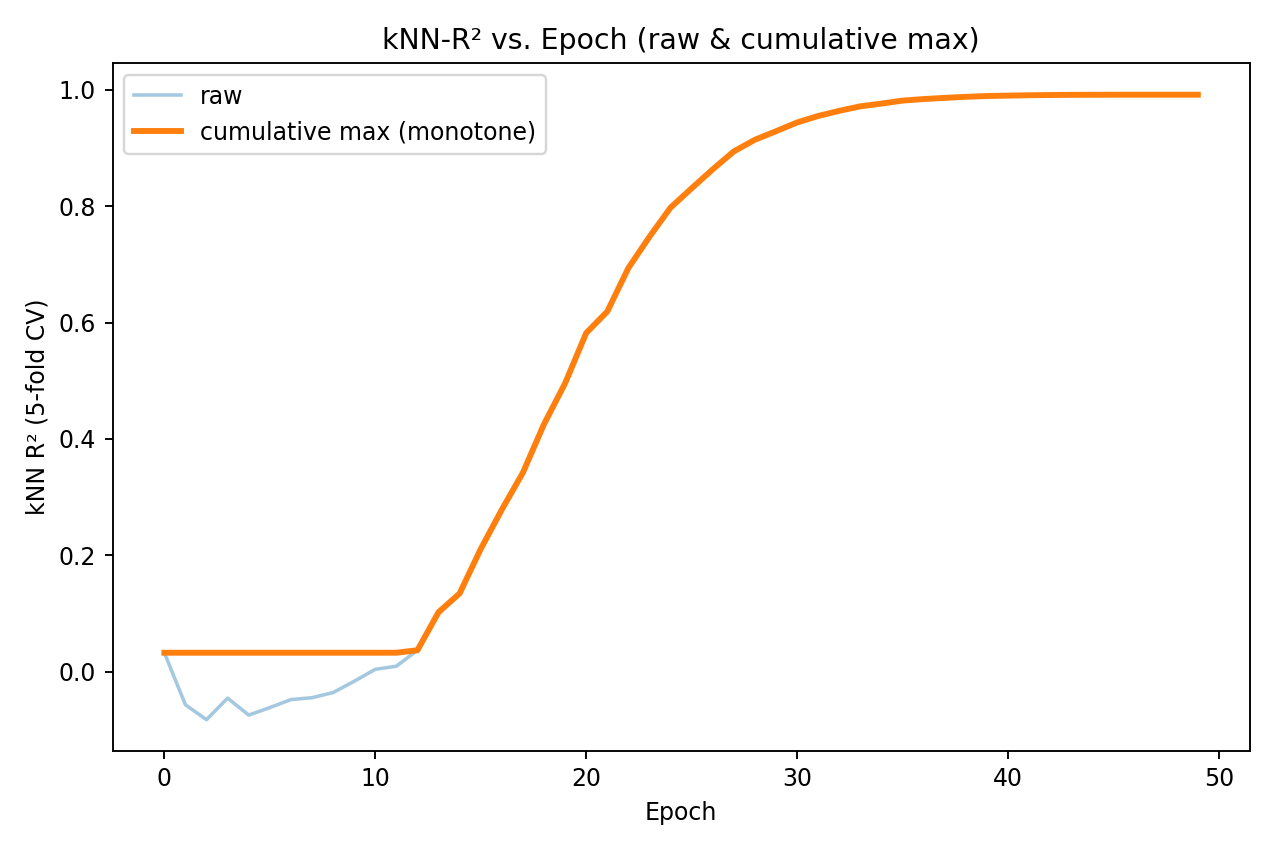}
\captionof{figure}{5-fold CV $R_{k\text{-NN}}^2$ vs.\ training epochs}
\label{fig:knn_r2}
\end{center}
Second, as training progresses, $R_{k\text{-NN}}^2$ is expected to approach the $R^2$ statistic of the downstream ML model, as seen in Figure \ref{fig:knn_r2_vs_hub_r2}, which was generated using different training data. In this case, the $R^2$ statistics were calculated on a fixed validation set.
\begin{center}
\includegraphics[scale=0.45]{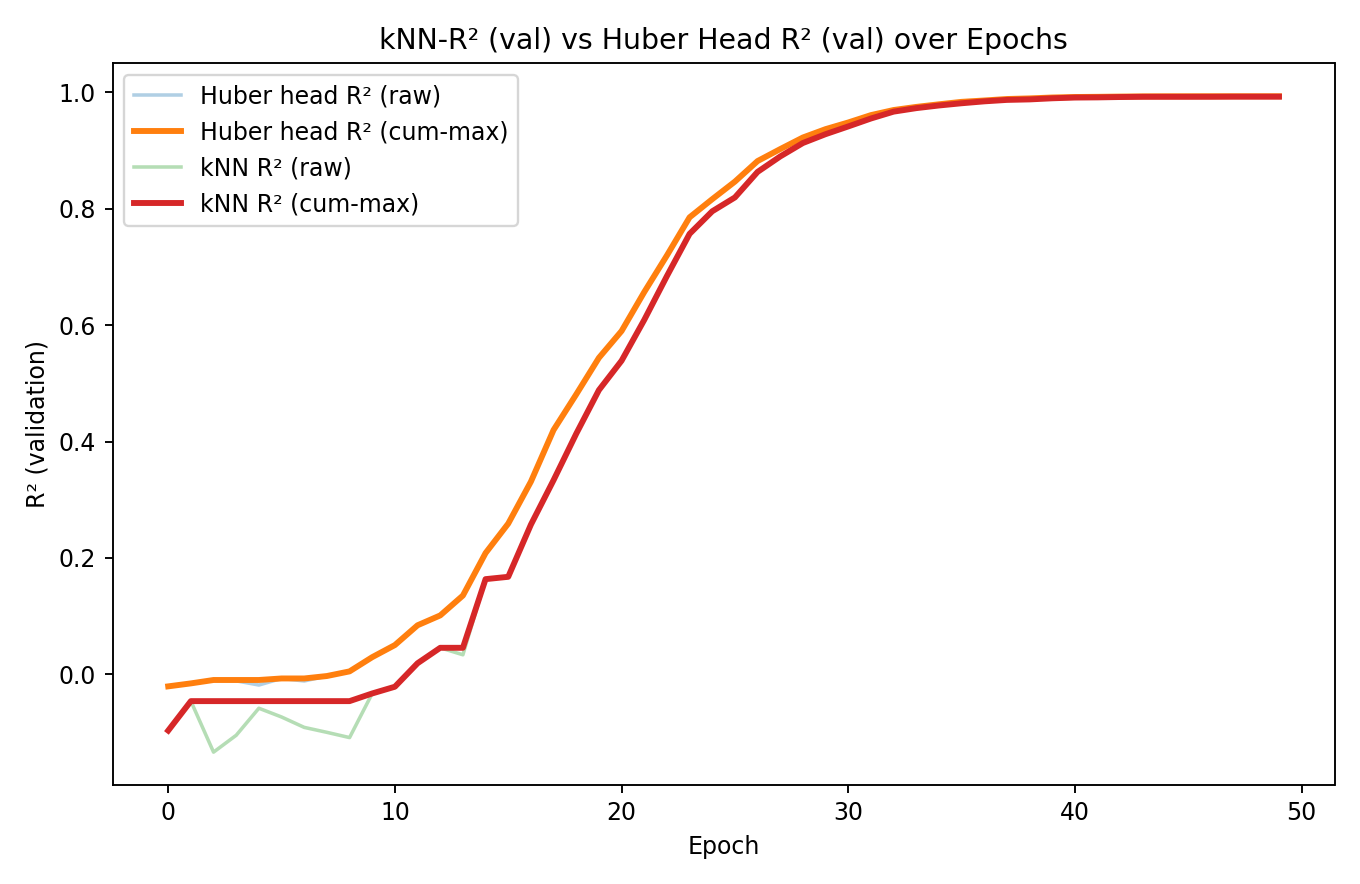}
\captionof{figure}{Convergence of $R_{k\text{-NN}}^2$ and $R_{\text{Huber}}^2$ vs.\ training epochs}
\label{fig:knn_r2_vs_hub_r2}
\end{center}
Another useful training diagnostic that verifies the encoder is behaving as intended is to check regressor calibration in the downstream model trained on the encoder's output. A calibrated regressor should have the property that, for any predicted value, $\hat{y}$, the average true target value among all cases with predictions near $\hat{y}$ is approximately $\hat{y}$. More compactly, the diagnostic checks that the following holds for the downstream regressor.
$$\mathbb{E}\left[Y \text{ }|\text{ } \hat{Y} \approx \hat{y}\right] \approx \hat{y}$$
Given that $|\text{rng }Y| \geq \aleph_0$, conditioning on a single prediction value, $\hat{y}$, is impossible. Therefore, the above expectation is approximated by binning. In the case of the example visualized in Figure \ref{fig:cal}, predictions were binned in deciles to arrive at ten bins of equal frequency. Then, for each bin, $b$, the mean was computed for both predicted and true target values. The two means for each bin were then compared to visualize whether they were roughly equal. Ideally, the points, $\left(\bar{\hat{y}}_b, \bar{y}_b\right)$, should lie on the identity line. The results of this calibration diagnostic are shown in Figure \ref{fig:cal}.
\begin{center}
\includegraphics[scale=0.55]{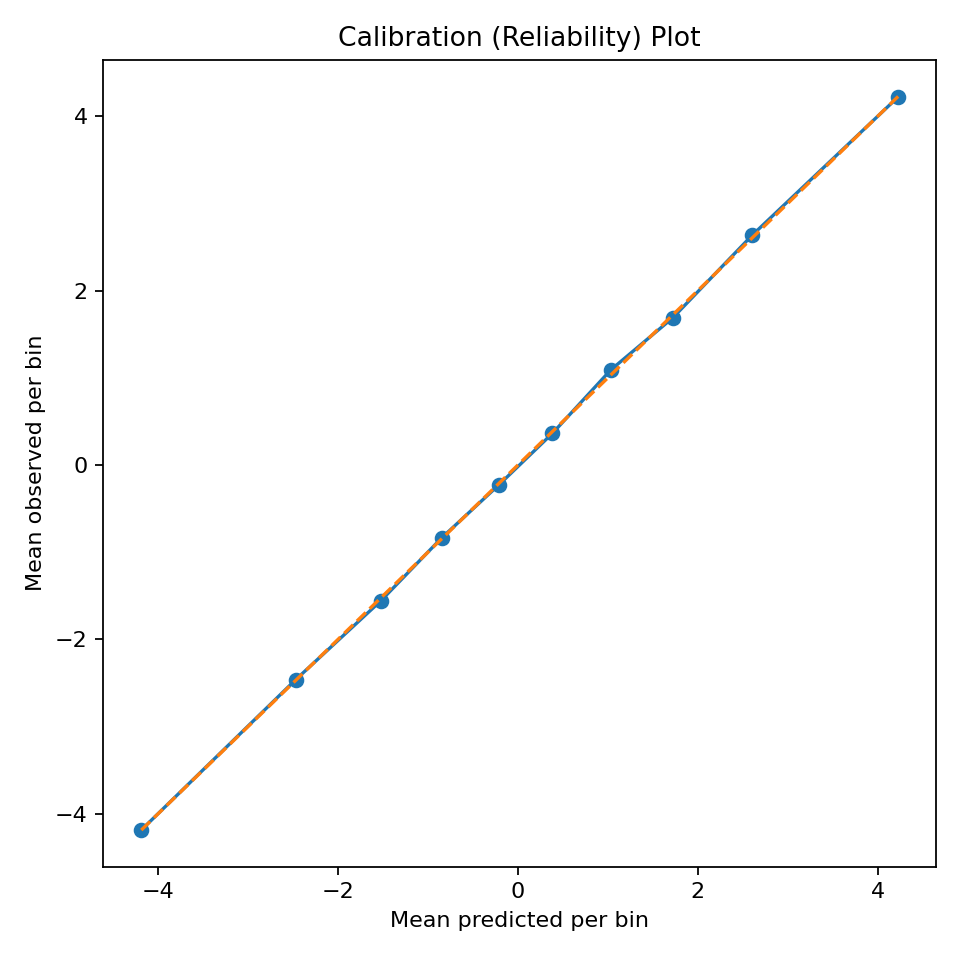}
\captionof{figure}{Observed vs. Predicted Means}
\label{fig:cal}
\end{center}
In some extreme cases, collapse may still occur. In these rare cases, an alternative representation loss function that is resistant to this effect may be used. One such alternative is discussed in the next section.

\section{The Target-Aware Contrastive Loss Function: InfoNCE Adapted to Continuous Targets}

The minute risk of latent collapse with the Graph-Laplacian loss proposed in the previous section arises because this loss function enforces a similar latent representation for observations with similar targets, but imposes no particular behavior for observations with dissimilar targets. Thus, the only force applied within the latent space is an attractive force. It should be noted that this is also true of the Center loss described in section 4.\\
In cases where latent collapse is occurring, it is avoided by applying a representation loss which enforces similar latent representations for observations with similar target values, while also enforcing dissimilar latent representations for observations with dissimilar target values. In the case of classification, such an objective function is found in the Information Noise-Contrastive Estimation (InfoNCE) loss \cite{oord2019representationlearningcontrastivepredictive}, given by the following formula.
\begin{equation}
\mathcal{L}_{\text{NCE}}=-\frac{1}{N}\sum_{i=1}^N \text{ln}\frac{\sum_{j \neq i}\text{ind}\left[y_i=y_j\right]e^{s_{ij}}}{\sum_{k \neq i}e^{s_{ik}}}
\end{equation}
Here, $\text{ind}\left[\cdot\right]$ is the indicator function, defined as,
\begin{dmath}
\text{ind}\left[\text{ }p\text{ }\right] = 
    \begin{cases}
        1 & p \text{ is true}\\
        0 & p \text{ is false}
    \end{cases}
\end{dmath}
and
\begin{dmath}
s_{ij} = \frac{\Psi_i^T\Psi_j}{\tau\norm{\Psi_i}\norm{\Psi_j}}
\end{dmath}
is the cosine similarity of the latent representations $\Psi_i$ and $\Psi_j$ with temperature $\tau > 0$. This acts as a softmax classifier over neighbors, as the normalized exponentiated similarities form a categorical distribution. For a given observation, $\mathbf{x}_i$, it is clear that $\mathcal{L}_{\text{NCE}}$ is the average of log-probabilities over the set of observations with the same class label. It is worth noting that this loss may be substituted for the less computationally expensive Center loss in classification in the event of latent collapse. However, the requirement of defining a discrete set of same-class positives is ill-suited to the continuous targets of regression. Just as with the Graph-Laplacian loss, some continuous notion of ``soft matching'' or similarity is needed to identify observations with similar target values without requiring that their target values be equal. Using hard bins to group observations with similar targets is not ideal, as hard bins would destroy resolution while injecting the need for discretization hyperparameters.\\
The desired notion of similarity is achieved by defining a similarity kernel for any two observations, $\gamma_{ij}$. This can be done with the same formula given in Equation \ref{eq:reg_similarity_kernel}. Replacing the discretized positive-match indicator with the continuous similarity kernel yields the following regression-adapted InfoNCE loss function.
\begin{dmath}
\mathcal{L}_{\text{R-NCE}} = -\frac{1}{N}\sum_{i=1}^N\text{ln }\frac{\sum_{j \neq i} \gamma_{ij}e^{s_{ij}}}{\sum_{k \neq i} e^{s_{ik}}}
\end{dmath}
Due to the increased complexity and computational requirements, these contrastive loss functions are offered as a fallback strategy in the event of latent collapse with the Center loss or Graph-Laplacian loss. All diagnostic and calibration tests described in the previous section can also be used to monitor training with the regression-adapted InfoNCE loss.\\
To demonstrate the effectiveness of these contrastive loss functions, two studies were run, one with a classification model and one with a regression model. These studies are analogous to those found in Sections 4 and 5 of this paper. For the classification study, the Fashion-MNIST dataset \cite{xiao2017fashionmnist} was used, as it is widely regarded as presenting a more challenging classification task than the original MNIST dataset. The downstream classifier was a Logistic Regression. The settings for the DP pipeline were $\left(\epsilon, \delta\right)=\left(7.992, 2.78\times10^{-10}\right)$. The dimensionality of the raw dataset was 784, and the encoding was 2-dimensional for the case of both the SCRAE and the densely connected autoencoder. The results are shown in Tables \ref{tab:contrast_class_acc_results} and \ref{tab:contrast_class_f1_results}.
\begin{center}
\begin{tabular}{lcc}
\toprule
\textbf{Model Pipeline} & \textbf{Test Accuracy} & \textbf{vs. Raw} \\
\midrule
Raw Data & $83.77\%$ & $+0.00\%$ \\
DP & $83.63\%$ & $-0.14\%$ \\
Dense AE & $61.50\%$ & $-22.13\%$ \\
SCRAE & $88.21\%$ & $+4.58\%$ \\
\bottomrule
\end{tabular}
\captionof{table}{Fashion-MNIST Test Accuracy Comparison}
\label{tab:contrast_class_acc_results}
\end{center}
\begin{center}
\begin{tabular}{lcc}
\toprule
\textbf{Model Pipeline} & \textbf{Test Macro-F1} & \textbf{vs. Raw} \\
\midrule
Raw Data & $0.8364$ & $+0.0000$ \\
DP & $0.8354$ & $-0.0010$ \\
Dense AE & $0.6110$ & $-0.2254$ \\
SCRAE & $0.8822$ & $+0.0468$ \\
\bottomrule
\end{tabular}
\captionof{table}{Fashion-MNIST Test Macro-F1 Comparison}
\label{tab:contrast_class_f1_results}
\end{center}
The SCRAE encoder was again observed to outperform the DP, densely connected autoencoder, and raw baseline modeling pipelines, while achieving the same level of compression observed in the study the original MNIST dataset from Section 4.\\
For the regression study, the E2006-tfidf dataset \cite{Kogan2009PredictingRF} was used. This dataset consists of TF-IDF \cite{sparckjones1972tfidf} representations of SEC 10-K financial reports from publicly traded companies. The regression task is then to predict volatility in each company's stock price based on information in these filings. These TF-IDF representations are extremely high-dimensional and sparse, with a raw dimensionality of 150,360. This can cause the distances between paired latent vectors to appear uniform, leading to excessive latent collapse under the Graph-Laplacian loss.\\
The downstream regressor was a Linear Regression model. The raw baseline model was trained on the 150,360-dimensional TF-IDF representations. The settings for the DP pipeline were $\left(\epsilon, \delta\right)=\left(7.998, 3.86\times10^{-9}\right)$. Both autoencoder pipelines emitted latent representations with dimensionality 64. The results are shown in Tables \ref{tab:contrast_reg_r2_results} and \ref{tab:contrast_reg_mae_results}.
\begin{center}
\begin{tabular}{lcc}
\toprule
\textbf{Model Pipeline} & \textbf{Test $R^2$} & \textbf{vs. Raw} \\
\midrule
Raw Data & $0.5218$ & $+0.0000$ \\
DP & $0.3578$ & $-0.1640$ \\
Dense AE & $0.2562$ & $-0.2656$ \\
SCRAE & $0.5226$ & $+0.0008$ \\
\bottomrule
\end{tabular}
\captionof{table}{E2006 Test $R^2$ Comparison}
\label{tab:contrast_reg_r2_results}
\end{center}
\begin{center}
\begin{tabular}{lcc}
\toprule
\textbf{Model Pipeline} & \textbf{Test MAE} & \textbf{vs. Raw} \\
\midrule
Raw Data & $0.2280$ & $+0.0000$ \\
DP & $0.2517$ & $+0.0237$ \\
Dense AE & $0.3154$ & $+0.0874$ \\
SCRAE & $0.2288$ & $+0.0008$ \\
\bottomrule
\end{tabular}
\captionof{table}{E2006 Test MAE Comparison}
\label{tab:contrast_reg_mae_results}
\end{center}
In this study, the observed degradation of the DP pipeline is quite severe. In a real-world ML project, this level of predictive degradation would greatly reduce, if not completely nullify, the model's value. In contrast, the SCRAE preserved baseline predictive utility while achieving a compression ratio of approximately 0.00043, reducing storage and transmission requirements by approximately 99.96\%. Thus, the SCRAE pipeline would remain viable and deliver positive ROI for a real-world project.\\
Together, the classification and regression experiments presented in this section confirm that the contrastive loss functions introduced here are effective fallback mechanisms when the Center loss or Graph-Laplacian loss fail to prevent latent collapse. More broadly, they demonstrate that the multi-objective framework introduced in Section 4 is sufficiently extensible to accommodate a family of representation objectives, each enforcing the same geometric intent---task-aligned separation of the latent space---through different mechanisms. The practitioner therefore has a principled hierarchy of representation losses to draw from: the Center loss and Graph-Laplacian loss are recommended as defaults for classification and regression, respectively, given their lower computational cost and simpler optimization landscape; the contrastive losses, $\mathcal{L}_{\text{NCE}}$ and $\mathcal{L}_{\text{R-NCE}}$, are reserved for cases where collapse persists under those defaults. In either case, the downstream effect is the same: a latent representation that is geometrically organized by the supervised objective, informationally compressed to the point of non-invertibility, and sufficiently rich to support accurate ML inference. With this full objective scaffolding now established for both classification and regression, the following section turns from the encoder itself to the broader system architecture within which it operates.

\section{The VEIL Architecture}

This section outlines the reference architecture and operational deployment model for an ICA ppML system. The architecture is deliberately designed to enforce strict separation between trusted environments---where sensitive data reside---and untrusted or semi-trusted external compute environments, where large-scale model training, optimization, and inference occur.\\
By ensuring that all sensitive signals remain confined to trusted, controlled infrastructure, the proposed deployment pattern provides robust privacy guarantees while allowing organizations to leverage the scalability of modern cloud platforms for downstream model training. The VEIL architecture is built around a three-tier trust model.\\
First is the \textbf{Trusted Tier} (Source Environment), which contains raw data and sensitive attributes. The encoder, here referred to as the Vector-Encoded Information Layer (VEIL), and its parameters are deployed and hosted within this tier at the relevant data source. All feature engineering, extraction, compression, and anonymization are performed at this layer. Within this tier, strict governance, residency, auditing, and compliance controls are enforced.\\
Second is the \textbf{Semi-Trusted Tier} (Training Environment), which receives only non-invertible, informationally-compressed and anonymized latent representations. This tier is where training of the downstream ML model occurs using cloud-scale compute resources. This tier never sees raw data, gradients over sensitive features, or identifiable attributes.\\
The third and final tier is the \textbf{Application Tier} (Inference Environment), which houses the inference runtime. This is the production environment in which model predictions are executed using the same latent-only interface from the semi-trusted tier. This environment can run in the cloud or on-premises and maintains the same trust boundaries established in training.\\
This structure ensures that sensitive information is \textit{never} present outside the Source Environment, ensuring privacy regardless of whether the cloud is compromised, misconfigured, or externally attacked.

\subsection{The Source Environment (Trusted)}

This environment contains the systems and data repositories already governed by the client organization's internal policies, and may be a Virtual Private Cloud (VPC), on-premises datacenter, secure enclave, private subnet, or database-specific execution environment. This tier houses all raw data repositories, such as any databases, data lakes, or data warehouses storing PII, financial data, PCI, PHI, operational data, internal logs, etc. Under this architecture, these raw records are never exported outside of this environment.\\
The multi-level, multi-objective autoencoder serves as the Vector-Encoded Information Layer (VEIL) and is deployed adjacent to the data. At the time of a call to send model inputs to a deployed ML model, raw records are engineered into the raw ML features and then passed through the VEIL\@. The VEIL removes any sensitive or identity-linked features by informationally compressing and anonymizing the raw feature vectors into non-invertible, informationally minimal latent vectors. The latent representation constraints ensure non-invertibility while maximizing predictive utility. All encoding operations are logged for compliance and auditing.\\
This environment includes an access control and governance layer that enforces Role-Based Access Control (RBAC), Attribute-Based Access Control (ABAC), token-scoped credentials, encryption at rest, audit trails, and data residency restrictions. In addition to these traditional access and governance components, this layer would strictly enforce that only ICA-encoded latent vectors may cross the trust boundary into other environments.
\begin{center}
\includegraphics[scale=0.16]{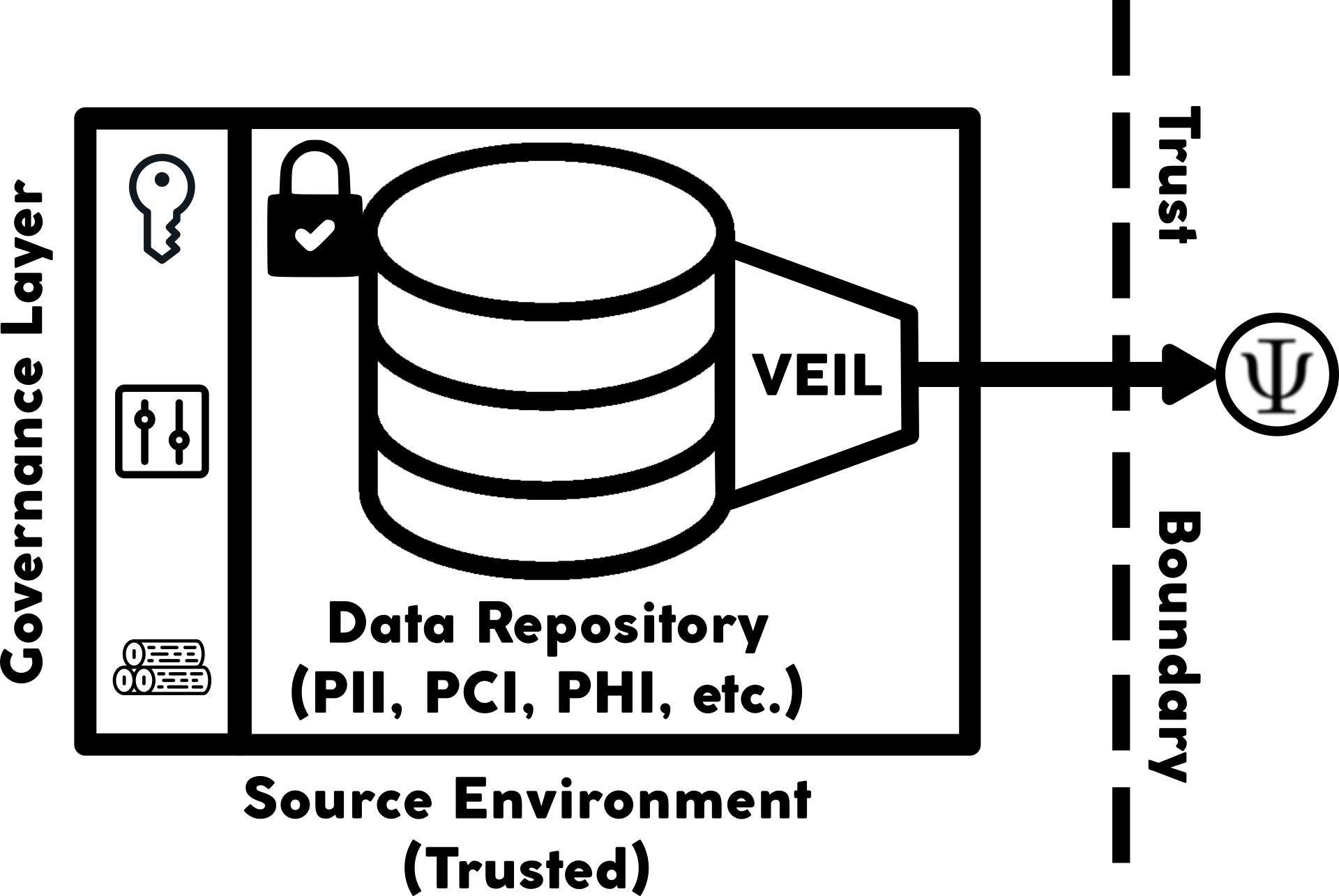}
\captionof{figure}{Illustration of the Source Environment}
\label{fig:VEIL_source_env}
\end{center}
Encodings are transmitted from the trusted environment to the cloud using mutually authenticated channels (TLS 1.3 or better), customer-controlled API keys or IAM roles, optional VPN or private link peering, or transport-level integrity checks. While no sensitive or identifiable information is ever transmitted, these additional measures are required to prevent other forms of attack, such as injection \cite{ayub2024embeddingbasedclassifiersdetectprompt} or poison pill attacks \cite{guo2024poisoningpillcircumventingdetection}.

\subsection{The Cloud Training Environment (Untrusted or Semi-Trusted)}

This environment provides elastic compute and storage capabilities for training. Under this architecture, all training is on non-sensitive latent vectors. All standard ML frameworks (PyTorch, TensorFlow, JAX, scikit-learn, etc.) expect vector or tensor inputs. Therefore, all these frameworks and training systems operate normally when ingesting the latent vectors for training. Since this environment is only ever exposed to non-invertible latent vectors, there is no exposure to sensitive gradients. This, in turn, means there is no risk of gradient-based inversion attacks \cite{zhu2019deepleakagegradients}, eliminating the need to implement DP-SGD, fine-tune $\epsilon$ and $\delta$ values, and track privacy budgets.\\
In this environment, latent vectors can also be stored for periodic model retraining. The recommended mode for storing latent vectors is a vectorDB solution (e.g., Milvus, Pinecone, Qdrant, or the pgvector plugin for PostgreSQL databases). However, because vectors can be batched into matrices, latent vectors can be stored in any database solution that supports array-like data. Some ML training or deployment solutions, such as Amazon SageMaker, require model inputs to be staged in blob storage, such as Amazon S3, prior to model ingestion. Since any storage in this environment contains only irreversibly anonymized latent vectors, these artifacts contain no PII, identifiers, or reconstruction pathways. Therefore, any storage of latent vectors for operational necessity or convenience creates no risk of sensitive data exposure.
\begin{center}
\includegraphics[scale=0.13]{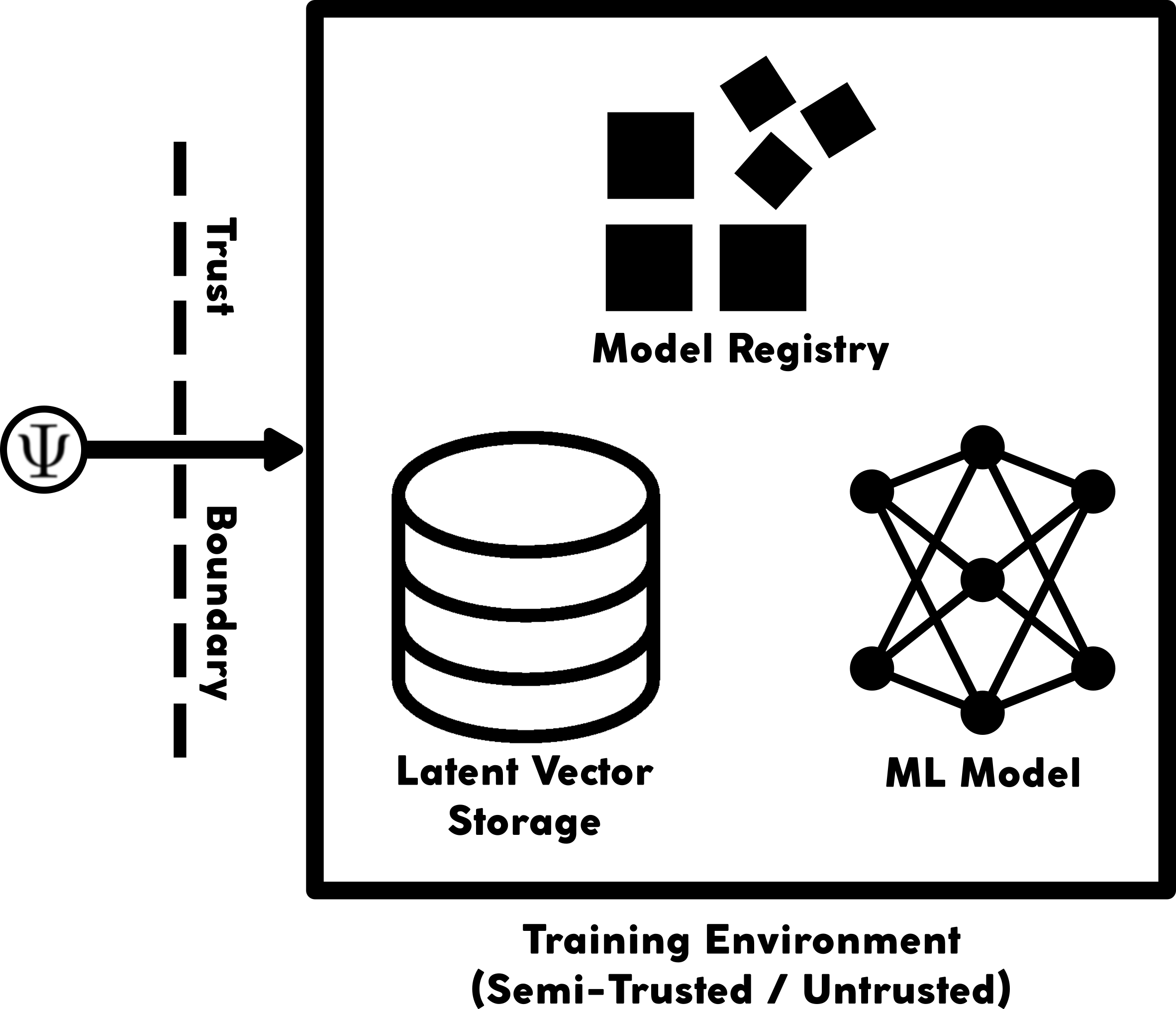}
\captionof{figure}{Illustration of the Training Environment}
\label{fig:VEIL_train_env}
\end{center}
The training environment may also host a model registry that stores trained models and versioned artifacts. Again, these do not create any risk of data exposure, as these models have never been exposed to any sensitive data or gradients over sensitive data. These models operate purely on ICA latent input vectors and therefore cannot contain any sensitive internal representations tied to sensitive data or their distributions.

\subsection{The Application / Inference Environment (Untrusted)}

During inference, new raw data are processed at the source to engineer input features, which are then passed through the VEIL to transform the input vectors into latent vectors via ICA\@. Only latent vectors are sent to the Inference Environment, which may exist on-premises, in the cloud, inside an edge device, or as part of a zero-trust integration with business applications. Predictions are then consumed by the appropriate system or user, depending on the specific use case. This preserves identical trust boundaries between training and inference, meaning the deployed model never accepts sensitive data as input and all downstream systems are shielded from it.
\begin{center}
\includegraphics[scale=0.1]{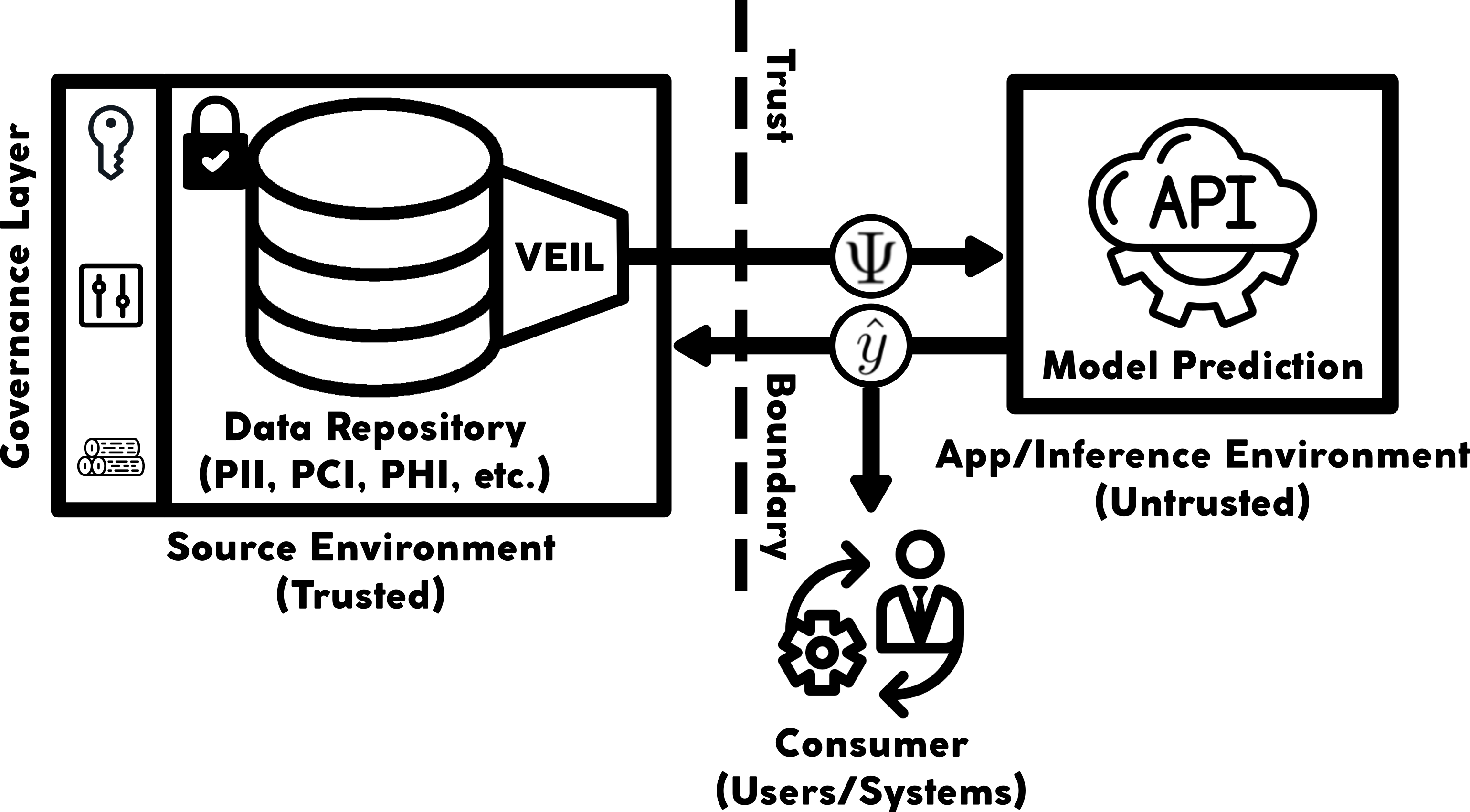}
\captionof{figure}{Illustration of the Application/Inference Environment}
\label{fig:VEIL_inf_env}
\end{center}
Model predictions are also returned to the Source Environment, enabling counterfactual analyses within this trusted environment. This supports regulatory compliance for use cases that require transparency and explainability in decision-making, such as an adverse action report in credit decision-making. This will be discussed in detail in Section 11. For a complete diagram of the VEIL architecture, see Figure \ref{fig:VEIL_architecture_full} in Appendix A.

\subsection{Deployment Patterns}

The VEIL architecture supports multiple deployment patterns. This section will describe some of the primary broad variants that are both possible and likely to be needed.\\
The first deployment pattern is well-suited to organizations in highly regulated industries that are extremely averse to risk and prefer to keep their trusted Source Environment on-premises, while needing access to the elastic storage and compute solutions offered by modern cloud platforms for training and deploying ML/AI models. These organizations are likely to be in the finance and healthcare sectors, thus creating regulatory constraints on data residency or sovereignty, and on how data are used. In this pattern, the Source Environment is on-premises, but the Training and Inference Environments are located on an enterprise cloud platform, such as AWS, Azure, or GCP.\\
The second deployment pattern is the most secure and well-suited to organizations in the defense and intelligence sectors, which handle highly classified data and strict data sovereignty requirements. The VEIL architecture offers these organizations a solution that requires no external connectivity, as all environments will be fully on-premises and air-gapped. However, the environments are still fully partitioned from each other to limit internal exposure, as a precaution against any breach of the Training or Inference Environments.\\
The third deployment pattern discussed in this section involves a hybrid or multicloud deployment. This deployment pattern is applicable to large, multi-regional enterprises that need ML training or inference across multiple cloud regions, cloud providers, or multiple vendor environments. These organizations will need a solution that complies with all regional regulations governing sensitive data, is cost-optimized, is compatible with distributed training workloads, and reduces vendor lock-in while maintaining privacy guarantees.\\
One real-world example of how this deployment pattern is useful can be found in a multinational financial services organization, such as a large credit reporting agency. Such an organization would provide ML/AI solutions for credit decisioning and identity fraud detection in the context of new account openings, among other services, to financial institutions across multiple regions. Providing these types of services relies heavily on a variety of highly sensitive data sources on consumers and their identities, finances, repayment history, income, employment history, real property ownership, residential history, and more. Clearly, these sensitive data will be subject to strict regulations and data residency requirements.\\
To accommodate these regulations, these organizations currently maintain entirely separate data operations in each region. This has many effects on the organization's operations and expenses. In particular, this often leads to useful solutions being deployed in single regions, limiting their benefit to the organization. In cases where a solution is deemed useful enough to be deployed in multiple regions, entire environments and pipelines for training and inference are duplicated across multiple regions in what are commonly called regionally-pinned deployments, as illustrated in Figure \ref{fig:regionally-pinned_deployments}.
\begin{center}
\includegraphics[scale=0.05]{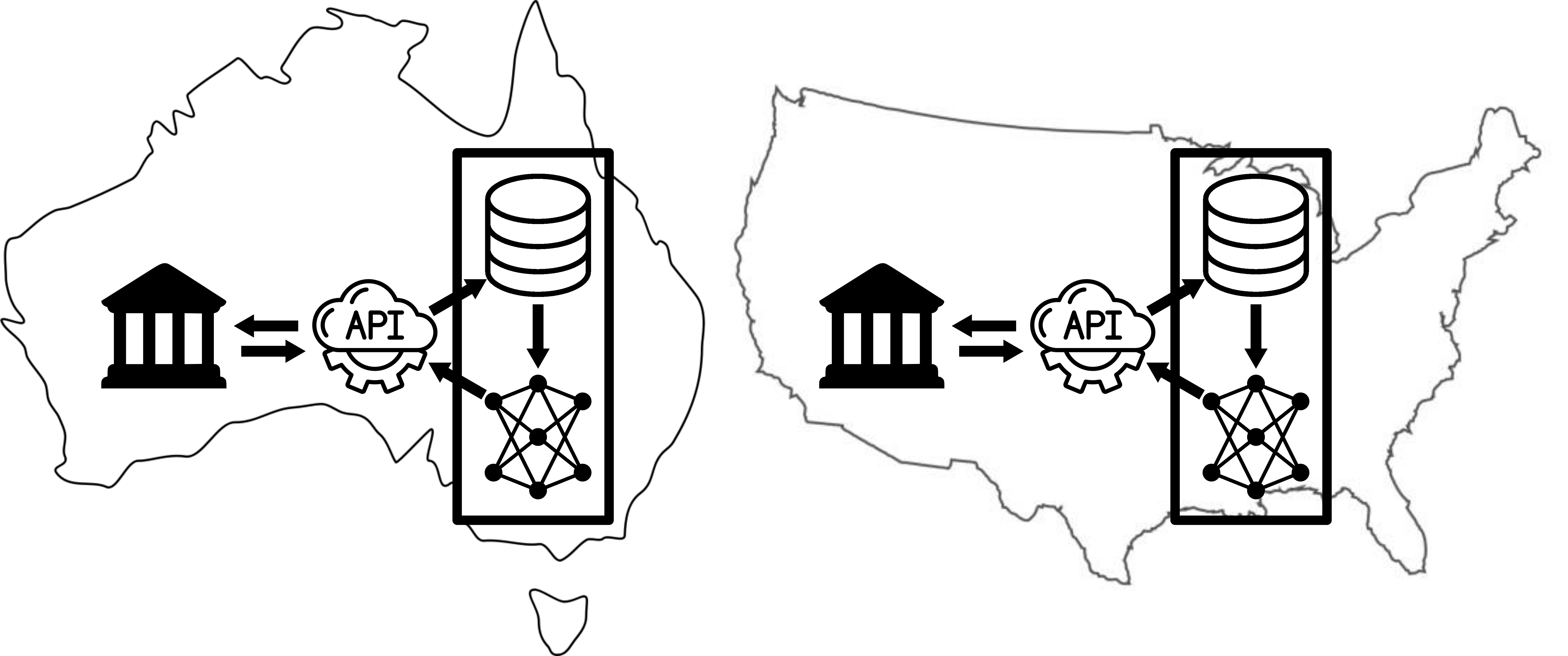}
\captionof{figure}{Illustration of duplicated ML pipelines in regionally-pinned deployments (map is not to scale)}
\label{fig:regionally-pinned_deployments}
\end{center}
It must be noted that in this regionally-pinned deployment pattern, organizations rarely, if ever, apply end-to-end protection to data traveling within these environments, leaving sensitive inputs exposed at key points in the pipeline. The data are not leaving the organization's environment, nor are they leaving any regional jurisdiction, so technically, this does not violate any regulations. However, if these environments are compromised, the input vectors passed from the database to the ML model's prediction API can be easily exfiltrated and later used to commit identity theft and other financial crimes \cite{tramèr2016stealingmachinelearningmodels}.\\
With the VEIL architecture and multi-region deployment pattern, multiple regional Source Environments can be defined to satisfy data residency regulations. Then, Training and Inference Environments can be established in a single region and sent latent vectors from the various Source Environments. Sensitive information never leaves its respective regional Source Environment, let alone its regional jurisdiction. Therefore, compliance with all regional regulations happens automatically and by design. This enables such an organization to move away from regionally siloed MLOps and regionally pinned ML deployments, thereby expanding each ML solution across the entire organization and/or all of its customers without regionally duplicated Training and Inference Environments and pipelines. Figure \ref{fig:global_VEIL_deployments} illustrates the globally transformed inference framework under the VEIL architecture.\\
The fourth deployment pattern is well-suited to any ML/AI scenario in which several organizations would benefit from the cooperative, reciprocal sharing of information, but regulations ban or greatly restrict the sharing of sensitive data. Examples of this include ML/AI modeling within hospitals or medical research, and fraud detection involving large-scale scams across multiple regions. Modeling in the context of medical research benefits greatly from training data that encompass a large and diverse patient population treated under a variety of conditions \cite{https://doi.org/10.1002/sim.8981, schinkel2023embracing}. Some large hospitals in urban centers have access to a large and diverse patient population and, as such, have become centers of excellence in AI for healthcare use cases. Simultaneously, most rural clinics have struggled to adopt ML/AI systems, as they lack access to large, diverse patient populations for model training \cite{10.1093/jamia/ocaf206}. This proposed deployment pattern would alleviate that imbalance.
\begin{center}
\includegraphics[scale=0.05]{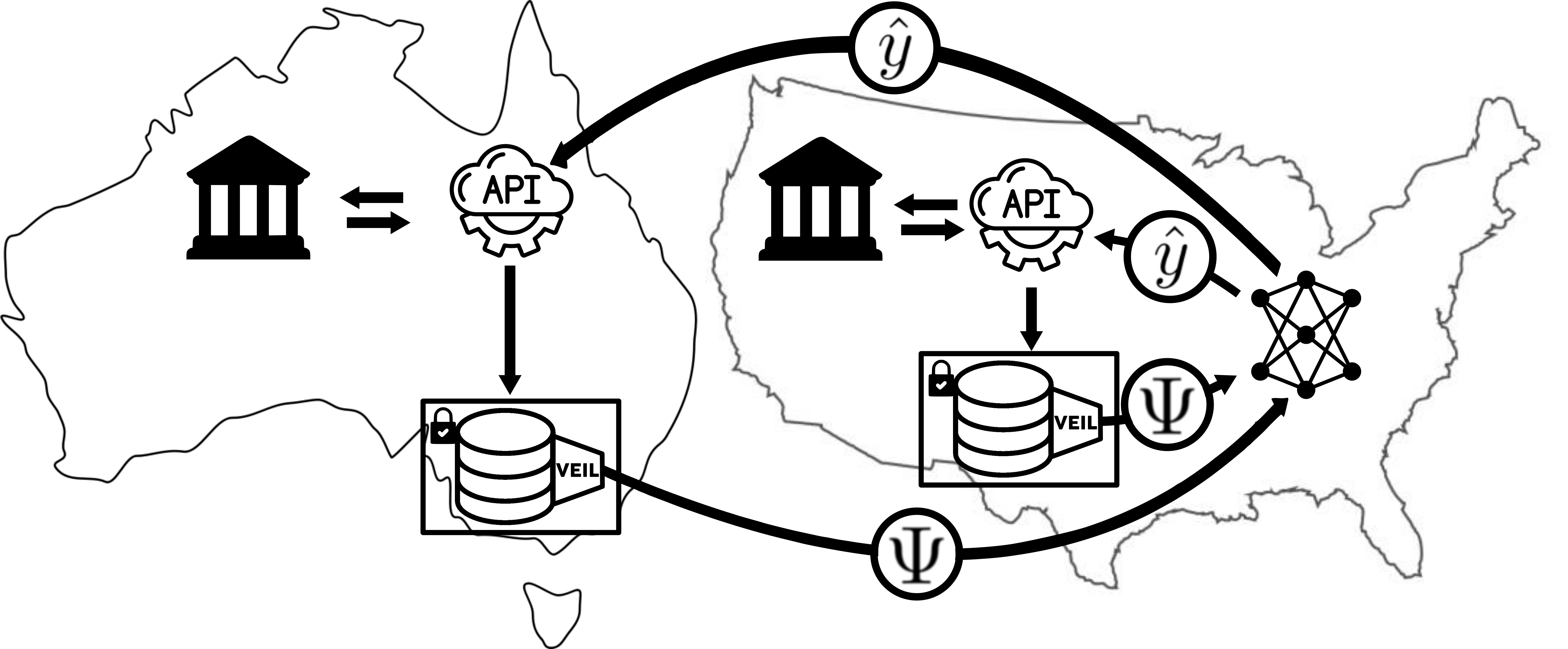}
\captionof{figure}{Illustration of a simplified, multi-regional ML deployment with the VEIL architecture (map is not to scale)}
\label{fig:global_VEIL_deployments}
\end{center}
The latent encodings produced by ICA and deployed under this variant of the VEIL architecture would enable multiple facilities or sites to access a shared ML/AI infrastructure in which models benefit from patterns observed across facilities, sites, or organizations, without any participant directly sharing sensitive data. Under this framework, each participant would maintain and control its own Source Environment, which houses and secures that participant's sensitive data. An external, unified ML/AI infrastructure then hosts models useful to all participants. Some examples include models for fraud detection, patient triage recommendations, readmission risk detection, payor rejection probability estimates, and more.\\
To prevent any information leakage, individual model calls can be mapped to specific consumers or patients within the Source Environment. This ensures that the correct model response can be mapped back to the original requester and the consumer or patient in question without using any identifying information to index the latent vector in the model call, thus preventing any latent vector from being linked to information found in external systems. This would also require standardization of the model call across all participants.\\
Under this pattern, the models can be trained on latents from all participants, greatly enhancing their performance by providing large, diverse training datasets that have been out of reach for many researchers and organizations \cite{chen2025hospital}. Simultaneously, all sensitive data remain in each participant's respective Source Environment, and no sensitive information is ever exposed to the shared ML/AI infrastructure, allowing each participant to remain compliant with their regulatory obligations for privacy and security. For a complete diagram of this shared ML/AI infrastructure in the context of a healthcare use case, see Figure \ref{fig:shared_hc_infrastructure} in Appendix A.

\subsection{Summary and Key Architectural Guarantees}

The VEIL architecture is built around a clear and durable trust boundary: all sensitive operations occur within the trusted Source Environment, and only compressed, non-invertible latent representations leave that environment. This architecture and ICA provide the following.
\begin{itemize}
    \item \textbf{Data minimization by design:} Raw data never cross the trust boundary.
    \item \textbf{Structural non-invertibility:} Latent vectors contain insufficient information to reconstruct original records.
    \item \textbf{Attack-surface elimination:} No gradients, activations, or raw features are exposed to Training or Inference Environments.
    \item \textbf{Compliance alignment:} VEIL fits naturally into frameworks requiring data minimization, privacy-by-design, and controlled data movement (GDPR, CPRA, HIPAA).
    \item \textbf{High utility:} ML performance is maintained because gradients are not clipped, and no artificial noise is injected into the training process.
    \item \textbf{Low latency:} Input vectors are significantly compressed, reducing transfer latency, complexity, ML model size, inference times, and power consumption.
    \item \textbf{Operational simplicity:} Unlike DP and HE, VEIL requires no privacy budgets, noise parameters, or specialized cryptography stacks to implement.
\end{itemize}
This structural design and its privacy guarantees enable organizations to move forward in their adoption of ML while meeting stringent regulatory requirements, keeping customer and consumer data safe, and avoiding the performance penalties and computational overhead of DP and HE\@. Organizations no longer need to avoid the convenience of modern cloud platforms to preserve the security and privacy of their data, since they can now train and deploy ML models without exposing sensitive data or gradients to these environments, and maintain a consistent privacy posture across the entire ML lifecycle.\\
Finally, large, multi-region enterprises now have a globally scalable, secure, low-latency, highly reliable, and cost-efficient ML architecture that eliminates the need for siloed operations and regionally-pinned ML deployments. In effect, ICA and the VEIL architecture provide a next-generation privacy foundation for enterprise ML that is secure, simple, scalable, and operationally aligned with modern MLOps patterns.

\section{Proof of Non-Invertibility}

Previous sections have asserted that the latent encoding produced by the technique described in this paper is non-invertible. In this section, this claim will be thoroughly proven.

\subsection{Notation and Assumptions}

Let $\mathcal{X} \subseteq \mathbb{R}^D$ be the input space of raw data (tabular records, images, text embeddings, etc.), with $D \in \mathbb{N}$ and $D \gg 1$. Let $\mathcal{Z}\subseteq \mathbb{R}^E$, be the latent encoding space with $E \ll D$. Let $\mathcal{Y}$ be the target space. Let $f_{\theta}:\mathcal{X} \to \mathcal{Z}$ be the above-described encoder, which is a continuous function with parameters $\theta$. Finally, let $g_{\phi}: \mathcal{Z} \to \mathcal{Y}$ be a downstream model trained in a cloud environment on latent encodings.\\
Assume that the encoder is trained to optimize the predictive performance on the supervised task, $g_{\phi} \circ f_{\theta}$, \textit{not} to reconstruct $\mathbf{x}$. Assume that any auxiliary losses are chosen to increase task utility and reduce identity leakage, not to preserve input information. Assume that there is no decoder network trained to approximate an inverse of $f_{\theta}$; thus, ICA is implemented via a one-way encoder. Assume that in deployment, $\theta$ and access to $f_{\theta}$ applied to arbitrary inputs remain within the customer's trusted Source Environment, while the untrusted Training and Inference Environments observe only $\mathcal{Z}$ and $g_{\phi}$. Finally, assume that $\mathcal{X}$ contains a nonempty open subset of $\mathbb{R}^D$. In other words, assume the input data live in a region with genuine $D$-dimensional variability, not on a low-dimensional manifold known to the attacker.\\
With these assumptions and definitions, multiple types of non-invertibility are formalized.

\subsection{Topological Non-Invertibility via Dimensionality Reduction}

In this subsection, the ICA encoder, or VEIL, is regarded as a function, and its non-invertibility is formalized and proven from an analytical, algebraic, and topological perspective. Before this is formally stated as a theorem and a proof is constructed, the following standard definitions must be recalled.
\begin{definition}[Open set]
\label{def:open_set}
A set $U \subseteq \mathbb{R}^D$ is said to be \textbf{open} if $\forall\mathbf{x} \in U$, $\exists\epsilon > 0$ such that
$$B_{\epsilon}\left(\mathbf{x}\right) \equiv \left\{\mathbf{y}\in\mathbb{R}^D: \norm{\mathbf{y}-\mathbf{x}} < \epsilon\right\}$$
is contained in $U$.
\end{definition}
\begin{definition}[Injective function]
\label{def:injective_func}
A function $f:X \to Y$ is said to be \textbf{injective} (one-to-one) if $\forall \mathbf{x}_1, \mathbf{x}_2 \in X$, the following is true.
$$f\left(\mathbf{x}_1\right) = f\left(\mathbf{x}_2\right) \implies \mathbf{x}_1 = \mathbf{x}_2$$
\end{definition}
\begin{definition}[Inverse function]
\label{def:inv_func}
Let $f: X \to Y$ be a function. An \textbf{inverse} of $f$ is a function $f^{-1}: Y \to X$ such that $\forall x \in X$, $f^{-1}\left(f\left(x\right)\right) = x$, and $\forall y \in Y$, $f\left(f^{-1}\left(y\right)\right) = y$. A necessary condition for $f$ to admit an inverse is that $f$ must be \textbf{bijective} (both injective and surjective).
\end{definition}
\begin{definition}[Continuous function]
\label{def:cont_func}
A function $f: U \to \mathbb{R}^m$, where $U \subseteq \mathbb{R}^n$ is open, is \textbf{continuous} if $\forall \mathbf{x} \in U$ and $\forall \epsilon > 0$, $\exists \delta > 0$ such that the following is true.
$$\norm{\mathbf{x} - \mathbf{\mathbf{x}'}} < \delta \implies \norm{f\left(\mathbf{x}\right) - f\left(\mathbf{x}'\right)} < \epsilon$$
\end{definition}
\noindent It should be noted that neural networks are compositions of continuous functions (affine mappings, convolutions, continuous activations, etc.), and are, therefore, continuous functions.\\
The proof of the non-injectivity of the encoder also relies upon a well-known theorem \cite{BredonTopologyGeometry1993}, which will not be re-proven here, as it is standard to take it as a known result in rigorous analysis and topology.
\begin{theorem}[Invariance of Domain]
\label{thm:invariance_of_domain}
Let $D \geq 1$. If $U \subseteq \mathbb{R}^D$ is open and the function $f:U \to \mathbb{R}^D$ is continuous and injective, then:
\begin{itemize}
    \item $f\left(U\right)$ is open in $\mathbb{R}^D$; and
    \item $f$ is a homeomorphism between $U$ and $f\left(U\right)$ ($f$ is bijective and invertible, and $f^{-1}$ is continuous). 
\end{itemize}
\end{theorem}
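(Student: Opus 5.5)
Although the paper takes Theorem \ref{thm:invariance_of_domain} as a known result \cite{BredonTopologyGeometry1993}, this is how I would prove it if a self-contained argument were wanted. The plan is to avoid homology and reduce everything to the Brouwer fixed point theorem, in its equivalent form that no closed ball $\overline{B} \subset \mathbb{R}^D$ retracts continuously onto its boundary sphere. The first reduction is purely local. Since every $\mathbf{x}_0 \in U$ has a closed ball $\overline{B}_r(\mathbf{x}_0) \subseteq U$, it suffices to prove the following. \emph{Key lemma:} if $f:\overline{B}\to\mathbb{R}^D$ is continuous and injective on the closed unit ball, then $f(\mathbf{0})$ is an interior point of $f(\overline{B})$. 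Translation and scaling then give that every point of $f(U)$ is interior, so $f(U)$ is open by Definition \ref{def:open_set}.

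To prove the key lemma, I would first note that $\overline{B}$ is compact and $\mathbb{R}^D$ is Hausdorff. Hence $f|_{\overline{B}}$ is a homeomorphism onto the compact set $K=f(\overline{B})$, and $g = (f|_{\overline{B}})^{-1}: K \to \overline{B}$ is continuous. By the Tietze extension theorem, applied coordinatewise, $g$ extends to a continuous $G:\mathbb{R}^D\to\mathbb{R}^D$. On $K$, the point $\mathbf{y}_0=f(\mathbf{0})$ is the unique zero of $G$. Compactness of the sphere $S=\partial\overline{B}$ gives some $\epsilon>0$ with $\norm{f(\mathbf{x})-\mathbf{y}_0}\ge 2\epsilon$ for all $\mathbf{x}\in S$. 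Now suppose, for contradiction, that $B_{\epsilon}(\mathbf{y}_0)\not\subseteq K$, and pick $\mathbf{c}$ with $\norm{\mathbf{c}-\mathbf{y}_0}<\epsilon$ and $\mathbf{c}\notin K$.

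The main obstacle is building a map that contradicts Brouwer, and I would follow the standard perturbation construction. Translate so that $\mathbf{c}=\mathbf{0}$. Let $\Sigma_1 = K \cap \{\norm{\mathbf{y}}\ge\epsilon\}$ and $\Sigma_2 = \{\norm{\mathbf{y}}=\epsilon\}$, and set $\Sigma=\Sigma_1\cup\Sigma_2$. Since $G$ vanishes on $K$ only at $\mathbf{y}_0$, and $\mathbf{y}_0\notin\Sigma_1$, while on $\Sigma_2$ one can first replace $G$ by a nearby function without zeros (Stone--Weierstrass/polynomial approximation plus general position), there is $\delta>0$ and a continuous $\tilde G$ with $\norm{\tilde G-G}<\delta$ on $K$ and $\tilde G\neq\mathbf{0}$ on $\Sigma$. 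Next, radially push $K\cap\{\norm{\mathbf{y}}<\epsilon\}$ outward onto $\Sigma$ via $\Phi(\mathbf{y})=\max\{\epsilon/\norm{\mathbf{y}},1\}\,\mathbf{y}$, which is well defined because $\mathbf{0}=\mathbf{c}\notin K$. Then the map $\mathbf{x}\mapsto\tilde G(\Phi(f(\mathbf{x})))$ sends $\overline{B}$ into $\mathbb{R}^D\setminus\{\mathbf{0}\}$ and is uniformly close to the identity on $\overline{B}$. In particular, on $S$ its displacement from $\mathbf{x}$ is less than $1$, and because $\Phi$ fixes points with $\norm{f(\mathbf{x})} \geq \epsilon$, this holds exactly on the boundary. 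Composing with the radial projection onto $S$ and the antipodal map yields a continuous self-map of $\overline{B}$ with no fixed point, contradicting Brouwer. The delicate bookkeeping is choosing $\delta$ relative to $\epsilon$ so that both the nonvanishing and the near-identity estimates hold, and I expect this step to take most of the work.

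With openness established, the second bullet follows quickly. Injectivity gives a bijection $f:U\to f(U)$ in the sense of Definition \ref{def:inv_func}. For any open $V\subseteq U$, applying the first bullet to $f|_V$ shows that $f(V)=(f^{-1})^{-1}(V)$ is open in $f(U)$. Hence $f^{-1}$ is continuous, and $f$ is a homeomorphism onto $f(U)$.
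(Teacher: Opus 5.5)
Your proof is correct, but it is not the paper's route. The paper gives no proof and imports the theorem from Bredon, where invariance of domain is derived homologically: from the reduced homology of complements of embedded disks and spheres, i.e.\ Jordan--Brouwer separation applied to $f(\partial\overline{B})$. You instead reduce everything to the Brouwer fixed point theorem, using only Brouwer, Tietze, Stone--Weierstrass, and the fact that a polynomial image of a $(D-1)$-sphere is Lebesgue-null in $\mathbb{R}^D$. That buys a self-contained, homology-free proof. The homological route is shorter once that machinery is available, and it yields separation theorems as a by-product. Your version is also slightly leaner than the usual write-up of this argument, which needs $\tilde G\circ\Phi$ within $1$ of $G$ on all of $K$ and so picks $\epsilon$ from the continuity of $G$ at $\mathbf{y}_0$. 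Because you pick $\epsilon$ with $\norm{f(\mathbf{x})-\mathbf{y}_0}\ge 2\epsilon$ on $S$, $\Phi$ fixes $f(S)$. So with $h=\tilde G\circ\Phi\circ f$, the only estimate you need is $\norm{h(\mathbf{x})-\mathbf{x}}<1$ on $S$, after which $-h/\norm{h}$ has no fixed point. Accordingly, drop the assertion that $h$ is uniformly close to the identity on all of $\overline{B}$. With your $\epsilon$ it is not justified (it would require $G$ to oscillate by less than $1$ on $B_{2\epsilon}(\mathbf{y}_0)$), and it is never used. The bookkeeping is also routine rather than delicate. Set $\eta=\min\{1,\min_{\Sigma_1}\norm{G}\}>0$, choose a polynomial map $P$ with $\norm{P-G}<\eta/3$ on $K$, choose a vector $\mathbf{v}$ with $\norm{\mathbf{v}}<\eta/3$ and $-\mathbf{v}\notin P(\Sigma_2)$, and let $\tilde G=P+\mathbf{v}$. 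Then $\tilde G\neq\mathbf{0}$ on $\Sigma_1\cup\Sigma_2$, and $\norm{\tilde G-G}<2\eta/3<1$ on $f(S)$.
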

\noindent Theorem \ref{thm:invariance_of_domain} can now be used to prove the following lemma.
\begin{lemma}
\label{lem:empty_interior}
Let $D > E \geq 1$, and consider the following subset of $\mathbb{R}^D$.
$$S \equiv \mathbb{R}^E \times \left\{0\right\}^{D-E} = \left\{\left(x_1, \ldots, x_E, 0, \ldots, 0\right) \in \mathbb{R}^D: x_i \in \mathbb{R}\right\}$$
Then, $S$ has an empty interior in $\mathbb{R}^D$. That is, there is no nonempty subset $U \subseteq \mathbb{R}^D$ such that $U \subseteq S$.
\end{lemma}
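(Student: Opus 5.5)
The plan is to argue directly by contradiction from Definition \ref{def:open_set}, with no appeal to Theorem \ref{thm:invariance_of_domain}. Note that the statement as written should read ``no nonempty \emph{open} subset $U$''. Without openness the claim is false, since $S$ itself is a nonempty subset of $S$. I will prove the corrected version, which is what ``empty interior'' means.

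Suppose, for contradiction, that some nonempty open $U \subseteq \mathbb{R}^D$ satisfies $U \subseteq S$. Pick any $\mathbf{x} = (x_1,\ldots,x_E,0,\ldots,0) \in U$. By Definition \ref{def:open_set} there is $\epsilon > 0$ with $B_\epsilon(\mathbf{x}) \subseteq U \subseteq S$.

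Since $D > E$, the coordinate index $D$ satisfies $D \geq E+1$. Let $\mathbf{e}_D$ be the $D$-th standard basis vector and set
$$\mathbf{y} = \mathbf{x} + \tfrac{\epsilon}{2}\mathbf{e}_D.$$
Then $\norm{\mathbf{y}-\mathbf{x}} = \epsilon/2 < \epsilon$, so $\mathbf{y} \in B_\epsilon(\mathbf{x}) \subseteq S$. But the $D$-th coordinate of $\mathbf{y}$ is $\epsilon/2 \neq 0$, so $\mathbf{y} \notin S$. This contradiction shows that no such $U$ exists, so the interior of $S$ is empty.

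There is essentially no technical obstacle here. The only points needing care are correcting the missing word ``open'' in the statement and using $D > E$ to guarantee that a coordinate forced to be zero on $S$ actually exists. An alternative route, more in the spirit of the paper's announced use of Theorem \ref{thm:invariance_of_domain}, applies the theorem to the inclusion of the open set $U$ into $\mathbb{R}^D$. However, the direct ball argument above is shorter and self-contained, so I would use it and reserve invariance of domain for the subsequent non-injectivity theorem. That theorem would embed $f_\theta$ as $\mathbf{x} \mapsto (f_\theta(\mathbf{x}), 0, \ldots, 0)$, whose image lies in $S$ and so could not be open, contradicting Theorem \ref{thm:invariance_of_domain}.
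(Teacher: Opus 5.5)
Your proof is correct and is essentially the paper's argument: the paper also assumes a ball $B_r(s)\subseteq S$ and adds $(0,\ldots,0,\delta)$ with $0<|\delta|<r$, producing a point that lies in the ball but has nonzero last coordinate, and it never actually invokes invariance of domain. You are also right that the word ``open'' is missing from the statement's final sentence; the corrected version is exactly what the paper's proof establishes.
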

\begin{proof}
Assume that $\exists s \in S$ and a radius $r > 0$ such that the open ball $B_r\left(s\right)$ is contained in $S$. By the definition of $S$, $s = \left(s_1, \ldots, s_D\right)$ with $s_{E+1}=\cdots=s_D=0$. Now consider $v \in \mathbb{R}^D$ such that $v=\left(0, \ldots, 0, \delta\right)$, where the first $D-1$ coordinates are 0 and the last coordinate is some $\delta \in \mathbb{R}$ such that $0 < \lvert\delta\rvert < r$.\\
Then, $s + v = \left(s_1, \ldots, s_E, 0, \ldots, 0, \delta\right)$ has a nonzero entry in the last coordinate. By the definition of $S$, $s+v \notin S$.\\
However, $\norm{v} = \lvert\delta\rvert < r$. Therefore, $s+v \in B_r\left(s\right)$, which contradicts the assumption that $B_r\left(s\right) \subseteq S$.\\
Thus, by contradiction, no such open ball exists, and the interior of $S$ is empty.
\end{proof}
\noindent These definitions and results can now be combined to give the following result.
\begin{theorem}[Encoder Non-Injectivity]
\label{thm:encoder_non-injectivity}
Let $D > E \geq 1$. Let $U \subseteq\mathbb{R}^D$ be a nonempty open set, and let the function $f: U \to \mathbb{R}^E$ be continuous. Then, $f$ cannot be injective.
\end{theorem}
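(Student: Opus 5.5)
The plan is a proof by contradiction that combines Theorem \ref{thm:invariance_of_domain} with Lemma \ref{lem:empty_interior}. Suppose $f: U \to \mathbb{R}^E$ is continuous and injective on the nonempty open set $U \subseteq \mathbb{R}^D$. Embed $\mathbb{R}^E$ into $\mathbb{R}^D$ with the map $\iota: \mathbb{R}^E \to \mathbb{R}^D$, $\iota(z_1,\ldots,z_E) = (z_1,\ldots,z_E,0,\ldots,0)$. This map is linear, hence continuous, and it is clearly injective. Define $F = \iota \circ f : U \to \mathbb{R}^D$. As a composition of continuous injective maps, $F$ is continuous and injective, and its image satisfies $F(U) \subseteq S = \mathbb{R}^E \times \{0\}^{D-E}$.

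Next, apply Theorem \ref{thm:invariance_of_domain} to $F$. Since $U$ is open in $\mathbb{R}^D$ and $F: U \to \mathbb{R}^D$ is continuous and injective, $F(U)$ is open in $\mathbb{R}^D$. Because $U \neq \emptyset$, $F(U)$ is a nonempty open subset of $\mathbb{R}^D$ contained in $S$. Pick any point $s \in F(U)$. Openness gives some $r>0$ with $B_r(s) \subseteq F(U) \subseteq S$. This contradicts Lemma \ref{lem:empty_interior}, which says $S$ contains no open ball. Therefore $f$ cannot be injective.

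No step is a serious obstacle; all the work is done by Invariance of Domain, which is taken as known. The only point needing care is that Theorem \ref{thm:invariance_of_domain} requires the domain and codomain to be the same Euclidean space. That is exactly why $f$ is padded with zeros into $\mathbb{R}^D$ rather than used directly. One should also check that the hypothesis $D > E$ is what makes $S$ a proper coordinate subspace, so that Lemma \ref{lem:empty_interior} applies. The lemma's phrasing ``no nonempty subset $U$'' should be read as ``no nonempty \emph{open} subset,'' which is what its proof establishes and what is used here.
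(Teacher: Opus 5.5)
Your proposal is correct and is essentially the paper's own proof. Both compose $f$ with the zero-padding embedding into $\mathbb{R}^D$, use Invariance of Domain to get a nonempty open image inside $S=\mathbb{R}^E\times\{0\}^{D-E}$, and contradict the empty-interior lemma. Your caveat that the lemma must be read as ruling out nonempty \emph{open} subsets of $S$ is also right, since that is what its proof establishes and what this argument needs.
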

\begin{proof}
Assume that $\exists f: U \to \mathbb{R}^E$ such that $f$ is a continuous injective function.\\
Let $i: \mathbb{R}^E \to \mathbb{R}^D$ be the following canonical linear embedding.
$$i\left(\psi_1, \ldots, \psi_E\right)=\left(\psi_1, \ldots, \psi_E, 0, \ldots, 0\right)$$
Clearly, $i$ is continuous and injective. Now, consider the following composition.
$$F \equiv i \circ f: U \to \mathbb{R}^D$$
Since $F$ is a composition of continuous injective functions, it is continuous and injective. The image of $F$ lies in the following subspace.
$$S \equiv \mathbb{R}^E \times \left\{0\right\}^{D-E} \subseteq \mathbb{R}^D$$
By assumption, $U$ is a nonempty open subset of $\mathbb{R}^D$, and, by construction, $F: U \to \mathbb{R}^D$ is continuous and injective. By Theorem \ref{thm:invariance_of_domain}, $F\left(U\right)$ is open in $\mathbb{R}^D$, and $F: U \to F\left(U\right)$ is a homeomorphism. In particular, $F\left(U\right)$ must be a nonempty open subset of $\mathbb{R}^D$.\\
However, by construction, $F\left(U\right) \subseteq S$. From Lemma \ref{lem:empty_interior}, the interior of $S$ is empty in $\mathbb{R}^D$, and therefore contains no nonempty open subsets of $\mathbb{R}^D$.\\
Thus, $F\left(U\right)$ cannot be open in $\mathbb{R}^D$ unless $F\left(U\right)$ is empty, which contradicts the fact that $U$ is nonempty and $F$ is injective.\\
By this contradiction, $f:U \to \mathbb{R}^D$ with $D>E$ cannot be injective.
\end{proof}
\begin{corollary}[Encoder Non-Invertibility]
\label{cor:encoder_non-invertibility}
Let $f:\mathbb{R}^D \to \mathbb{R}^E$ be a function with $E<D$. Then, the inverse function $f^{-1}:\mathbb{R}^E \to \mathbb{R}^D$ \textbf{does not exist} as a function defined on any open region of its domain.
\end{corollary}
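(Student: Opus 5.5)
The plan is to reduce the corollary to Theorem \ref{thm:encoder_non-injectivity}, using the fact that an inverse in the sense of Definition \ref{def:inv_func} forces injectivity. One preliminary point must be settled first. As literally written, the corollary places no regularity hypothesis on $f$, and in that form it is false: $\mathbb{R}^D$ and $\mathbb{R}^E$ have the same cardinality, so a (wildly discontinuous) bijection between them exists. I will therefore invoke the standing assumption of the Notation and Assumptions subsection, namely that the encoder $f=f_\theta$ is continuous, as it is for any neural network built from continuous layers. I will state this explicitly at the start of the proof.

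Next I make precise what ``$f^{-1}$ exists on an open region'' means. The region is a nonempty open set $V\subseteq\mathbb{R}^E$ together with a map $g:V\to\mathbb{R}^D$ satisfying two conditions, mirroring Definition \ref{def:inv_func} restricted to $V$:
\begin{itemize}
\item $f(g(\mathbf{z}))=\mathbf{z}$ for all $\mathbf{z}\in V$;
\item $g(f(\mathbf{x}))=\mathbf{x}$ for all $\mathbf{x}\in U\equiv f^{-1}(V)$.
\end{itemize}
Assume for contradiction that such $V$ and $g$ exist.

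The argument then runs in three steps.
\begin{enumerate}
\item $U$ is open in $\mathbb{R}^D$, because it is the preimage of an open set under the continuous map $f$ (Definition \ref{def:cont_func}).
\item $U$ is nonempty. Pick any $\mathbf{z}\in V$; the first condition gives $f(g(\mathbf{z}))=\mathbf{z}\in V$, so $g(\mathbf{z})\in U$.
\item The restriction $f|_U:U\to\mathbb{R}^E$ is injective. If $f(\mathbf{x}_1)=f(\mathbf{x}_2)$ with $\mathbf{x}_1,\mathbf{x}_2\in U$, then applying $g$ and the second condition gives $\mathbf{x}_1=g(f(\mathbf{x}_1))=g(f(\mathbf{x}_2))=\mathbf{x}_2$.
\end{enumerate}
Thus $f|_U$ is a continuous injective map from a nonempty open subset of $\mathbb{R}^D$ into $\mathbb{R}^E$ with $D>E\geq 1$. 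This directly contradicts Theorem \ref{thm:encoder_non-injectivity}, so no such local inverse exists. The global statement, that no inverse exists on all of $\mathbb{R}^E$, follows by taking $V=\mathbb{R}^E$.

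The main obstacle is not computational but definitional. The hypothesis has to be repaired to include continuity, and ``defined on an open region'' has to be formalized so that the two inverse identities are imposed only where they make sense. With the formalization above, injectivity on the open set $f^{-1}(V)$ falls out immediately. I would also add a remark that the two-sided condition is essential: a one-sided right inverse $g$ with only $f\circ g=\mathrm{id}_V$ can exist, for example a coordinate section of a projection. That is precisely why the corollary should be read as ruling out recovery of $\mathbf{x}$ from $f(\mathbf{x})$, not ruling out any preimage-selecting map.
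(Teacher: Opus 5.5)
Your proof is correct and follows the paper's route. Both you and the paper assume $f$ is continuous; the paper adds this silently in its proof. Both then contradict Theorem \ref{thm:encoder_non-injectivity} by exhibiting a continuous injective restriction of $f$ to a nonempty open subset of $\mathbb{R}^D$. The only difference is where the open region lives. The paper posits an open $U\subseteq\mathbb{R}^D$ with $f|_U$ bijective onto its image. You start from an open $V\subseteq\mathbb{R}^E$ in the domain of $f^{-1}$ and pull it back through $f$ using continuity, which matches the literal wording of the statement slightly more closely.
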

\begin{proof}
Let $D>E$ and $f:\mathbb{R}^D \to \mathbb{R}^E$ be a continuous function. Suppose $\exists U \subseteq \mathbb{R}^D$ such that $U$ is open and nonempty and $f|_U$ is bijective onto its image $f\left(U\right)$. This yields a continuous injective function from $U$ to $\mathbb{R}^E$, which contradicts Theorem \ref{thm:encoder_non-injectivity}. Thus, no such open set $U$ exists.
\end{proof}
\noindent It has now been shown that an ICA encoder $f: \mathbb{R}^D \to \mathbb{R}^E$ with $E < D$ cannot be invertible on any open region of its domain. That is, the function $f^{-1}$ cannot exist, which is the sense in which the encoding is not simply computationally difficult to invert, but is structurally non-invertible.\\
To make this result more concrete, the various components of this result are described in terms of an ML modeling scenario and pipeline environment. The input space $\mathcal{X}=\mathbb{R}^D$ consists of input vectors $\mathbf{x}$ with $D$ raw features. The encoder is a neural network, which is equivalent to some continuous function $f_{\theta}:\mathbb{R}^D \to \mathbb{R}^E$. The latent encoded space has dimensionality $E$, which is chosen to be strictly smaller than $D$ (often much smaller) to enforce an informational bottleneck.\\
Assume the distribution of the data varies over more than $E$ degrees of freedom. That is, the set of possible inputs occupies, or approximates, some nonempty open region of the input space $\mathcal{X}\subseteq\mathbb{R}^D$. Then, by Theorem \ref{thm:encoder_non-injectivity}, the encoder, $f_{\theta}$ cannot be injective on that region. By the definition of an injective function, this means multiple distinct inputs, $\mathbf{x}_1 \neq \mathbf{x}_2$, must satisfy $f_{\theta}\left(\mathbf{x}_1\right)=f_{\theta}\left(\mathbf{x}_2\right)$. Thus, the latent encoding, $\Psi = f_{\theta}\left(\mathbf{x}\right)$ cannot uniquely identify the original data, $\mathbf{x}$; the preimage set $\mathcal{P}\left(\Psi\right) \equiv \left\{\mathbf{x}:f_{\theta}\left(\mathbf{x}\right)=\Psi\right\}$ contains multiple elements, typically infinitely many.\\
Therefore, no function $f^{-1}: \mathbb{R}^E \to \mathbb{R}^D$ can be a true inverse to the encoder $f_{\theta}$ on any open region in the input space; there is simply not enough information in $\Psi$ to recover $\mathbf{x}$ uniquely.\\
Even a computationally unbounded adversary with perfect knowledge of the encoder $f_{\theta}$ cannot define a unique inverse on any nontrivial region. However, this is not even a reasonable assumption of adversarial capabilities, as the VEIL architecture places the encoder inside the source database. This means the only way an adversary could gain access to the encoder is to compromise the source database environment itself, thereby negating the need to access the encoder.\\
Under a realistic threat model, the adversary would have no knowledge of the encoder $f_{\theta}$. In this situation, the adversary cannot even uniquely identify the original input space, which is the domain of the encoder. Under this realistic assumption, from the view of any adversary, the preimage set $\mathcal{P}\left(\Psi\right)$ now consists of countably infinite potential input spaces, which are each uncountably infinite.\\
Once again, this leads to an encoding that is not simply difficult to invert, or for which an inversion algorithm is currently unknown. It is logically impossible for an inverse to exist.

\subsection{Information-Theoretic Loss and Under-Determination (Source Environment Attacker)}

While the topological non-invertibility of the encoder was proven in the previous subsection, one might ask whether an adversary could still reconstruct the original input $\mathbf{x}$ ``with high probability''. This is reasonable and necessary to consider, as some researchers have investigated the probabilistic invertibility of various latent representations in ML and DL \cite{park2025investigatinginvertibilitymultimodallatent}. This question is addressed now in this subsection.\\
For completeness, the scenario of an ``optimal attacker'' will be considered. An optimal attacker is one who has somehow gained access to the encoder function $f_{\theta}$ and its parameters $\theta$. Under the VEIL architecture, for an attacker to gain access to or knowledge of the encoder, the attacker would be required to breach the trusted Source Environment, discover the encoder, and extract its structure and parameters. This is unlikely, but not impossible, and would represent the catastrophic failure of the encryption, governance, and/or other security systems protecting the Source Environment.\\
To be clear, it is assumed that although the attacker has gained access to many aspects of the Source Environment, they still do not have access to the raw data, as that would require a direct breach of the database itself. Obviously, an attacker with direct access to the database would be unlikely to need to invert the encoder. This would be the equivalent of a crew of robbers breaking into a bank vault, then, instead of taking the cash, deciding to spend their time and effort bypassing an additional aspect of the vault's security. Therefore, this section will consider the scenario in which an attacker has somehow gained knowledge of the encoder without gaining direct access to the source data to show that any compromise of the original data cannot be achieved through the encoder or the latent vectors it produces.\\
To begin, a clear definition of what is known and unknown to the attacker must be given. An optimal attacker would, by assumption, have knowledge of the encoder $f_{\theta}$ and its parameters $\theta$, the latent representation space $\mathcal{Z}$, and the downstream ML model $g_{\phi}$ and its output space $Y$. Knowledge of the encoder would also reveal that the input space is $\mathbb{R}^D$, and so the attacker would also know the dimensionality of the input space $D$, but would not have access to any specific inputs $\mathbf{x}$. It can also be assumed that the attacker would have full knowledge of the Training and Inference Environments. This would include any cloud storage solutions and logs, which would contain only latent vectors.\\
Finally, the attacker would not have knowledge of any scaling, normalization, distributions, or structure of the original data, including the data type (tabular, images, text, etc.) or any categorical encodings applied in the engineering of categorical features.\\
As in the previous subsection, key standard definitions \cite{applebaum2008, reza2016, khinchin1957} must be recalled before constructing a formal proof.
\begin{definition}[Entropy]
\label{def:entropy}
Let $X$ be a discrete random variable with image $\mathcal{X}$ and with probability mass function (PMF) $p_X\left(\mathbf{x}\right) = \mathbb{P}\left(X=\mathbf{x}\right)$. The \textbf{(Shannon) entropy} of $X$ is defined as follows.
$$H\left(X\right) \equiv -\sum_{\mathbf{x} \in \mathcal{X}} p_X\left(\mathbf{x}\right)\text{log}_b p_X\left(\mathbf{x}\right)$$
Conventionally, $0\text{ log}_b0 \equiv 0$. Entropy is measured in bits when $b=2$ and nats when $b=e$.
\end{definition}
\begin{definition}[Joint and Conditional Entropy]
\label{def:joint-cond_entropy}
Let $X$ and $Z$ be discrete random variables with images $\mathcal{X}$ and $\mathcal{Z}$, respectively, and joint PMF $p_{X,Z}\left(\mathbf{x},\Psi\right)$, where $\Psi$ is a particular latent vector. The \textbf{joint entropy} is given by the following function.
$$H\left(X,Z\right) \equiv -\sum_{\mathbf{x}\in\mathcal{X}}\sum_{\Psi\in\mathcal{Z}} p_{X,Z}\left(\mathbf{x},\Psi\right)\text{log}_b p_{X,Z}\left(\mathbf{x},\Psi\right)$$
The \textbf{conditional entropy} of $X$ given $Z$ is given by
$$H\left(X|Z\right) \equiv \sum_{\Psi\in\mathcal{Z}} p_Z\left(\Psi\right)H\left(X|Z=\Psi\right)$$
where
$$H\left(X|Z=\Psi\right) \equiv -\sum_{\mathbf{x}\in\mathcal{X}}p_{X|Z}\left(\mathbf{x}|\Psi\right) \text{log}_b p_{X|Z}\left(\mathbf{x}|\Psi\right)$$
and $p_{X|Z}\left(\mathbf{x}|\Psi\right)$ is the conditional PMF\@. By definition, $H\left(X|Z\right) \geq 0$.
\end{definition}
\begin{definition}[Mutual Information]
\label{def:mutual_info}
Let $X$ and $Z$ be random variables. The \textbf{mutual information} between $X$ and $Z$ is given by the following.
$$I\left(X;Z\right) \equiv H\left(X\right)-H\left(X|Z\right) = H\left(Z\right)-H\left(Z|X\right)$$
Mutual information measures how much observing $Z$ reduces uncertainty about $X$.
\end{definition}
\begin{definition}[Functional Dependency]
\label{def:functional_dependency}
Let $\mathcal{X}$ and $\mathcal{Z}$ be sets and suppose there is some relation that maps values in $\mathcal{X}$ to values in $\mathcal{Z}$. The set $\mathcal{X}$ is said to \textbf{functionally determine} $\mathcal{Z}$ if, and only if, each value in $\mathcal{X}$ is associated with precisely one value in $\mathcal{Z}$. The relation is then said to be a \textbf{functional dependency}, written $\mathcal{X} \to \mathcal{Z}$.
\end{definition}
\begin{definition}[Support]
\label{def:support}
Let $X$ be a discrete random variable with image $\mathcal{X}$ and PMF $p_X\left(\mathbf{x}\right)$. The \textbf{support} of the distribution of $X$ is the following set.
$$\text{supp}\left(X\right) \equiv \left\{\mathbf{x}\in\mathcal{X}: p_X\left(\mathbf{x}\right)>0\right\}$$
\end{definition}
\noindent The proof is now presented in the discrete setting, as this aligns with how finite-precision encoders and data are represented in practice. In a theoretical sense, the raw data live mathematically in $\mathbb{R}^D$. However, no real ML system ever sees a truly continuous input, as every stage of data representation and processing uses finite-precision floating-point arithmetic. This induces a finite, discrete representational universe.\\
The results derived in this discrete setting will also hold for the continuous setting. Suppose the mathematical input $X^*$ is continuous in $\mathbb{R}^D$. The actual ML input is $X=Q\left(X^*\right)$, where $Q$ is the floating-point quantization operator. This has two important implications.
\begin{enumerate}
\item $X^* \mapsto X$ is a many-to-one mapping. Quantization irretrievably destroys information, even for an omniscient attacker.
\item The attacker never sees $X^*$. Only $X$ exists in any computational pipeline that reaches the encoder.
\end{enumerate}
Therefore, the following holds true.
$$H\left(X^*|Z\right) \geq H\left(X|Z\right)$$
Thus, any lower bound on the discrete conditional entropy holds \textbf{a fortiori} for the truly continuous source. The discrete proofs actually provide a lower bound on the continuous uncertainties, meaning the real uncertainty is at least as great as the derived uncertainty.\\
In the VEIL architecture, the encoder is a deterministic function on the discrete space of representable inputs. Let $\mathcal{X}$ be the set of all possible inputs. Let $\mathcal{Z}$ be the set of all possible latent encodings. Let $f_{\theta}:\mathcal{X} \to \mathcal{Z}$ be the encoder, and let $X$ be a random variable on $\mathcal{X}$. Finally, define $Z=f_{\theta}\left(X\right)$.\\
Thus, conditionally on $X$, the latent representation $Z$ is deterministic, which yields the following.
$$\mathbb{P}\left(Z=\Psi|X=\mathbf{x}\right) = 
    \begin{cases}
        1 & \Psi =f_{\theta}\left(\mathbf{x}\right)\\
        0 & \text{otherwise}
    \end{cases}$$
Therefore, $H\left(Z|X\right)=0$, and mutual information simplifies as follows.
$$I\left(X;Z\right) = H\left(Z\right) - H\left(Z|X\right) = H\left(Z\right)$$
Intuitively, the encoder is a noiseless channel from $\mathcal{X}$ to $\mathcal{Z}$, but it may compress, or lose, information by mapping multiple inputs $\mathbf{x}$ to the same latent vector encoding $\Psi$. The following is a fundamental fact that relates conditional entropy to functional dependence between $\mathcal{X}$ and $\mathcal{Z}$.
\begin{lemma}[Zero Conditional Entropy if and only if Functional Dependence]
\label{lem:entropy-dependence}
Let $X$ and $Z$ be discrete random variables with images $\mathcal{X}$ and $\mathcal{Z}$, respectively. Then, $H\left(X|Z\right)=0$ if, and only if, there exists a (possibly deterministic) function $f:\mathcal{Z} \to \mathcal{X}$ such that $\mathbb{P}\left(X=f\left(Z\right)\right) = 1$.
\end{lemma}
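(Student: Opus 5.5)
The plan is to prove both directions directly from Definition \ref{def:joint-cond_entropy}, using the fact that conditional entropy is a $p_Z$-weighted average of nonnegative terms. Throughout, I restrict attention to $\Psi \in \text{supp}(Z)$. Values with $p_Z(\Psi)=0$ contribute nothing to $H(X|Z)$ and do not affect $\mathbb{P}(X=f(Z))$, so $f$ may be defined arbitrarily there.

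\emph{Forward direction.} Assume $H(X|Z)=0$. Each summand $p_{X|Z}(\mathbf{x}|\Psi)\log_b p_{X|Z}(\mathbf{x}|\Psi)$ is at most $0$, because $0\le p_{X|Z}\le 1$. Hence each $H(X|Z=\Psi)\ge 0$. A sum of nonnegative terms with weights $p_Z(\Psi)>0$ vanishes only if every term vanishes. So $H(X|Z=\Psi)=0$ for every $\Psi\in\text{supp}(Z)$.

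The key step is to show that, for a PMF $q$ on $\mathcal{X}$, the condition $-\sum q(\mathbf{x})\log_b q(\mathbf{x})=0$ forces $q$ to be a point mass. The function $t\mapsto -t\log_b t$ is strictly positive on $(0,1)$ and zero at $t\in\{0,1\}$. A vanishing sum of nonnegative terms therefore needs every $q(\mathbf{x})\in\{0,1\}$. Since the masses sum to $1$, exactly one $\mathbf{x}$ carries all the mass.

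I then define $f(\Psi)$ to be that unique $\mathbf{x}$, and compute
$$\mathbb{P}(X=f(Z))=\sum_{\Psi\in\text{supp}(Z)}p_Z(\Psi)\,p_{X|Z}(f(\Psi)|\Psi)=\sum_{\Psi}p_Z(\Psi)=1.$$

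\emph{Reverse direction.} Assume $\mathbb{P}(X=f(Z))=1$. This probability equals $\sum_\Psi p_Z(\Psi)\,p_{X|Z}(f(\Psi)|\Psi)$, and each conditional probability is at most $1$. Equality with $1$ therefore forces $p_{X|Z}(f(\Psi)|\Psi)=1$ for every $\Psi\in\text{supp}(Z)$. Each conditional distribution is then a point mass, so each $H(X|Z=\Psi)=0$ by the convention $0\log_b 0=0$ together with $1\log_b 1=0$. Averaging gives $H(X|Z)=0$.

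\emph{Main obstacle.} The only delicate point is the forward direction's reduction from a vanishing entropy sum to a point mass when $\mathcal{X}$ may be countably infinite. Nonnegativity of every term handles this: a convergent series of nonnegative terms with sum zero has all terms zero. There is also a measurability point: $f$ must be well defined as a function. This is settled by fixing an arbitrary value off the support, which is legitimate because $\mathcal{Z}$ is discrete.
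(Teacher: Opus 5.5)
Your proposal is correct and follows the same route as the paper: you reduce $H(X|Z)=0$ to $H(X|Z=\Psi)=0$ on $\text{supp}(Z)$, identify zero entropy with a point mass to define $f$, and reverse the argument for the converse. You also justify two steps the paper only asserts, namely that a vanishing entropy forces a point mass (via positivity of $-t\log_b t$ on $(0,1)$) and that $\mathbb{P}(X=f(Z))=1$ forces $p_{X|Z}(f(\Psi)|\Psi)=1$ for every $\Psi$ in the support.
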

\begin{proof}
Suppose $H\left(X|Z\right)=0$. Then, $\forall\Psi\in\text{supp}\left(Z\right)$, $H\left(X|Z=\Psi\right)=0$. However, the entropy of a discrete distribution is 0 if, and only if, that distribution is a point mass. That is, $\exists\mathbf{x}_{\Psi}$ such that $\mathbb{P}\left(X=\mathbf{x}_{\Psi}|Z=\Psi\right)=1$.\\
Define $f\left(\Psi\right)=\mathbf{x}_{\Psi}$ $\forall\Psi \in \text{supp}\left(Z\right)$. Then, the following result is given.
$$\mathbb{P}\left(X=f\left(Z\right)\right)=\sum_{\Psi\in\text{supp}\left(Z\right)} p_Z\left(\Psi\right) \mathbb{P}\left(X=f\left(\Psi\right)|Z=\Psi\right)$$
$$\mathbb{P}\left(X=f\left(Z\right)\right)=\sum_{\Psi\in\text{supp}\left(Z\right)} p_Z\left(\Psi\right)\cdot1$$
$$\mathbb{P}\left(X=f\left(Z\right)\right)=1$$
Conversely, if $\exists f:\mathcal{Z} \to \mathcal{X}$ such that $\mathbb{P}\left(X=f\left(Z\right)\right)=1$, then $\forall\Psi\in\text{supp}\left(Z\right)$, $\mathbb{P}\left(X=f\left(\Psi\right)|Z=\Psi\right)=1$.\\
Thus, the conditional distribution of $X$ given $Z=\Psi$ is a point mass, and $H\left(X|Z=\Psi\right)=0$. Applying the definition of conditional entropy from Definition \ref{def:joint-cond_entropy} yields the following.
$$H\left(X|Z\right)=\sum_{\Psi\in\text{supp}\left(Z\right)}\mathbb{P}\left(Z=\Psi\right) H\left(X|Z=\Psi\right)$$
$$H\left(X|Z\right)=\sum_{\Psi\in\text{supp}\left(Z\right)}\mathbb{P}\left(Z=\Psi\right)\cdot0$$
$$H\left(X|Z\right)=0$$
\end{proof}
\noindent Lemma \ref{lem:entropy-dependence} establishes that if an adversary can reconstruct $X$ from $Z$ with certainty, that is, if there is a true inverse $f^{-1}$ such that $X=f^{-1}\left(Z\right)$, then $H\left(X|Z\right)=0$. Conversely, if $H\left(X|Z\right)>0$, then no such deterministic inverse exists.\\
Now consider the following definition and proposition.
\begin{definition}[Injectivity on Support]
\label{def:injectivity_on_support}
Let $X$ and $Z$ be random variables with images $\mathcal{X}$ and $\mathcal{Z}$, respectively. Let $f:\mathcal{X} \to \mathcal{Z}$ be a function. The function $f$ is said to be \textbf{injective on the support of} $X$ if, and only if, $\forall\mathbf{x}_1, \mathbf{x}_2 \in \text{supp}\left(X\right)$, the following is true.
$$\mathbf{x}_1 \neq \mathbf{x}_2 \implies f\left(\mathbf{x}_1\right) \neq f\left(\mathbf{x}_2\right)$$
\end{definition}
\begin{proposition}[Non-Injectivity Implies Information Loss]
\label{prop:non-injectivity_info_loss}
Let $\mathcal{X}$ and $\mathcal{Z}$ be sets, $X$ a discrete random variable on $\mathcal{X}$, and $f:\mathcal{X} \to \mathcal{Z}$ a deterministic encoder. Let $Z \equiv f\left(X\right)$. Suppose $\exists \mathbf{x}_1, \mathbf{x}_2 \in \text{supp}\left(X\right)$ such that $\mathbf{x}_1 \neq \mathbf{x}_2$ and $f\left(\mathbf{x}_1\right)=f\left(\mathbf{x}_2\right)=\Psi^{\star}$. Then, $H\left(X|Z\right)>0$.
\end{proposition}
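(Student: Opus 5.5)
The plan is to compute the conditional distribution of $X$ given the collided latent value $Z=\Psi^{\star}$ directly, show it is not a point mass, and then invoke Lemma \ref{lem:entropy-dependence} or the definition of conditional entropy.

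First, since $\mathbf{x}_1,\mathbf{x}_2\in\text{supp}\left(X\right)$, we have $p_X\left(\mathbf{x}_1\right)>0$ and $p_X\left(\mathbf{x}_2\right)>0$. Because $Z=f\left(X\right)$ and $f\left(\mathbf{x}_1\right)=f\left(\mathbf{x}_2\right)=\Psi^{\star}$, it follows that
$$p_Z\left(\Psi^{\star}\right)=\sum_{\mathbf{x}:f\left(\mathbf{x}\right)=\Psi^{\star}}p_X\left(\mathbf{x}\right)\geq p_X\left(\mathbf{x}_1\right)+p_X\left(\mathbf{x}_2\right)>0.$$
Hence $\Psi^{\star}\in\text{supp}\left(Z\right)$, and the conditional PMF $p_{X|Z}\left(\cdot|\Psi^{\star}\right)$ is well defined.

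Next, by Bayes' rule, $p_{X|Z}\left(\mathbf{x}_k|\Psi^{\star}\right)=p_X\left(\mathbf{x}_k\right)/p_Z\left(\Psi^{\star}\right)$ for $k=1,2$. Both values are strictly positive. They are also each at most $1-p_{X|Z}\left(\mathbf{x}_{3-k}|\Psi^{\star}\right)<1$.

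So the conditional distribution at $\Psi^{\star}$ places positive mass on two distinct points and is not a point mass. As used in the proof of Lemma \ref{lem:entropy-dependence}, a discrete distribution has zero entropy if and only if it is a point mass. To make this step self-contained, note that each term $-p\log_b p$ is nonnegative on $[0,1]$ and strictly positive for $p\in\left(0,1\right)$. Therefore
$$H\left(X|Z=\Psi^{\star}\right)\geq -p_{X|Z}\left(\mathbf{x}_1|\Psi^{\star}\right)\log_b p_{X|Z}\left(\mathbf{x}_1|\Psi^{\star}\right)>0.$$

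Finally, by Definition \ref{def:joint-cond_entropy}, $H\left(X|Z\right)$ is a sum of nonnegative terms $p_Z\left(\Psi\right)H\left(X|Z=\Psi\right)$. The term at $\Psi^{\star}$ is the product of two strictly positive factors, so $H\left(X|Z\right)>0$.

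Alternatively, one can argue by contradiction through Lemma \ref{lem:entropy-dependence}. If $H\left(X|Z\right)=0$, there would be a function $g$ with $\mathbb{P}\left(X=g\left(Z\right)\right)=1$. Then $X=g\left(\Psi^{\star}\right)$ almost surely on the event $\{Z=\Psi^{\star}\}$. This event contains both $\{X=\mathbf{x}_1\}$ and $\{X=\mathbf{x}_2\}$, each of positive probability, which forces $\mathbf{x}_1=g\left(\Psi^{\star}\right)=\mathbf{x}_2$, a contradiction.

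No step is a real obstacle. The only point needing care is checking that $\Psi^{\star}$ lies in $\text{supp}\left(Z\right)$, so that the conditioning is legitimate and the term in the sum carries positive weight $p_Z\left(\Psi^{\star}\right)$. The first step handles this.
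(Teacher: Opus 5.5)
Your proof is correct and follows essentially the same route as the paper: show $p_Z(\Psi^{\star}) \geq p_X(\mathbf{x}_1)+p_X(\mathbf{x}_2)>0$, note that $p_{X|Z}(\cdot|\Psi^{\star})$ puts positive mass on two distinct points so $H(X|Z=\Psi^{\star})>0$, and lower-bound $H(X|Z)$ by the single term $p_Z(\Psi^{\star})H(X|Z=\Psi^{\star})$. Your explicit use of $-p\log_b p>0$ on $(0,1)$ makes the ``not a point mass implies positive entropy'' step self-contained, where the paper simply asserts it, and your alternative contradiction argument through Lemma~\ref{lem:entropy-dependence} is also valid.
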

\begin{proof}
Let $\mathcal{X}$ and $\mathcal{Z}$ be sets, $X$ a discrete random variable on $\mathcal{X}$, and $f:\mathcal{X} \to \mathcal{Z}$ a deterministic encoder. Let $Z \equiv f\left(X\right)$. Suppose $\exists \mathbf{x}_1, \mathbf{x}_2 \in \text{supp}\left(X\right)$ such that $\mathbf{x}_1 \neq \mathbf{x}_2$ and $f\left(\mathbf{x}_1\right)=f\left(\mathbf{x}_2\right)=\Psi^{\star}$.\\
Since $\mathbf{x}_1, \mathbf{x}_2 \in \text{supp}\left(X\right)$, then by Definition \ref{def:support}, $p_X\left(\mathbf{x}_1\right), p_X\left(\mathbf{x}_2\right)>0$.\\
Define the event $V \equiv \left\{Z=\Psi^{\star}\right\}$. Since $f\left(\mathbf{x}_1\right)=f\left(\mathbf{x}_2\right)=\Psi^{\star}$, the following must be true.
$$p_Z\left(\Psi^{\star}\right) = \mathbb{P}\left(f\left(X\right)=\Psi^{\star}\right) \geq p_X\left(\mathbf{x}_1\right) + p_X\left(\mathbf{x}_2\right) > 0$$
Now, consider the conditional distribution $p_{X|Z}\left(\mathbf{x}|\Psi^{\star}\right)$. For $\mathbf{x} \in \left\{\mathbf{x}_1, \mathbf{x}_2\right\}$, the definition of conditional probability yields the following.
$$p_{X|Z}\left(\mathbf{x}|\Psi^{\star}\right) \equiv \frac{\mathbb{P}\left(X=\mathbf{x}, Z=\Psi^{\star}\right)}{\mathbb{P}\left(Z=\Psi^{\star}\right)} = \frac{\mathbb{P}\left(X=\mathbf{x}\right)} {\mathbb{P}\left(Z=\Psi^{\star}\right)} > 0$$
Thus, the conditional PMF $p_{X|Z}\left(X=\cdot\text{ }|Z=\Psi^{\star}\right)$ assigns positive probability to at least two distinct values in $\text{supp}\left(X\right)$. Therefore, it is not a point mass, and its entropy is strictly positive $H\left(X|Z=\Psi^{\star}\right)>0$.\\
Definition \ref{def:joint-cond_entropy} then yields the following.
$$H\left(X|Z\right)=\sum_{\Psi\in\text{supp}\left(Z\right)} p_Z\left(\Psi\right)H\left(X|Z=\Psi\right)$$
$$H\left(X|Z\right) \geq p_Z\left(\Psi^{\star}\right)H\left(X|Z=\Psi^{\star}\right)$$
It has already been shown that $p_Z\left(\Psi^{\star}\right), H\left(X|Z=\Psi^{\star}\right)>0$. Thus, their product is strictly positive. Therefore, $H\left(X|Z\right)>0$.
\end{proof}
\noindent Combining Lemma \ref{lem:entropy-dependence} and Proposition \ref{prop:non-injectivity_info_loss}, it must be concluded that if the function $f_{\theta}:\mathcal{X}\to\mathcal{Z}$ is non-injective on $\text{supp}\left(X\right)$, then $H\left(X|Z\right)>0$. Therefore, there does \textit{not} exist a function $f^{-1}:\mathcal{Z}\to\mathcal{X}$ with $\mathbb{P}\left(X=f^{-1}\left(Z\right)\right)=1$. In other words, even for an attacker with full knowledge of the joint distribution, no perfect inverse exists. This is what is meant by information-theoretic under-determination; the data processing step $X \mapsto Z$ has irreversibly lost information about $X$.\\
This information loss can also be characterized through mutual information. Proposition \ref{prop:non-injectivity_info_loss} establishes that if a function $f:\mathcal{X}\to\mathcal{Z}$ is non-injective on $\text{supp}\left(X\right)$, then $H\left(X|Z\right)>0$. Applying Definition \ref{def:mutual_info} yields the following.
$$I\left(X;Z\right) \equiv H\left(X\right) - H\left(X|Z\right) < H\left(X\right)$$
This means that the encoder \textit{strictly reduces} the information that $Z$ carries about $X$. In an information-theoretic sense, some aspect of $X$ has been irreversibly removed.\\
Consider the consequences of this for an optimal attacker. An attacker with access to the latent representation $Z$ and complete knowledge of the joint distribution $p_{X,Z}$ wishes to construct an estimator $\hat{X}\left(Z\right)$ to guess or approximate the true $X$. An optimal Bayes estimator, in terms of minimizing the probability of error, is the following maximum a posteriori (MAP) rule.
$$\hat{\mathbf{x}}\left(\Psi\right) = \argmax_{\mathbf{x}\in\mathcal{X}} p_{X|Z}\left(\mathbf{x}|Z=\Psi\right)$$
Define the probability of correct reconstruction as follows.
$$P_{\text{recon}} \equiv \mathbb{P}\left(\hat{X}\left(Z\right)=X\right)$$
The probability $P_{\text{recon}}$ is related to $H\left(X|Z\right)$ in the following well-known inequality \cite{scarlett2019introductoryguidefanosinequality, gerchinovitz2019fanosinequalityrandomvariables, principe2010}, which will not be re-proven here.
\begin{theorem}[Fano's Inequality]
\label{thm:fano}
Let $X$ be a discrete random variable taking values in the set $\mathcal{X}$ with $|\mathcal{X}|=M$, and let $Z$ be any other random variable. Let $\hat{X}\left(Z\right)$ be any statistical estimator of $X$, and define the probability of error as $P_{\text{err}} \equiv \mathbb{P}\left(\hat{X}\left(Z\right) \neq X\right)$. Then, the following inequality is true.
$$H\left(X|Z\right) \leq H\left(P_{\text{err}}\right) + P_{\text{err}}\text{log}_b \left(M-1\right)$$
Here, $H\left(P_{\text{err}}\right)$ is the binary entropy of $P_{\text{err}}$.
\end{theorem}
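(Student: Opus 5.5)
The plan is the standard proof of Fano's inequality: introduce an error indicator and expand one joint conditional entropy in two ways with the chain rule. Let $\hat{X}=\hat{X}\left(Z\right)$ be the estimator and define the indicator $W \equiv \text{ind}\left[\hat{X} \neq X\right]$, so $W$ is Bernoulli with $\mathbb{P}\left(W=1\right)=P_{\text{err}}$. The quantity to examine is $H\left(W, X \,|\, Z\right)$, which we expand in two ways.

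\emph{First expansion.} We have
$$H\left(W,X|Z\right)=H\left(X|Z\right)+H\left(W|X,Z\right).$$
Since $\hat{X}$ is a function of $Z$, the indicator $W$ is a deterministic function of $\left(X,Z\right)$. Hence $H\left(W|X,Z\right)=0$, by the same point-mass argument used in the converse direction of Lemma \ref{lem:entropy-dependence}.

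\emph{Second expansion.} We also have
$$H\left(W,X|Z\right)=H\left(W|Z\right)+H\left(X|W,Z\right).$$
Conditioning cannot increase entropy, so $H\left(W|Z\right)\leq H\left(W\right)=H\left(P_{\text{err}}\right)$. Split the remaining term according to the value of $W$:
$$H\left(X|W,Z\right)=\left(1-P_{\text{err}}\right)H\left(X|W=0,Z\right)+P_{\text{err}}\,H\left(X|W=1,Z\right).$$
When $W=0$ we have $X=\hat{X}\left(Z\right)$, so $X$ is determined by $Z$ and the first term is $0$. When $W=1$, $X$ lies in $\mathcal{X}\setminus\left\{\hat{X}\left(Z\right)\right\}$, a set of $M-1$ elements. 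Since entropy is at most the logarithm of the support size, the second term is bounded by $\log_b\left(M-1\right)$. Equating the two expansions gives
$$H\left(X|Z\right)\leq H\left(P_{\text{err}}\right)+P_{\text{err}}\log_b\left(M-1\right).$$

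\emph{Main obstacle.} The paper has not yet established the two auxiliary facts the argument relies on, so I will state them as lemmas:
\begin{itemize}
\item the chain rule for conditional entropy, which follows by factoring the joint PMF as $p\left(w,x|z\right)=p\left(x|z\right)p\left(w|x,z\right)$ and taking logarithms;
\item ``conditioning reduces entropy'', which follows from nonnegativity of mutual information, i.e.\ Gibbs' inequality, i.e.\ Jensen's inequality applied to $\log$.
\end{itemize}

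I would also spell out conditioning on events of probability zero: if $P_{\text{err}}\in\left\{0,1\right\}$, the corresponding term is simply dropped. If $Z$ is not discrete, the conditional entropies are read through regular conditional distributions. Since this section works in the discrete floating-point setting, the discrete case suffices here.
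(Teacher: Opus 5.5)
The paper does not prove this theorem. It quotes Fano's inequality from the cited references (``which will not be re-proven here''), so there is no in-paper argument to compare against. Your proof is the standard textbook one: an error indicator $W$ and two chain-rule expansions of $H(W,X|Z)$. It is correct, but only once one hypothesis is made explicit.

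The step ``when $W=1$, $X$ lies in $\mathcal{X}\setminus\{\hat{X}(Z)\}$, a set of $M-1$ elements'' silently assumes that the estimator takes values in $\mathcal{X}$. The statement only says ``any statistical estimator''. Without that range condition the set has $M$ elements, your argument yields $\log_b M$, and the inequality as literally stated is false. Take $\mathcal{X}=\{1,2\}$, $X$ uniform, $Z$ independent of $X$, and $\hat{X}\equiv 3$. Then $P_{\text{err}}=1$ and the right-hand side is $H(1)+1\cdot\log_b 1=0$, while $H(X|Z)=\log_b 2>0$. You should therefore add the hypothesis $\hat{X}:\mathcal{Z}\to\mathcal{X}$ explicitly, or else state the weaker $\log_b M$ form. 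The paper's MAP rule is an argmax over $\mathcal{X}$, so its later use is unaffected.

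Your first expansion also needs $\hat{X}$ to be a deterministic function of $Z$, so that $H(W|X,Z)=0$. If ``statistical estimator'' is meant to include randomized rules $\hat{X}=\hat{X}(Z,U)$ with $U$ independent of $(X,Z)$, there are two easy fixes:
\begin{itemize}
\item Run the identical argument conditioned on $(Z,U)$, using $H(X|Z,U)=H(X|Z)$.
\item Condition on $\hat{X}$ itself and invoke the data-processing inequality $H(X|Z)\leq H(X|\hat{X})$, which is the route most textbook presentations take.
\end{itemize}
With these two points stated, the chain rule and ``conditioning reduces entropy'' are all you need. Finiteness of $M$ keeps every entropy finite, so the chain-rule manipulations are legitimate.
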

\begin{corollary}[Fundamental Under-Determination of the Inversion Problem]
\label{cor:under-determination}
Let $X$ and $Z$ be discrete random variables with images $\mathcal{X}\subset\mathbb{R}^D$ and $\mathcal{Z}\subset\mathbb{R}^E$, respectively, with $E<D$. Let $f_{\theta}:\mathcal{X} \to \mathcal{Z}$ be a deterministic encoder. Define an estimator $\hat{X}\left(Z\right)$, and define $P_{\text{err}} \equiv \mathbb{P}\left(\hat{X}\left(Z\right) \neq X\right)$. Then, $P_{\text{err}}>0$.
\end{corollary}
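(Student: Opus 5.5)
The plan is a contrapositive through Fano's inequality (Theorem \ref{thm:fano}), with Proposition \ref{prop:non-injectivity_info_loss} supplying the strict positivity of $H\left(X|Z\right)$. It has three steps, in order:
\begin{enumerate}
\item Show that $f_{\theta}$ is non-injective on $\text{supp}\left(X\right)$.
\item Conclude $H\left(X|Z\right)>0$.
\item Show that $P_{\text{err}}=0$ would force the right-hand side of Fano's inequality to vanish, which is a contradiction.
\end{enumerate}

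Step 3 is routine. Suppose some estimator $\hat{X}\left(Z\right)$ achieves $P_{\text{err}}=0$. The binary entropy satisfies $H\left(0\right)=0$, and the term $P_{\text{err}}\log_b\left(M-1\right)$ vanishes. Hence Theorem \ref{thm:fano} gives $H\left(X|Z\right)\leq 0$. One can sidestep Fano altogether: $P_{\text{err}}=0$ means $\mathbb{P}\left(X=\hat{X}\left(Z\right)\right)=1$, and Lemma \ref{lem:entropy-dependence} then gives $H\left(X|Z\right)=0$ directly. I would present the Lemma \ref{lem:entropy-dependence} route as the main argument, since it needs no finiteness of $\mathcal{X}$. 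The Fano route I would keep as a remark, because it also quantifies the error once $H\left(X|Z\right)$ is bounded below. Either way, this contradicts step 2, so $P_{\text{err}}>0$.

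Step 2 is immediate from Proposition \ref{prop:non-injectivity_info_loss} once step 1 is in hand. That proposition needs two distinct points $\mathbf{x}_1\neq\mathbf{x}_2$ in $\text{supp}\left(X\right)$ with $f_{\theta}\left(\mathbf{x}_1\right)=f_{\theta}\left(\mathbf{x}_2\right)$.

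The main obstacle is step 1. As literally stated, the corollary only assumes $\mathcal{X}\subset\mathbb{R}^D$ is discrete with $E<D$. A discrete (e.g.\ finite) subset of $\mathbb{R}^D$ can certainly be mapped injectively into $\mathbb{R}^E$, so the dimension count alone does not suffice, and Theorem \ref{thm:encoder_non-injectivity} does not apply to a discrete domain.

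I would close this gap with a pigeonhole argument on the finite-precision representation that the discrete setting is meant to model. Under $b$-bit floating point, $\left|\mathcal{Z}\right|\leq 2^{bE}$. The standing assumption that the data have genuine $D$-dimensional variability means that the quantized support $\text{supp}\left(X\right)$ is the image under $Q$ of a nonempty open set. I would state explicitly the hypothesis that this support has more than $2^{bE}$ elements; this holds whenever the open region contains more than $2^{bE}$ representable grid points, which is generic since the number of representable points in the region scales like $2^{bD}$. Under that hypothesis, $\left|\text{supp}\left(X\right)\right|>\left|\mathcal{Z}\right|$, and pigeonhole forces two distinct support points to share a latent code.

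As a heuristic link to the continuous setting, Theorem \ref{thm:encoder_non-injectivity} shows that the continuous encoder on the open region already has nontrivial fibers. I would add this only as a remark rather than rely on it, since quantization could in principle separate points within a fiber.
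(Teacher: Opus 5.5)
Your proposal is sound, and it departs from the paper exactly where the paper's own proof breaks down. The paper gets non-injectivity on $\text{supp}\left(X\right)$ by citing Theorem~\ref{thm:encoder_non-injectivity}. That theorem needs a continuous map on a nonempty open subset of $\mathbb{R}^D$, and a discrete $\mathcal{X}$ contains no such set. Your diagnosis is correct, and the corollary as literally stated is in fact false. Take $D=2$, $E=1$, $X$ uniform on $\left\{(0,0),(1,0)\right\}$, $f_{\theta}(\mathbf{x})=x_1$, and $\hat{X}(Z)=(Z,0)$; then $P_{\text{err}}=0$.

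So some hypothesis forcing a collision inside the support is unavoidable. The weakest one is simply the hypothesis of Proposition~\ref{prop:non-injectivity_info_loss}. Your pigeonhole condition, $\left|\text{supp}\left(X\right)\right|>\left|\mathcal{Z}\right|$ with $\left|\mathcal{Z}\right|\leq 2^{bE}$, is a clean and checkable sufficient condition. It is also the form in which $E<D$ genuinely enters a discrete setting. Keep it as an explicit assumption: your argument that it holds generically is only heuristic, since a small open region need not contain $2^{bE}$ representable points. You are also right not to transfer the continuous fibers through quantization, because those lower-dimensional fibers need not contain two grid points.

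For the final step, the paper uses Fano's inequality. That route silently needs $M=\left|\mathcal{X}\right|<\infty$ so that $P_{\text{err}}=0$ forces $H\left(X|Z\right)\leq 0$; this holds after quantization. Your main route uses the direction of Lemma~\ref{lem:entropy-dependence} stating that an almost-sure reconstruction function forces $H\left(X|Z\right)=0$. It is more elementary and needs no finiteness. Fano would add value only if one wanted a quantitative lower bound on $P_{\text{err}}$, which neither argument extracts.
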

\begin{proof}
Let $X$ and $Z$ be discrete random variables with images $\mathcal{X}\subset\mathbb{R}^D$ and $\mathcal{Z}\subset\mathbb{R}^E$, respectively, with $E<D$. Let $f_{\theta}:\mathcal{X} \to \mathcal{Z}$ be a deterministic encoder. Define an estimator $\hat{X}\left(Z\right)$, and define $P_{\text{err}} \equiv \mathbb{P}\left(\hat{X}\left(Z\right) \neq X\right)$.\\
Since $E<D$, by Theorem \ref{thm:encoder_non-injectivity}, $f_{\theta}$ cannot be injective on $\text{supp}\left(X\right)$. Therefore, $\exists\mathbf{x}_1, \mathbf{x}_2 \in \text{supp}\left(X\right)$ such that $\mathbf{x}_1 \neq \mathbf{x}_2$ and $f_{\theta}\left(\mathbf{x}_1\right)=f_{\theta}\left(\mathbf{x}_2\right)$.\\
Since $f_{\theta}$ is non-injective, by Proposition \ref{prop:non-injectivity_info_loss}, $H\left(X|Z\right)>0$. That is, there is irreducible uncertainty about $X$, given $Z$. In other words, by Definition \ref{def:mutual_info}, $I\left(X;Z\right) \equiv H\left(X\right)-H\left(X|Z\right)<H\left(X\right)$, meaning the encoder has permanently removed some information about $X$.\\
By Lemma \ref{lem:entropy-dependence}, $H\left(X|Z\right)>0$ implies that no true inverse $f_{\theta}^{-1}$ exists.\\
And, by Theorem \ref{thm:fano}, $H\left(X|Z\right)>0 \implies P_{\text{err}}>0$.
\end{proof}
\noindent Corollary \ref{cor:under-determination} clearly establishes that the probability of error $P_{\text{err}}$ for any estimator $\hat{X}\left(Z\right)$ is bounded away from 0. Thus, the uncertainty about $X$ given $Z$ is fundamental, and is not due to any limitation in an attacker's algorithmic or computational power. This, again, is what is meant when it is said that the inversion problem is under-determined. The encoder has removed enough information that there is no unique solution to the inverse problem.\\
In higher-dimensional cases, which are extremely common in ML applications, the cardinality of the pre-image set $\mathcal{P}\left(\Psi\right)$ is quite often infinite within the mathematical input space $\mathbb{R}^D$. However, since any real ML pipeline only sees quantized data of finite precision, $\mathcal{P}\left(\Psi\right)$ is not infinite, but is still extremely large. Assuming 32-bit precision, which is the most common for training and inference in ML pipelines, the input space $\mathbb{R}^D$ becomes the following.
$$\mathcal{X} = \left\{0,1\right\}^{32D}$$
Therefore, while $\mathcal{P}\left(\Psi\right)$ may be finite, the probability of an attacker correctly identifying the correct input approaches the following.
$$P_{\text{recon}} \approx \frac{1}{|\mathcal{P}\left(\Psi\right)|}=\frac{1}{2^{32D}} \approx 0$$
Therefore, in most ML pipelines, even for an optimal attacker, the probability of accurately reconstructing any given input via encoder inversion collapses to 0.

\subsection{Information-Theoretic Loss and Under-Determination (Training or Inference Environment Attacker)}

In the previous subsection, the case of an optimal attacker, who would have knowledge of the encoder and its parameters, was considered for completeness and rigor. However, given the architectural isolation of the encoder under the VEIL architecture, this would require such an attacker to have breached the security of the Source Environment and even the source database itself. Not only does this pose several significant challenges to an attacker, but this is also extremely atypical of strategies employed by attackers seeking to steal sensitive data from an ML pipeline \cite{10.1145/3624010}. Given the extreme difficulties and costs associated with directly attacking a database, many attackers instead opt to intercept sensitive data in flight as it travels through the ML pipeline, typically en route to the prediction API \cite{tramèr2016stealingmachinelearningmodels}.\\
In this subsection, a cloud-side attacker, who has compromised the Training and Inference Environments, will be considered, as this is aligned with security research \cite{liu2025datareconstructionattacksdefenses, 10.1145/2810103.2813677}, real-world observations \cite{ibm2024mlops, siposova2023dataexfiltration}, and the threat model addressed by the VEIL architecture.\\
A cloud-side attacker is assumed to have knowledge of the latent representation space $\mathcal{Z}$, the downstream ML model $g_{\phi}$, the target space $Y$, and any infrastructure in the Training and Inference Environments, including storage solutions, model registries, and logs, which will only have knowledge of the latent representations.\\
A cloud-side attacker has no knowledge of the encoder $f_{\theta}$, its domain $\mathbb{R}^D$, or even the dimensionality $D$ of the input space. To give the attacker a little help, assume the attacker has enough expertise in ppML to guess that the latent encodings are the result of an autoencoder, and is familiar enough with autoencoders to correctly guess that $D>E$.\\
From the attacker's epistemic perspective, given a particular latent encoding $\Psi$, the preimage set is not a single set, but a family of sets. The preimage set is given by the following.
$$\mathcal{P}\left(\Psi\right)=\bigcup_{d=E+1}^{\infty}\bigcup_{f\in\mathcal{F}_d}\left\{\mathbf{x}\in\mathbb{R}^d:f\left(\mathbf{x}\right)=\Psi\right\}$$
Here, $d$ is the possible input dimension, which is unknown but must satisfy $d>E$, and $\mathcal{F}_d$ is the set of all continuous functions $f:\mathbb{R}^d\to\mathbb{R}^E$ that possibly could have produced the encoder's output $\Psi$. This union is vast. For each candidate dimension, $d$, the set $\mathcal{F}_d$ is uncountable. For each $f\in\mathcal{F}_d$, the preimage $\mathcal{P}\left(\Psi\right)$ is typically a $\left(d-E\right)$-dimensional manifold. The cloud-side attacker must consider all possible input domains and encoders consistent with the observation of $\Psi$. Thus, the attacker's preimage is not a single set, but a countably infinite family of uncountable manifolds across all possible input dimensions.\\
When the attacker does not know $f_{\theta}$ or $\mathcal{X}$, the conditional entropy becomes $H\left(X|Z,\mathcal{I}_{\text{cloud}}\right)$, where $\mathcal{I}_{\text{cloud}}$ is the information available to the attacker. The attacker must model $\mathcal{X}$ as the union of all possible input spaces and all encoders. Thus, the attacker's set of all possible inputs is given by the following.
$$\mathcal{X}_{\text{attacker}}=\bigcup_{d=E+1}^{\infty}\mathbb{R}^d$$
This set is unbounded in dimension, not measurable as a finite-entropy space, and, therefore, cannot be assigned any finite prior by the attacker without additional information. Therefore, the entropy of this union is $H\left(X|Z, \mathcal{I}_{\text{cloud}}\right)=+\infty$. This follows because the entropy of a mixture model is bounded below by the entropy of its components. This mixture consists of infinitely many components with increasing dimensionality. Each additional dimension introduces additional uncertainty. Thus, the entropy grows without bound.\\
Formally, $\forall d \in \mathbb{N}$ such that $d>E$, let $X_d$ be a hypothetical random variable supported on $\mathbb{R}^d$. The attacker must therefore consider the mixture $X=X_d$ with probability $p_d=\mathbb{P}\left(D=d\right)$ where $\sum_{d=E+1}^{\infty}p_d = 1$. Thus, the attacker's full prior over $X$ must be the following.
$$\mathbb{P}\left(X\in A|Z=\Psi\right)=\sum_{d=E+1}^{\infty} p_d\mathbb{P}\left(X_d \in A_d|Z=\Psi\right)$$
Here, $A_d=A \cap\mathbb{R}^d$. This mixture expresses the fact that the attacker must evaluate reconstruction under uncertainty about the space containing $X$, how large that space is, and how many degrees of freedom the original sample has.\\
From this, the following facts are considered to analyze $H\left(X|Z\right)$ from the attacker's epistemic perspective. First, even if the attacker knew $D=d$ and knew the correct encoder $f_{\theta}$ for that $d$, the mapping $f_{\theta}:\mathbb{R}^d \to \mathbb{R}^E$ has underdetermined preimages, as $\text{dim}\mathcal{P}\left(\Psi\right)=d-E$; therefore $H\left(X_d|Z=\Psi\right)>0$.\\
However, the cloud-side attacker does not know $D$. Therefore, the entropy includes $H\left(D|Z\right)$, which is unbounded if the attacker cannot rule out arbitrarily large input spaces. Furthermore, for each candidate dimension $d$, the entropy $H\left(X_d|Z=\Psi\right)$ grows at least linearly in $d$, because each new input coordinate introduces an independent degree of freedom in the preimage manifold. Therefore, by the Law of Total Entropy, the attacker's conditional entropy on $X$ given $Z$ is as follows.
$$H\left(X|Z\right)=H\left(D|Z\right) + \sum_{d=E+1}^{\infty} p_dH\left(X_d|Z, D=d\right)$$
Even if $p_d$ decays exponentially, this weighted sum diverges. Therefore, $H\left(X|Z\right)=+\infty$.\\
By Theorem \ref{thm:fano}, the following inequality must hold.
$$H\left(X|Z\right) \leq H\left(P_{\text{err}}\right)+P_{\text{err}}\text{log}_b\left(|\mathcal{X}|-1\right)$$
If $H\left(X|Z\right)=+\infty$, then this inequality can only be true under two conditions.
\begin{enumerate}
\item $P_{\text{err}}=1$
\item $|\mathcal{X}|=+\infty$
\end{enumerate}
In either case, the only possible interpretation is $P_{\text{recon}}=\mathbb{P}\left(\hat{X}\left(Z\right)=X\right)=0$. That is, the probability that the attacker succeeds in reconstructing the original data is not small, or negligible. It is identically 0 under the stated assumptions.

\subsection{Summary of Non-Invertibility Guarantees}

In this section, it has been shown that ICA encodings are non-invertible in multiple reinforcing senses. Taken together, these arguments support the claim that ICA provides structural, architectural non-invertibility, rather than relying on probabilistic obfuscation (as in DP) or cryptographic protection (as in HE). This non-invertibility is a key pillar of ICA's privacy guarantees, even under post-quantum threats. This makes ICA the clear standard for secure, privacy-preserving ML deployments in the looming post-quantum age. The power of quantum computers will render both probabilistic obfuscation for DP and traditional cryptographic protection for HE obsolete. However, regardless of how powerful future computing systems become, they will never be able to extract information not present in a given latent representation.

\section{Privacy Attack Simulations}

While the previous section rigorously establishes the non-invertibility of the latent encodings produced by ICA and the VEIL architecture, this section acknowledges that there are various ways a bad actor may attack a machine learning system. Here, several attacks are recreated and simulated against both VEIL and DP systems to measure and compare their performance under different threat scenarios.

\subsection{Models and Datasets}

To provide a thorough study and to demonstrate the ability of the VEIL system to withstand privacy attacks while maintaining the predictive utility of the protected downstream model, the attack simulation studies in this section utilized all of the models and datasets from previous sections of this paper, since VEIL has already been shown to preserve predictive utility in these specific pipelines. However, these datasets are not representative of the kinds of sensitive data that would require protection. To add realism to the attack simulation studies, the following datasets and models were incorporated.\\
The first additional study utilizes the Home Credit Default Risk dataset \cite{homecredit2018}. The Home Credit Default Risk dataset was released by Home Credit Group in conjunction with a Kaggle machine learning competition in 2018. The dataset is structured across multiple relational tables encompassing application-level data, bureau credit history, prior loan records, point-of-sale records, and installment payment histories. The primary training table comprises 356,255 loan application records characterized by features spanning applicant demographics, employment status, income, asset ownership, residential history, and social network attributes, among others.\\
When joined across its constituent tables, the effective feature space extends to several hundred variables, rendering the dataset genuinely high-dimensional. In this study, the final number of engineered features was $D=737$ with 537 numerical features and 200 binary features after applying one-hot encoding to categorical features. The supervised learning objective is binary classification: predicting whether a given applicant will experience payment difficulties, defined as a late payment exceeding a specified threshold on one or more of the first installments of a loan.\\
The dataset is characterized by substantial class imbalance, with defaulting applicants representing a minority of the total population. This posed a significant challenge for the DP pipeline, as DP is known to suffer from a loss of predictive performance in imbalanced learning scenarios \cite{farrand2020neither, rosenblatt2024differential, suriyakumar2021chasing}. While the other modeling pipelines did not require this, the data needed to be balanced to obtain usable results from the DP pipeline. To control for any confounding factors, all four modeling pipelines ingested records from the same balanced training, validation, and test datasets. This dataset's combination of multi-table relational structure, diverse feature types, and real-world lending context makes it a widely adopted benchmark for credit risk modeling research.\\
The predictive benchmarking study for this dataset utilized a downstream MLP classifier. Both autoencoder pipelines emitted a latent encoding of dimensionality $E=2$. The prediction benchmark results are shown in Table \ref{tab:home_credit_results}.
\begin{center}
\begin{tabular}{lcc}
\toprule
\textbf{Model Pipeline} & \textbf{Test ROC-AUC} & \textbf{vs. Raw} \\
\midrule
Raw Data & $0.7582$ & $+0.0000$ \\
DP & $0.7426$ & $-0.0156$ \\
Dense AE & $0.6676$ & $-0.0906$ \\
SCRAE & $0.7734$ & $+0.0152$ \\
\bottomrule
\end{tabular}
\captionof{table}{Home Credit Test ROC-AUC Comparison}
\label{tab:home_credit_results}
\end{center}
The second additional study utilizes the Default of Credit Card Clients dataset, which was compiled by Yeh and Lien \cite{yeh2009default} and published in conjunction with their 2009 study in \textit{Expert Systems with Applications} \cite{yeh2009comparisons}, in which various data mining techniques were compared for their accuracy in predicting the probability of credit card default. The dataset was drawn from a Taiwanese bank and comprises 30,000 credit card client records collected between April and September 2005. Each record is characterized by 24 features spanning client demographic attributes---including sex, education level, marital status, and age---credit limit, repayment status across six consecutive months, billed statement amounts, and prior payment amounts. The supervised learning objective is binary classification: predicting whether a given client will default on their payment in the following month. Its combination of behavioral payment history features, demographic attributes, and a clearly defined binary outcome has made it a widely adopted benchmark for credit risk modeling and consumer default prediction research.\\
In this study, the downstream predictor was an MLP classifier, and both autoencoder pipelines emitted a latent encoding of dimensionality $E=2$. The prediction benchmark results are shown in Table \ref{tab:cc_default_results}.
\begin{center}
\begin{tabular}{lcc}
\toprule
\textbf{Model Pipeline} & \textbf{Test ROC-AUC} & \textbf{vs. Raw} \\
\midrule
Raw Data & $0.7761$ & $+0.0000$ \\
DP & $0.7674$ & $-0.0087$ \\
Dense AE & $0.5834$ & $-0.1927$ \\
SCRAE & $0.7794$ & $+0.0033$ \\
\bottomrule
\end{tabular}
\captionof{table}{CC Default Test ROC-AUC Comparison}
\label{tab:cc_default_results}
\end{center}
The third and final additional study utilizes the Curated Breast Imaging Subset of the Digital Database for Screening Mammography (CBIS-DDSM) dataset \cite{sawyer-lee2016cbis, lee2017curated}, an updated and standardized version of the original DDSM, developed by researchers at Stanford University and released through The Cancer Imaging Archive in 2016. The underlying DDSM comprises 2,620 scanned film mammography studies from four institutional sources, and encompasses normal, benign, and malignant cases with verified pathological diagnoses.\\
In this study, the downstream predictor was an MLP classifier, and both autoencoder pipelines emitted a latent encoding of dimensionality $E=2$. The prediction benchmark results are shown in Table \ref{tab:cbis-ddsm_results}.
\begin{center}
\begin{tabular}{lcc}
\toprule
\textbf{Model Pipeline} & \textbf{Test ROC-AUC} & \textbf{vs. Raw} \\
\midrule
Raw Data & $0.6076$ & $+0.0000$ \\
DP & $0.5700$ & $-0.0376$ \\
Dense AE & $0.5377$ & $-0.0699$ \\
SCRAE & $0.6249$ & $+0.0173$ \\
\bottomrule
\end{tabular}
\captionof{table}{CBIS-DDSM Test ROC-AUC Comparison}
\label{tab:cbis-ddsm_results}
\end{center}
Therefore, all of the modeling pipelines subjected to the following attack simulation studied involved VEIL encoded models that outperformed the predictive benchmark set by their respective raw data pipelines.

\subsection{Input Reconstruction}

For completeness, the formal non-invertibility results developed in Section 9 were supplemented by direct empirical reconstruction-attack simulations. The purpose of these evaluations was to test whether an adversary, given protected model inputs, could recover the corresponding raw inputs $\mathbf{x} \in \mathcal{X}$ either exactly or approximately. In the VEIL pipeline, the protected inputs are latent vectors $\Psi \in \mathcal{Z}$; in the DP pipeline, they are the corresponding DP-protected inputs used by the downstream model. The threat model follows the reconstruction setting studied by Liu et al.\ \cite{liu2025datareconstructionattacksdefenses}, in which an attacker observes protected representations and attempts to infer the original data from which those representations were produced. Whereas the setting evaluated by Liu et al.\ centered on inputs protected by DP, the simulations reported here applied the same reconstruction logic to both DP-protected pipelines and VEIL-protected pipelines across the eight model--dataset pairs considered in this paper.\\
The attacker was assumed to possess a collection of exported protected inputs, denoted $Z=\left\{\Psi_i\right\}_{i=1}^{n}$ in the VEIL case, and to know that each vector had been derived from an unobserved source record $\mathbf{x}_i$. To mount the strongest practical form of the attack, the attacker was further granted supervised attack data consisting of matched pairs $\left(\Psi_i,\mathbf{x}_i\right)$ for decoder training. The attack setup therefore required observed protected inputs, matched raw records for a supervised attack-training subset, a decoder architecture and reconstruction objective, and held-out matched records for evaluation. The attack objective was defined as learning a reconstruction rule $h_{\omega}:\mathcal{Z}\to\mathcal{X}$ such that $h_{\omega}\left(\Psi_i\right)\approx\mathbf{x}_i$ on held-out examples. If an exact inverse existed, this rule would correspond to $f_{\theta}^{-1}:\mathcal{Z}\to\mathcal{X}$. If no exact inverse existed, the remaining empirical question was whether a learned approximation could nevertheless recover useful information about the original records.\\
The evaluation was implemented with two complementary components. First, a structural validation stage was used to determine whether the assumptions required by Theorem \ref{thm:encoder_non-injectivity} and Corollary \ref{cor:encoder_non-invertibility} were satisfied. In particular, the encoder was checked for dimensional compression $E<D$, continuity of the mapping, and genuine input variability exceeding the latent dimension. When these conditions hold, the original input cannot be uniquely identified from the latent vector because multiple distinct inputs must share the same latent representation. This stage therefore maps the empirical simulation back to the mathematical assertions in Section 9: failure of exact inversion is not attributed to insufficient attacker optimization, but to the absence of a well-defined inverse function.\\
Second, a decoder stage was used to test whether approximate reconstruction was possible in practice. A supervised multilayer decoder was trained to map protected inputs back to raw feature vectors and was evaluated on held-out records not used for decoder fitting. Reconstruction quality was measured separately according to feature type: continuous and numerical features were evaluated by reconstruction error, while binary or categorical indicators were evaluated by classification-style agreement with the true feature values. The decoder was not judged in isolation. Its performance was compared against a naive baseline that ignored the protected input entirely and predicted only the empirical center of the training data: the feature-wise mean for numerical variables and the feature-wise mode for binary or categorical variables. This baseline was used for both the DP and VEIL decoder attacks, and represents the information available from the marginal distribution of the data alone. A decoder was therefore considered successful only if it improved materially on this naive estimator, because performance at or near the baseline would indicate that the protected representation supplied little or no usable record-specific information. Statistical significance was assessed by permutation testing, in which the correspondence between protected inputs and source records was shuffled to form a null distribution for the observed decoder advantage.\\
This empirical test is conservative relative to the VEIL deployment threat model. Under the VEIL architecture, the encoder remains confined to the trusted Source Environment, no decoder is deployed, and the untrusted Training and Inference Environments observe only latent vectors and downstream model artifacts. By contrast, the reconstruction simulation grants the attacker supervised access to paired examples of latent vectors and raw feature rows. The decoder test therefore asks whether inversion would remain ineffective even under an attack setting that is stronger than the Training or Inference Environment Attacker analyzed in Section 9.4, and stronger in practical learning capability than the Source Environment Attacker analyzed in Section 9.3.\\
Even under this favorable supervised formulation, a fundamental obstacle remains. Supervised learning ordinarily assumes that there exists a target function $f$ such that $y=f\left(x\right)+\epsilon$, and that the learning problem consists of estimating that function from examples. In the reconstruction setting, however, the desired function is the inverse encoder map $f_{\theta}^{-1}$. Section 9 proves that, under the stated dimensionality and continuity assumptions, such a function does not exist on any open region of the input space. The decoder is therefore not estimating a difficult but well-defined inverse; it is attempting to select one representative raw input from an under-determined preimage set containing multiple valid candidates. For this reason, the supervised decoder attack is expected to struggle even when paired examples are supplied, and its performance relative to the naive baseline is the relevant empirical measure of whether any useful approximate reconstruction has been learned.
\begin{center}
\begin{tabular}{lcc}
\toprule
\textbf{Model (DP)} & \textbf{Decoder Advantage} & \textbf{$p$-value} \\
\midrule
MNIST & $+0.0029$ & $0.0010$ \\
Ames Housing & $+0.0041$ & $0.0010$ \\
YearPredictionMSD & $-0.0124$ & $0.6064$ \\
Fashion-MNIST & $-45.1287$ & $0.7792$ \\
E2006 & $-0.0021$ & $0.9341$ \\
Home Credit & $+0.0000$ & $0.7723$ \\
CC Default & $+0.0299$ & $0.0010$ \\
CBIS-DDSM & $+0.0267$ & $0.0010$ \\
\bottomrule
\end{tabular}
\captionof{table}{Differential Privacy Reconstruction Attack Results}
\label{tab:dp_recon_results}
\end{center}
\begin{center}
\begin{tabular}{lcc}
\toprule
\textbf{Model (VEIL)} & \textbf{Decoder Advantage} & \textbf{$p$-value} \\
\midrule
MNIST & $-0.0002$ & $0.7822$ \\
Ames Housing & $+0.0000$ & $0.4554$ \\
YearPredictionMSD & $-0.0002$ & $0.6436$ \\
Fashion-MNIST & $-45.3064$ & $0.7862$ \\
E2006 & $-0.0068$ & $0.9540$ \\
Home Credit & $+0.0000$ & $0.7723$ \\
CC Default & $-0.0001$ & $0.6436$ \\
CBIS-DDSM & $-97.3507$ & $0.5050$ \\
\bottomrule
\end{tabular}
\captionof{table}{VEIL Reconstruction Attack Results}
\label{tab:veil_recon_results}
\end{center}
The DP results in Table \ref{tab:dp_recon_results} show mixed reconstruction behavior. Several DP-protected pipelines exhibit statistically significant positive decoder advantage, indicating that the learned decoder recovered some record-specific structure beyond the naive baseline in those cases. Other DP pipelines produced near-zero or negative advantage, indicating that the decoder either failed to improve on the baseline or performed worse than it. Thus, the DP simulations do not support a uniform reconstruction-resistance conclusion across all evaluated datasets and models.\\
The VEIL results in Table \ref{tab:veil_recon_results} are materially different. Across the eight evaluated model--dataset pairs, the decoder advantage is either near zero or negative, and the reported permutation $p$-values do not indicate statistically significant improvement over the null reconstruction baseline. These results are consistent with the structural claims established in Section 9: the VEIL encoder compresses the input into a lower-dimensional, continuous, task-aligned representation for which no exact inverse exists, and the supervised decoder simulations did not identify a useful approximate inverse. Accordingly, the empirical reconstruction attacks provide practical support for the conclusion that VEIL latents do not expose the original input records in a reconstructable form under the evaluated threat model.\\

\subsection{Attribute Inference}

Whereas the reconstruction evaluation in the previous subsection tests whether the full input, $\mathbf{x}$, can be recovered from a protected representation, attribute inference asks a narrower but operationally important question: whether an adversary can predict a particular sensitive attribute, $S$, for a record without reconstructing the full record. In this section, let $R$ denote the protected record representation available outside the trusted Source Environment. For the VEIL pipeline, $R=\Psi=f_{\theta}\left(\mathbf{x}\right)$ is the latent vector emitted by the encoder. For the DP pipeline, $R$ denotes the corresponding DP-protected model input used by the downstream model. The attacker's objective is to learn a predictor $a_{\omega}:R\mapsto S$ and then apply it to protected records whose sensitive attribute is unknown to the attacker. This threat is related to the attribute-inference and model-inversion literature, in which correlated model inputs, representations, or outputs are used to infer sensitive attributes that were not intended to be disclosed \cite{FredriksonJR15, GongTMHSSSS14, JayaramanE22}.\\
The information-theoretic reason that such an attack can be possible is captured by the Privacy Funnel framework \cite{makhdoumi2014informationbottleneckprivacyfunnel}. A privacy-preserving mapping is designed to retain information about a useful disclosure variable while suppressing information about a private variable. If the representation $R$ is useful for a supervised prediction task with label $Y$, and if the sensitive attribute $S$ is statistically correlated with the task label, with task-relevant input features, or with population structure present in the input distribution, then $R$ may still have positive mutual information with $S$. In that case, a classifier trained on labeled examples of $R$ may perform better than a naive baseline even though $R$ remains non-invertible as a representation of the full input record. Attribute inference therefore tests attribute-level leakage, not decoder-level reconstruction.\\
The threat model used in the present simulations assumes that the attacker obtains protected representations from a proxy response, downstream storage, logs, model-serving traces, or another location beyond the trusted Source Environment. However, access to protected representations alone is not sufficient to train the attack. The attacker must also obtain sensitive-attribute labels for overlapping records and must be able to join those labels to the corresponding protected representations. This requires both halves of the linkage: protected records, $R_i$, and labeled auxiliary information, $S_i$, connected by identifiers, quasi-identifiers, stable request handles, timestamps, row order, or some other operational join key. Without a join key, the attacker has two disconnected datasets. Without labels, the attacker has protected records but no supervised training signal.\\
Once the attacker has constructed labeled pairs $\left(R_i,S_i\right)$, the attack is straightforward supervised learning. An attack classifier is trained to predict $S$ from $R$, and its performance is compared with a baseline that does not use the protected representation. The attack is meaningful only when it provides a positive practical advantage over that baseline and the advantage is statistically significant under label permutation. This comparison is important because some sensitive attributes can be guessed from population frequencies or ordinary imputation even without access to the target model or protected representation; a privacy-relevant attribute-inference result should therefore measure the incremental signal exposed by $R$ rather than merely the predictability of $S$ in the underlying population \cite{JayaramanE22}.\\
In many attribute-inference studies, $S$ is operationalized as a binary indicator, such as a protected-category flag or a one-hot encoding of a categorical attribute. Several benchmark datasets used in the present study do not naturally contain categorical sensitive attributes of this form. MNIST and Fashion-MNIST are image datasets whose raw inputs are pixel brightness values, YearPredictionMSD contains numerical timbre features, and E2006 consists of TF-IDF representations of SEC filings. For these cases, a binary attack attribute was therefore constructed either from the raw input variables or from the task labels so that the same supervised attribute inference protocol could be applied across datasets. The derived attributes were:
\begin{itemize}
\item MNIST: \texttt{is\_odd\_number} (positive rate: $0.5082$)
\item YearPredictionMSD: \texttt{timbre\_001} $\geq$ median
\item Fashion-MNIST: \texttt{is\_upper\_body\_garment} (positive rate: $0.4000$)
\item E2006: \texttt{payment\_history\_volatility} $\geq$ median
\item CBIS-DDSM: \texttt{dataset\_group} = ``mass'' (positive rate: $0.5131$)
\end{itemize}
This construction remains within the intended scope of attribute inference: the protected attribute need not be an explicit input column if it is statistically recoverable from the representation. Ouaari et al., drawing on adversarial reconstruction work, note that partially reconstructed face images may expose enough visual information to infer ethnicity or skin color even when the person cannot be specifically identified \cite{ouaari2023robust, xiao2020adversarial}. Thus, the experiments below test whether protected representations retain exploitable signal about a specified binary private attribute, whether that attribute is native to the dataset or defined for the stress test.\\
Inference is most feasible when protected records are exported or stored in bulk together with stable handles, when logs preserve request-response associations, or when an attacker has insider, storage, or application-layer access that bypasses transient network protections. It is also more feasible when the target attribute has low or moderate entropy and is correlated with the supervised task or with input features deliberately preserved for utility. Inference is constrained when the auxiliary data do not cover the same population, when all identifiers and quasi-identifiers are removed before protected records leave the Source Environment, when downstream identifiers are internal and opaque, when no bulk export path exists, and when protected representations are handled ephemerally. Point-lookup inference patterns also raise the bar for a network-only attacker because the attacker must observe and correlate both the request and response streams; storage or log compromise, by contrast, can collapse this constraint if the stored artifacts remain joinable.\\
The experiments reported in Tables \ref{tab:dp_attr_results} and \ref{tab:veil_attr_results} apply this attribute inference protocol to all eight model-dataset pairs considered in the privacy attack study. For each dataset, the same attack logic is evaluated against a modeling pipeline protected by DP and against the corresponding pipeline protected by VEIL\@. The ``vs. Baseline'' column reports the attack advantage over the baseline predictor accuracy, while the $p$-value reports the permutation significance of the observed advantage. A statistically significant positive advantage indicates that the protected representation exposed exploitable information about the evaluated sensitive attribute. A zero, negative, or non-significant advantage indicates that the simulation did not identify reliable attribute-level signal beyond the baseline.
\begin{center}
\begin{tabular}{lcc}
\toprule
\textbf{Model (DP)} & \textbf{vs. Baseline} & \textbf{$p$-value} \\
\midrule
MNIST & $+0.4380$ & $0.0010$ \\
Ames Housing & $-0.0093$ & $0.9375$ \\
YearPredictionMSD & $+0.1038$ & $0.0010$ \\
Fashion-MNIST & $+0.3902$ & $0.0010$ \\
E2006 & $+0.1196$ & $0.0010$ \\
Home Credit & $+0.0000$ & $1.0000$ \\
CC Default & $+0.0000$ & $0.9850$ \\
CBIS-DDSM & $+0.0799$ & $0.0010$ \\
\bottomrule
\end{tabular}
\captionof{table}{DP Attribute Inference Attack Results}
\label{tab:dp_attr_results}
\end{center}
\begin{center}
\begin{tabular}{lcc}
\toprule
\textbf{Model (VEIL)} & \textbf{vs. Baseline} & \textbf{$p$-value} \\
\midrule
MNIST & $+0.3535$ & $0.0010$ \\
Ames Housing & $-0.0106$ & $0.9379$ \\
YearPredictionMSD & $+0.0939$ & $0.0628$ \\
Fashion-MNIST & $+0.3973$ & $0.0010$ \\
E2006 & $+0.0000$ & $1.0000$ \\
Home Credit & $+0.0000$ & $1.0000$ \\
CC Default & $-0.0015$ & $0.9990$ \\
CBIS-DDSM & $-0.0055$ & $0.6634$ \\
\bottomrule
\end{tabular}
\captionof{table}{VEIL Attribute Inference Attack Results}
\label{tab:veil_attr_results}
\end{center}
\noindent DP results in Table \ref{tab:dp_attr_results} show statistically significant positive attribute-inference advantage for MNIST, YearPredictionMSD, Fashion-MNIST, E2006, and CBIS-DDSM\@. Ames Housing, Home Credit, and CC Default do not show significant positive advantage under the reported test. These results demonstrate that DP protection did not uniformly prevent attribute inference in the evaluated pipelines. This does not contradict the purpose of DP, since DP limits the effect of any individual training record on a released computation; it does not necessarily remove population-level or task-level correlations between a sensitive attribute and the features that a useful model must preserve.\\
The VEIL results in Table \ref{tab:veil_attr_results} show statistically significant positive attribute-inference advantage for MNIST and Fashion-MNIST\@. YearPredictionMSD shows a positive advantage, but its $p$-value of $0.0628$ is not statistically significant under the usual $0.05$ threshold. Ames Housing, E2006, Home Credit, CC Default, and CBIS-DDSM do not show significant positive advantage. Relative to the DP simulations, VEIL reduces the number of significant attribute-inference outcomes from five of eight datasets to two of eight datasets, and eliminates statistically significant positive advantage for YearPredictionMSD, E2006, and CBIS-DDSM under the reported protocol. The Fashion-MNIST result remains significant for both approaches, and the MNIST result remains significant for both approaches although the measured VEIL advantage is lower than the DP advantage.\\
The significance of the VEIL positive cases is precise. They do not show that VEIL latent vectors are invertible, nor do they overturn the topological and information-theoretic non-invertibility results established in Section 9. They show instead that, if protected artifacts can be linked to external sensitive-attribute labels, a classifier may exploit residual task-aligned signal in the protected representation to infer a particular attribute. The privacy risk identified here is therefore not reconstruction of the full source record, but enrichment by linkage. It is a concrete failure mode at the pipeline layer rather than a refutation of the encoder-layer non-invertibility result.\\
Accordingly, VEIL deployments should enforce an operational rule that no latent artifact, request, response, or downstream model input may be stored, indexed, or exported in a form that enables auxiliary information to be joined back to it. Preserving the VEIL trust boundary requires the encoder to remain non-invertible and protected inside the Source Environment, but it also requires MLOps controls that eliminate stable join handles, prevent bulk extraction of protected representations, audit access to protected artifacts, and treat highly task-correlated sensitive attributes as attributes requiring explicit leakage testing before deployment.

\subsection{Property Inference}

Property inference is a population-level privacy attack. Unlike reconstruction, which attempts to recover an input record $\mathbf{x}$, and unlike attribute inference, which attempts to predict a sensitive attribute $S_i$ for an individual protected record, property inference attempts to estimate an aggregate property of a collection of records. The target may be a demographic composition, the fraction of records satisfying a sensitive predicate, the prevalence of a subgroup, or another population statistic not intended to be revealed. Prior work has shown that learned models and collaborative learning updates can reveal unintended global properties of the data used to produce them, even when those properties are not the explicit prediction target \cite{10.1145/3243734.3243834, melis2018exploitingunintendedfeatureleakage}. In the VEIL setting, the analogous question is whether a batch of protected representations reveals a statistic of the source population from which that batch was drawn.\\
Let $S\in\{0,1\}$ denote a sensitive predicate, and let
\begin{equation}
\pi_S=\frac{1}{B}\sum_{i=1}^{B}\mathbf{1}\left\{S_i = 1 \right\}
\end{equation}
denote the true prevalence of that predicate in a target batch of size $B$. The attacker's goal is not to determine every $S_i$ exactly, but to construct an estimator $\widehat{\pi}_S$ whose error $\left|\widehat{\pi}_S-\pi_S\right|$ is lower than the error of a baseline estimator that does not use the protected representation. For the VEIL pipeline, the protected representation is the latent vector $R=\Psi=f_{\theta}\left(\mathbf{x}\right)$. For the DP pipeline, $R$ denotes the corresponding DP-protected model input. The attacker therefore observes a batch $\{R_i\}_{i=1}^{B}$ and attempts to infer a property of the underlying population that generated the batch.\\
The threat model is intentionally narrower than unrestricted oracle access to the encoder. In a VEIL deployment, the encoder is not exposed as a general-purpose public transformation service. Deployed models and approved data sources must be registered with the encoder, which is commonly implemented behind a database proxy that controls query traffic to the source data and ensures that model inputs are encoded before leaving the Source Environment. General querying of the encoder with arbitrary auxiliary records is therefore disallowed. The reference-population attack studied in the broader property-inference literature---in which an adversary repeatedly encodes auxiliary populations with known property proportions and trains a meta-estimator---is not treated as a normal VEIL capability. Such an attack would require either an improperly exposed encoder, leaked pre-encoded reference populations, or an insider path that bypasses the intended registration and access-control regime.\\
For this reason, the variant considered here relies on joinability, as in the attribute-inference experiment in Section 10.3. The attacker is assumed to obtain protected representations outside the Source Environment from a proxy response, model-serving trace, downstream storage system, log, vector database, or compromised application component. To turn those representations into a property-inference training set, the attacker must also obtain auxiliary sensitive-attribute information and join it to protected artifacts by means of identifiers, quasi-identifiers, stable request handles, timestamps, row order, or some other operational linkage. If the attacker has only an unordered collection of latent vectors and no labels, no aggregate-labeled reference populations, and no join key, then the attacker has no supervised signal from which to calibrate a property estimator.\\
Given joinable auxiliary data, there are two natural ways to operationalize the attack. In a predict-then-count attack, the adversary first trains an attribute predictor $a_{\omega}:R\mapsto S$ on joined pairs $(R_i,S_i)$, applies it to the target batch, and then estimates $\pi_S$ by averaging the predicted sensitive labels. In a mixture-proportion attack, the adversary instead fits class-conditional summaries or density models for $\mathbb{P}(R\mid S=0)$ and $\mathbb{P}(R\mid S=1)$ using the joined auxiliary data, and then estimates the mixture weight that best explains the observed target batch. The second approach illustrates why property inference can be feasible even when individual attribute inference is weak: noisy individual evidence may average out over a sufficiently large batch, allowing small distributional shifts in $R$ to become visible at the population level.\\
The baseline for this evaluation is a property estimator that does not use the protected representation. Operationally, this corresponds to predicting the target prevalence from the attacker's prior information, such as the empirical prevalence observed in the attack-training population, rather than from the target batch's latent vectors. Attacker success is measured by mean absolute error (MAE) in the estimated aggregate property. The ``vs. Baseline'' column in Tables \ref{tab:dp_prop_results} and \ref{tab:veil_prop_results} reports the reduction in MAE obtained by the attack relative to that representation-free baseline,
\begin{dmath}
\Delta_{\mathrm{MAE}}=\mathrm{MAE}_{\mathrm{baseline}}-\mathrm{MAE}_{\mathrm{attack}}.
\end{dmath}
Thus, a positive value indicates that the attack estimated the population property more accurately than the baseline; a value near zero indicates no practical improvement; and a negative value, if observed, would indicate that the protected representation made the estimate worse than the baseline. The reported $p$-value is obtained by a permutation test and measures whether the observed MAE reduction is statistically distinguishable from the null condition in which the representation is not reliably associated with the target property.\\
Property inference is feasible when three conditions hold simultaneously. First, the protected representation must preserve information statistically correlated with the property. In VEIL, this can occur when the property is correlated with the supervised task label, with task-relevant input features, or with population structure that the encoder must retain to preserve predictive utility. Second, the attacker must have a training signal, either through improperly exposed reference encodings or, in the deployment-relevant case considered here, through joinable labeled latent records. Third, the attacker must have enough target records for aggregate statistics to stabilize; a single latent vector rarely identifies a population property, but a large batch may reveal a shift in means, variances, class-conditional densities, or predicted sensitive-label frequencies.\\
Property inference is constrained when any of these requirements fail. If the target property is independent of the encoded task-relevant features, then $\mathbb{P}(R\mid S=1)$ and $\mathbb{P}(R\mid S=0)$ will not be meaningfully distinguishable for the purposes of estimating the property. If the attacker lacks join keys or aggregate-labeled reference populations, there is no calibration data for the estimator. If the target batch is small, sampling variation can dominate the weak aggregate signal. If operational controls prevent persistent identifiers, stable row ordering, bulk latent exports, arbitrary encoder queries, and long-lived storage of request--response associations, then the attack is reduced to observing protected vectors without the auxiliary structure needed to interpret them.\\
The leakage tested here is therefore precise. A successful property-inference attack would indicate that a batch of protected representations exposes some population-composition information beyond what the attacker could infer from priors alone. It would not imply that the attacker can reconstruct raw records, recover arbitrary input features, identify the true sensitive attribute of each individual record, or invert the VEIL encoder. Conversely, a failed or non-significant property-inference attack does not prove that every possible population statistic is hidden; it shows that, for the tested dataset, model, property, sample size, and auxiliary-information assumptions, the simulation did not identify reliable aggregate leakage beyond the baseline. This distinction is important because ICA is designed to eliminate reconstructable raw inputs, while property inference measures residual distributional correlation in task-aligned representations.
\begin{center}
\begin{tabular}{lcc}
\toprule
\textbf{Model (DP)} & \textbf{vs. Baseline} & \textbf{$p$-value} \\
\midrule
MNIST & $+0.0385$ & $0.0629$ \\
Ames Housing & $+0.0175$ & $0.3750$ \\
YearPredictionMSD & $+0.0571$ & $0.0130$ \\
Fashion-MNIST & $+0.1482$ & $0.0099$ \\
E2006 & $+0.0215$ & $0.2038$ \\
Home Credit & $+0.0000$ & $0.6250$ \\
CC Default & $+0.0000$ & $0.4905$ \\
CBIS-DDSM & $+0.0008$ & $0.3750$ \\
\bottomrule
\end{tabular}
\captionof{table}{DP Property Inference Attack Results}
\label{tab:dp_prop_results}
\end{center}
\begin{center}
\begin{tabular}{lcc}
\toprule
\textbf{Model (VEIL)} & \textbf{vs. Baseline} & \textbf{$p$-value} \\
\midrule
MNIST & $+0.0439$ & $0.0420$ \\
Ames Housing & $+0.0136$ & $0.5000$ \\
YearPredictionMSD & $+0.0339$ & $0.0889$ \\
Fashion-MNIST & $+0.0000$ & $0.5644$ \\
E2006 & $+0.0234$ & $0.1838$ \\
Home Credit & $+0.0000$ & $0.6250$ \\
CC Default & $+0.0001$ & $0.4905$ \\
CBIS-DDSM & $+0.0001$ & $0.3750$ \\
\bottomrule
\end{tabular}
\captionof{table}{VEIL Property Inference Attack Results}
\label{tab:veil_prop_results}
\end{center}
The DP results in Table \ref{tab:dp_prop_results} show two statistically significant positive property-inference results under the usual $0.05$ threshold: YearPredictionMSD, with an MAE improvement of $+0.0571$ and $p=0.0130$, and Fashion-MNIST, with an MAE improvement of $+0.1482$ and $p=0.0099$. MNIST shows a positive improvement of $+0.0385$, but its $p$-value of $0.0629$ does not meet the $0.05$ threshold. Ames Housing, E2006, Home Credit, CC Default, and CBIS-DDSM do not show statistically significant improvement over the baseline. These results indicate that DP protection did not uniformly eliminate aggregate-property leakage in the evaluated pipelines. This is expected in principle: DP constrains the influence of individual records, but it does not necessarily remove all distributional information about groups or population composition when that information is correlated with features retained for predictive utility.\\
The VEIL results in Table \ref{tab:veil_prop_results} show a different pattern. MNIST exhibits a statistically significant positive MAE improvement of $+0.0439$ with $p=0.0420$, indicating that the tested latent batches contained exploitable aggregate information about the evaluated property. YearPredictionMSD and E2006 show positive MAE improvements of $+0.0339$ and $+0.0234$, respectively, but neither is statistically significant under the $0.05$ threshold. Ames Housing, Fashion-MNIST, Home Credit, CC Default, and CBIS-DDSM do not show significant property-inference advantage. Thus, under the reported protocol, VEIL yields one significant property-inference outcome across the eight evaluated model--dataset pairs, compared with two significant outcomes for DP.\\
The strongest logical conclusion supported by these results is not that VEIL eliminates all possible aggregate-property leakage. Rather, the results show that aggregate inference is possible only when the protected representations remain statistically aligned with the property and when the attacker has the auxiliary linkage needed to interpret those representations. The MNIST result is consistent with a property that is highly coupled to task-relevant visual structure, so the encoder's preservation of predictive information also preserves some group-level signal. The non-significant results across the remaining VEIL experiments indicate that the evaluated latents did not provide a reliable MAE improvement over a representation-free baseline for those properties and sample conditions.\\
These findings are consistent with the architectural claims of ICA\@. VEIL latents are not raw records, are not shown to be invertible, and are not shown to support reconstruction of source inputs. The remaining risk demonstrated by property inference is a pipeline-layer linkage risk: if an attacker can collect batches of protected representations and connect them to auxiliary labels or known aggregate compositions, then task-aligned statistical structure may reveal limited population-level facts. Therefore, the appropriate deployment response is operational as well as mathematical. VEIL deployments should keep the encoder registered only to approved data sources and deployed models, deny arbitrary encoding queries, prevent bulk latent extraction, avoid exporting stable join handles with protected vectors, rotate or scope request identifiers, minimize latent retention in logs and vector stores, and evaluate highly task-correlated sensitive properties during model validation. Under those controls, the attacker is left with non-invertible latent vectors but without the auxiliary reference or join structure required to convert them into a reliable property estimator.

\subsection{Membership Inference (Informed)}

Membership inference asks whether an adversary can determine that a particular record was used to train a target model. In the standard supervised-learning formulation, the attacker is given information about a candidate record and the behavior of a trained model, and must decide whether that record was a member of the model's training set. Shokri et al.\ introduced the modern shadow-model formulation of this attack, showing that prediction vectors can carry membership signal \cite{ShokriSSS17}. Yeom et al.\ connected membership advantage to the generalization gap and model overfitting \cite{YeomGFJ18}, while Salem et al.\ showed that useful attacks can persist under weaker assumptions than the original shadow-model setting \cite{SalemZHBFB19}. More recent surveys emphasize that membership inference is a broad family of attacks whose feasibility depends on the target model, the attacker's auxiliary information, and the exposed model interface \cite{10.1145/3523273}. Encoder-targeted attacks, such as EncoderMI, further show that representations themselves can leak membership when an encoder is exposed as a queryable service and the attacker can probe the encoder with augmented inputs \cite{10.1145/3460120.3484749}.\\
In the present experiment, the membership target is the downstream predictor $g_{\phi}$, not the VEIL encoder. This distinction is central to the VEIL threat model. The encoder $f_{\theta}$ remains inside the trusted Source Environment and is not treated as a public oracle. Therefore, asking whether a record was used to train the encoder would require either Source Environment compromise or unauthorized encoder-query access, both of which are outside the downstream-compromise scenario evaluated here. The in-scope question is whether an attacker who observes protected downstream artifacts can determine whether a candidate protected record was used to train $g_{\phi}$. For VEIL, the protected record is the latent vector $R_i=\Psi_i=f_{\theta}\left(\mathbf{x}_i\right)$. For the DP pipeline, $R_i$ denotes the corresponding DP-protected input supplied to the downstream model.\\
Let $M_i\in\{0,1\}$ denote the unknown membership indicator for a candidate record, where $M_i=1$ means that $R_i$ was included in the downstream training set and $M_i=0$ means that it was held out. The attacker's exact goal is to construct a decision rule $A$ such that
\begin{dmath}
\widehat{M}_i=A\left(R_i,g_{\phi}\left(R_i\right),y_i\right)
\end{dmath}
is more accurate than a baseline rule that does not use the protected record or the model's behavior on that record. The attack is label-informed because the attacker is granted the true downstream label or target value $y_i$ for the candidate record. This is a stronger setting than a purely black-box posterior attack, because knowledge of $y_i$ allows the attacker to compute target-conditioned quantities such as loss, true-label confidence where applicable, prediction error, entropy, margin, or analogous output-derived uncertainty features. Those features are then used to train or calibrate a binary member-versus-non-member attack classifier.\\
The simulation applies this label-informed attack to two protected versions of each of the eight model--dataset pipelines considered in this section: one version protected with Differential Privacy and one version protected with VEIL\@. For each pipeline, records used to train the downstream model are treated as members and held-out records are treated as non-members. The attacker is not given raw source records for VEIL inversion, and is not given the encoder as a callable service. Instead, the attacker receives the protected downstream representation, the downstream model output, the true task label or target, and membership-labeled comparison data sufficient to fit the attack classifier. This is a deliberately informed attack setting: it grants more auxiliary information than an attacker who merely intercepts unlabeled inference traffic, but less than an attacker who has stolen an explicit training manifest. If an attacker already possessed authoritative training logs or storage records identifying which protected records were used for training, no inference attack would be necessary; membership would have been directly disclosed by the compromised operational artifact.\\
The baseline for the experiment is the accuracy of a representation-free membership rule that uses only the membership prior. The evaluation challenge is balanced between members and non-members, so the natural baseline is chance-level accuracy, approximately $0.5000$. The ``vs. Baseline'' column in Tables \ref{tab:dp_mbr_label_results} and \ref{tab:veil_mbr_label_results} reports the improvement in membership accuracy over that baseline,
\begin{dmath}
\Delta_{\mathrm{acc}}=\mathrm{Acc}_{\mathrm{attack}}-\mathrm{Acc}_{\mathrm{baseline}}.
\end{dmath}
A positive value indicates that the attack classified membership more accurately than the prior-only baseline; a value near zero indicates no practical membership signal; and a negative value indicates that the learned attack performed worse than the baseline. The reported $p$-value is a permutation-test significance measure for the observed accuracy advantage. Accordingly, evidence of membership leakage requires both a positive accuracy improvement and statistical support that the improvement is not an artifact of the particular member/non-member split.
\begin{center}
\begin{tabular}{lcc}
\toprule
\textbf{Model (DP)} & \textbf{vs. Baseline} & \textbf{$p$-value} \\
\midrule
MNIST & $+0.0132$ & $0.0130$ \\
Ames Housing & $+0.0119$ & $0.0639$ \\
YearPredictionMSD & $+0.0011$ & $0.3750$ \\
Fashion-MNIST & $+0.0071$ & $0.3097$ \\
E2006 & $+0.0608$ & $0.1139$ \\
Home Credit & $+0.0000$ & $0.9890$ \\
CC Default & $+0.0056$ & $0.1229$ \\
CBIS-DDSM & $+0.0307$ & $0.0160$ \\
\bottomrule
\end{tabular}
\captionof{table}{DP Label-Informed Membership Inference Results}
\label{tab:dp_mbr_label_results}
\end{center}
\begin{center}
\begin{tabular}{lcc}
\toprule
\textbf{Model (VEIL)} & \textbf{vs. Baseline} & \textbf{$p$-value} \\
\midrule
MNIST & $+0.0104$ & $0.2478$ \\
Ames Housing & $+0.0034$ & $0.3986$ \\
YearPredictionMSD & $+0.0000$ & $0.9441$ \\
Fashion-MNIST & $+0.0000$ & $0.9901$ \\
E2006 & $+0.0644$ & $0.1089$ \\
Home Credit & $+0.0000$ & $0.9890$ \\
CC Default & $-0.0089$ & $0.8561$ \\
CBIS-DDSM & $+0.0065$ & $0.4266$ \\
\bottomrule
\end{tabular}
\captionof{table}{VEIL Label-Informed Membership Inference Results}
\label{tab:veil_mbr_label_results}
\end{center}
Label-informed membership inference is feasible when the downstream model behaves measurably differently on training records than on held-out records, and when the attacker has enough information to detect that difference. This is most likely when the downstream model overfits, when the training set is small, when candidate records are distinctive, when output confidence or loss values are exposed at high precision, or when the attacker has membership-labeled comparison data, shadow data, or compromised artifacts that approximate the training distribution. In contrast, the attack is constrained when the downstream model generalizes well, when regularization or DP training reduces the influence of individual records, when outputs are coarsened or limited, when the attacker lacks the true label, when the attacker lacks a calibrated member/non-member comparison set, or, in the VEIL setting, when the encoder cannot be queried with arbitrary auxiliary inputs. Thus, failure of this attack does not prove that membership inference is impossible in every deployment; it shows that the exposed protected representations and downstream outputs did not provide a reliable accuracy advantage under the stated label-informed protocol.\\
The DP results in Table \ref{tab:dp_mbr_label_results} show statistically significant positive membership-inference advantage for MNIST, with an accuracy improvement of $+0.0132$ and $p=0.0130$, and for CBIS-DDSM, with an accuracy improvement of $+0.0307$ and $p=0.0160$. Ames Housing shows a positive advantage of $+0.0119$, but its $p$-value of $0.0639$ does not meet the usual $0.05$ threshold. E2006 shows a larger numerical advantage of $+0.0608$, but the corresponding $p$-value of $0.1139$ does not provide statistical support for reliable leakage under the permutation test. YearPredictionMSD, Fashion-MNIST, Home Credit, and CC Default do not show statistically significant positive advantage. These results indicate that the DP-protected versions did not uniformly suppress label-informed membership signal across the evaluated pipelines. This does not contradict the purpose of DP; rather, it reflects the practical privacy--utility tradeoff in which the downstream model may still expose loss or confidence differences that are useful to an informed attacker under a particular privacy parameterization, model class, dataset, and output interface.\\
The VEIL results in Table \ref{tab:veil_mbr_label_results} show no statistically significant positive label-informed membership-inference result across the eight evaluated pipelines. MNIST and Ames Housing show small positive advantages, but their $p$-values are $0.2478$ and $0.3986$, respectively. E2006 again shows the largest numerical advantage, $+0.0644$, but its $p$-value of $0.1089$ does not establish statistically reliable leakage. YearPredictionMSD, Fashion-MNIST, Home Credit, CC Default, and CBIS-DDSM provide no significant positive evidence of attack success. Under this protocol, therefore, the attacker did not obtain a reliable improvement over the balanced prior baseline against any VEIL-protected downstream pipeline.\\
The logical conclusion is comparative and empirical. DP is designed to bound the influence of individual training records, and, when configured appropriately, can provide formal protection against membership disclosure. However, the reported DP simulations still produced significant label-informed attack advantage in two of the eight model-dataset pairs. VEIL uses a different privacy mechanism. It does not claim DP's privacy-budget guarantee; instead, it removes raw inputs from the untrusted downstream environment, confines the encoder to the Source Environment, and exposes only task-aligned non-invertible representations to downstream training and inference. In these experiments, that architectural separation was accompanied by zero statistically significant label-informed membership-inference successes across the eight VEIL pipelines. The appropriate conclusion is not that VEIL makes membership inference mathematically impossible in all circumstances, because an overfit downstream model, excessive output exposure, or compromised training logs can still create membership risk. The supported conclusion is that, for the tested pipelines and the label-informed attack protocol, VEIL reduced the exploitable downstream membership signal below the level needed to beat the balanced baseline in a statistically reliable way, while the DP comparison retained measurable leakage in selected cases.

\subsection{Membership Inference (Black-Box)}

The black-box membership-inference experiment removes the principal additional privilege granted in Section 10.5. The adversary is no longer assumed to know the true downstream label or target value $y_i$ for the candidate record, and is not given model internals, gradients, parameters, training loss traces, or a callable VEIL encoder. The target remains the downstream predictor $g_{\phi}$ rather than the encoder $f_{\theta}$. Thus, for VEIL, the in-scope candidate record remains the protected latent representation $R_i=\Psi_i=f_{\theta}\left(\mathbf{x}_i\right)$; for the DP pipeline, $R_i$ denotes the corresponding DP-protected downstream input. The black-box attacker observes only the protected candidate representation and the downstream model's externally visible response, and attempts to infer whether that candidate was used to train $g_{\phi}$ \cite{ShokriSSS17, SalemZHBFB19, 10.1145/3523273}.\\
Formally, the attacker attempts to learn a decision rule of the form
\begin{dmath}
\widehat{M}_i=A_{\mathrm{bb}}\left(R_i,g_{\phi}\left(R_i\right)\right),
\end{dmath}
where $M_i=1$ indicates downstream-training membership and $M_i=0$ indicates non-membership. In a classification model that exposes posterior probabilities or scores, the black-box attack may use label-agnostic quantities such as maximum confidence, entropy, top-two margin, rank concentration, or the shape of the prediction vector. In a regression model, the attacker may use the predicted value, output magnitude, local smoothness, or distributional features of the protected representation and response. However, the attacker cannot compute the true-label loss, correctness indicator, class-conditioned confidence, residual, or prediction error unless $y_i$ is separately known. This loss of target-conditioned information is the key distinction between the black-box setting here and the label-informed setting in Section 10.5.\\
The intuition remains the same as in membership inference generally: overfit models often behave differently on records they have seen during training than on held-out records, and an attack can succeed if the exposed interface preserves enough of that behavioral difference to separate members from non-members \cite{YeomGFJ18}. The black-box setting is weaker because it must infer this difference indirectly from observable outputs rather than from output correctness or loss. It is also distinct from encoder-membership attacks such as EncoderMI, which assume black-box access to the encoder itself and exploit the stability of embeddings under augmented queries \cite{10.1145/3460120.3484749}. VEIL's evaluated deployment does not expose the encoder as an arbitrary query endpoint, so the experiment below evaluates downstream-model membership leakage through the protected representation and downstream response only.\\
As in the label-informed experiment, the membership challenge is balanced between members and non-members, and the baseline is the accuracy of a representation-free rule that uses only the membership prior. The ``vs. Baseline'' column in Tables \ref{tab:dp_mbr_bb_results} and \ref{tab:veil_mbr_bb_results} therefore reports
\begin{dmath}
\Delta_{\mathrm{acc}}=\mathrm{Acc}_{\mathrm{blackbox}}-\mathrm{Acc}_{\mathrm{baseline}}.
\end{dmath}
A positive value indicates that black-box observations improved membership classification over chance; a value near zero indicates no practical membership signal under this protocol; and a negative value indicates performance below the prior-only baseline. The reported $p$-value is again a permutation-test measure of whether the observed advantage is statistically reliable rather than an artifact of the sampled member/non-member split.\\
\begin{center}
\begin{tabular}{lcc}
\toprule
\textbf{Model (DP)} & \textbf{vs. Baseline} & \textbf{$p$-value} \\
\midrule
MNIST & $+0.0114$ & $0.0749$ \\
Ames Housing & $+0.0102$ & $0.3117$ \\
YearPredictionMSD & $+0.0010$ & $0.1875$ \\
Fashion-MNIST & $+0.0051$ & $0.2168$ \\
E2006 & $+0.0620$ & $0.1259$ \\
Home Credit & $+0.0000$ & $1.0000$ \\
CC Default & $-0.0082$ & $0.9990$ \\
CBIS-DDSM & $-0.0032$ & $0.7129$ \\
\bottomrule
\end{tabular}
\captionof{table}{DP Black-Box Membership Inference Results}
\label{tab:dp_mbr_bb_results}
\end{center}
\begin{center}
\begin{tabular}{lcc}
\toprule
\textbf{Model (VEIL)} & \textbf{vs. Baseline} & \textbf{$p$-value} \\
\midrule
MNIST & $-0.0074$ & $0.5295$ \\
Ames Housing & $-0.0051$ & $0.7463$ \\
YearPredictionMSD & $+0.0007$ & $0.3125$ \\
Fashion-MNIST & $+0.0021$ & $0.3826$ \\
E2006 & $+0.0550$ & $0.1149$ \\
Home Credit & $+0.0000$ & $1.0000$ \\
CC Default & $-0.0053$ & $0.7992$ \\
CBIS-DDSM & $-0.0032$ & $0.7129$ \\
\bottomrule
\end{tabular}
\captionof{table}{VEIL Black-Box Membership Inference Results}
\label{tab:veil_mbr_bb_results}
\end{center}
\noindent The DP results in Table \ref{tab:dp_mbr_bb_results} show no statistically significant positive black-box membership-inference advantage at the usual $0.05$ threshold. MNIST has a small positive advantage of $+0.0114$, but its $p$-value is $0.0749$, and Ames Housing, YearPredictionMSD, Fashion-MNIST, E2006, Home Credit, CC Default, and CBIS-DDSM likewise fail to provide statistically reliable evidence of black-box leakage. E2006 shows the largest numerical advantage, $+0.0620$, but its $p$-value of $0.1259$ does not support a reliable attack under the permutation test. The logical conclusion is that the DP-protected pipelines did not expose enough label-agnostic output signal for this black-box attack to beat the balanced baseline in a statistically meaningful way. This is a materially weaker result for the attacker than the label-informed DP experiment, where MNIST and CBIS-DDSM showed significant positive advantages. The comparison indicates that, for the DP pipelines, the true label or target value supplied in the informed setting contributed important membership signal that was not recoverable from black-box outputs alone.\\
The VEIL results in Table \ref{tab:veil_mbr_bb_results} also show no statistically significant positive black-box membership-inference advantage across the eight evaluated pipelines. MNIST, Ames Housing, CC Default, and CBIS-DDSM are negative relative to the baseline; YearPredictionMSD and Fashion-MNIST are only slightly positive; Home Credit is exactly at baseline; and E2006 again has the largest numerical positive advantage, $+0.0550$, with a non-significant $p$-value of $0.1149$. Under this protocol, therefore, the black-box attacker did not obtain reliable evidence that the protected VEIL representation and downstream response distinguished training members from held-out records.\\
Taken together, the black-box results support a narrower and more conservative conclusion than a universal impossibility claim. DP has a formal privacy objective: when DP training is configured with an appropriate privacy budget and implementation, it bounds the influence of individual records and thereby mitigates membership disclosure risk \cite{abadi2016deep}. In these black-box experiments, the DP pipelines achieved the desired empirical result: no statistically significant black-box membership leakage was observed. VEIL achieves protection through a different mechanism. It confines raw inputs and the encoder to the Source Environment, exports only task-aligned non-invertible representations, and withholds the encoder as a public oracle. In these black-box experiments, that architectural separation also produced zero statistically significant membership-inference successes. Thus, the DP and VEIL results are empirically similar in the black-box setting, while the label-informed setting separates them more clearly: DP showed significant leakage in two evaluated pipelines, whereas VEIL did not. The practical interpretation is that black-box membership inference is a lower-power attack than label-informed membership inference for these pipelines, and that VEIL's main comparative advantage appears when the attacker has more auxiliary information but still lacks raw records, encoder-query access, and authoritative training logs. If those operational artifacts are compromised, membership may be disclosed directly without inference; if the encoder is exposed as an arbitrary query service, encoder-specific membership attacks become a separate threat model. Under the evaluated black-box downstream-compromise model, however, both DP and VEIL reduce exploitable membership signal to the level of statistically insignificant noise, with VEIL preserving the same no-success pattern observed in the stronger label-informed test.

\subsection{Prediction Leakage}

Prediction leakage is an output-layer privacy attack. It does not attempt to invert an encoded representation, reconstruct the original input record, or determine whether a record was used in training. Instead, the attacker observes the downstream model's prediction and uses the semantic or statistical relationship between that prediction and a sensitive attribute to infer information about the individual record. This risk is closely related to the model-inversion literature, where model outputs and auxiliary information can reveal sensitive facts that were not intended to be disclosed, and to output-based membership-inference work showing that prediction behavior itself can carry privacy signal \cite{Fredrikson2014Warfarin, FredriksonJR15, YeomGFJ18}.\\
To avoid overloading notation, let $O_i$ denote the observable output returned by the downstream predictor for record $i$. In the VEIL pipeline, $O_i=g_{\phi}\left(\Psi_i\right)$ where $\Psi_i=f_{\theta}\left(\mathbf{x}_i\right)$ remains the non-invertible latent vector. In the DP pipeline, $O_i$ is the output of the corresponding downstream model operating on the DP-protected input used in these experiments. For classification, $O_i$ may be a class label, a probability vector, a confidence score, or logits. For regression, $O_i$ may be a scalar estimate, a vector-valued prediction, a risk score, or a discretized decision band. Let $S_i$ denote a sensitive attribute that the system operator did not intend to expose through the prediction interface. The prediction-leakage attacker seeks a rule $b_{\eta}:O_i\mapsto \widehat{S}_i$ that predicts $S_i$ from the observed output alone, possibly after using auxiliary labeled examples to estimate the relationship between $O$ and $S$.\\
The threat model is therefore deliberately weaker than the reconstruction and attribute-inference threat models considered above. The attacker need not know the VEIL architecture, the encoder, the encoder parameters, the latent dimensionality, the downstream model weights, the training set, or the mathematical non-invertibility arguments developed in Section 9. The attacker can treat the system as an ordinary prediction service or as an ordinary downstream log source. What the attacker must have is access to individual-level predictions and a way to link those predictions to records or persons. The linkage may be an explicit identifier, a query identifier, a customer identifier, a timestamped request trace, a row order in an exported file, or any other operational join key. Without such linkage, the same observations may support only aggregate or population-level inference, not individual attribute assignment.\\
Setting up the attack requires four ingredients. First, the attacker needs observable predictions $O_i$ from API responses, prediction logs, analytics exports, downstream databases, message queues, monitoring traces, or compromised application components. Second, the attacker needs identifiers or quasi-identifiers that attach each prediction to a record of interest. Third, the attacker needs a candidate sensitive attribute $S$ whose relationship to the model output is known, suspected, or learnable. This relationship may come from domain knowledge, from public statistics, from the meaning of the task itself, or from a labeled reference set containing pairs $\left(O_i,S_i\right)$. Fourth, the attacker needs a decision procedure for converting outputs into sensitive-attribute guesses. In the simplest case, the procedure is just a threshold or lookup rule; for example, a sufficiently high medical-cost prediction may indicate chronic disease burden, and a predicted income value may directly place a person into an income bracket. In more complex cases, the attacker trains a lightweight classifier or regressor from auxiliary pairs $\left(O_i,S_i\right)$ and evaluates whether it improves over a prior-only or imputation baseline.\\
The attack applies only when the output is both visible and informative. Visibility is a deployment condition: individual predictions must leave the trust boundary, be returned to an unauthorized party, be retained in logs with identifiers, or be reachable through downstream compromise. Informativeness is a statistical condition: the prediction must have nonzero exploitable dependence on the sensitive attribute. In information-theoretic terms, prediction leakage requires positive mutual information between the exposed output and the sensitive attribute,
\begin{dmath}
I\left(O;S\right)>0.
\end{dmath}
This condition is automatically satisfied when the model directly predicts the sensitive attribute or a deterministic transformation of it. It may also hold when the task output is a strong proxy for the sensitive attribute, such as a credit score that reflects income stability, an insurance premium that reflects health risk, a readmission score that reflects diagnosis burden, or a property valuation that reflects neighborhood wealth. It does not follow merely from high predictive utility. A useful model for the intended task need not leak a particular sensitive attribute if that attribute is independent of the output, if the model suppresses the relevant signal, or if the output is too coarse to support reliable inference.\\
This distinguishes prediction leakage from attribute inference. Attribute inference, as evaluated earlier in this section, gives the attacker protected representations $R_i$ and requires labeled pairs $\left(R_i,S_i\right)$ so that a supervised attack model can learn $R\mapsto S$. It asks whether the exported representation contains incremental sensitive-attribute signal beyond an appropriate baseline, a distinction that is important because some attributes can be imputed from population structure even without access to the target system \cite{JayaramanE22}. Prediction leakage instead gives the attacker $O_i$, the downstream prediction. It is usually simpler because the output is already semantically aligned with the task and may be directly interpretable. It is also narrower because it succeeds only for attributes that are revealed by, or strongly correlated with, that output. Thus, attribute inference is a representation-layer attack, while prediction leakage is an output-layer attack. A successful prediction-leakage result does not imply that $R_i$ or $\Psi_i$ is invertible, nor does it show that the attacker can reconstruct $\mathbf{x}_i$. It shows that the chosen output communicates sensitive information.\\
Inference is most feasible when predictions are individualized, stable, high-resolution, and retained with joinable identifiers. Full probability vectors, confidence scores, logits, calibrated risks, and continuous regression outputs generally provide a richer attack surface than coarse labels or broad bands because they expose more information about the model's assessment. This observation is consistent with prior model-inversion work showing that confidence information can increase leakage relative to less informative output interfaces \cite{FredriksonJR15}. The attack is also stronger when the attacker has labeled calibration data, when $S$ is highly correlated with the target task, when the sensitive attribute is a causal or proxy feature for the predicted outcome, or when the prediction itself is the sensitive value. In such settings, very little machine learning may be required; the attack may reduce to reading the output.\\
Inference is constrained when any of the required ingredients is missing. If predictions never leave the trusted environment, there is no output available to the attacker under this threat model. If predictions are aggregated before export, the attacker may learn a population statistic but cannot reliably assign attributes to individuals. If identifiers are removed, rotated, or separated from outputs, the attacker may observe sensitive-looking predictions without being able to link them to records. If $O$ and $S$ are weakly related, conditionally independent given the task, or deliberately decorrelated by model design, the best attack may collapse to the prior baseline. If only coarse decision bands are exposed, the attack may still be possible but is usually information-limited compared with exposing raw scores. Finally, if the output is visible only to an authorized subject for whom the prediction is intended, the residual issue is access governance rather than unintended third-party inference.\\
The scope of what leaks is therefore precise. Prediction leakage can reveal the downstream prediction itself, a sensitive target if the model was built to predict one, or a correlated sensitive attribute when the output acts as a proxy. It can also produce labeled examples that enable a chain attack: if the same adversary also obtains protected representations for overlapping records, then predictions may be used to assign noisy labels $\widehat{S}$ to those records, after which the adversary can train an attribute-inference model on pairs $\left(R_i,\widehat{S}_i\right)$ and apply it to other records for which only $R$ is available. What prediction leakage does not reveal is the full raw input, the encoder, the complete latent representation, the training set, or unrelated attributes for which the output carries no signal. It is not a reconstruction attack and does not contradict the topological or information-theoretic non-invertibility of VEIL latents.\\
The mitigations are primarily operational and interface-level. Individual predictions should be treated as potentially sensitive artifacts whenever the output is itself sensitive or is correlated with sensitive attributes. Systems should avoid storing raw predictions with durable identifiers unless there is a clear need, should separate prediction logs from identity data, should minimize retention, should enforce access controls on prediction stores and monitoring traces, and should avoid exporting individual-level predictions when aggregate reporting is sufficient. When individual responses must be returned, the interface should disclose the least informative output compatible with the business and scientific purpose: for example, returning a decision band rather than a full confidence vector, rounding regression scores, suppressing logits, or limiting secondary diagnostic explanations. These reductions do not eliminate leakage when the disclosed decision is itself sensitive, but they reduce unnecessary side-channel information.
\begin{center}
\begin{tabular}{lcc}
\toprule
\textbf{Model (DP)} & \textbf{vs. Baseline} & \textbf{$p$-value} \\
\midrule
MNIST & $+0.4496$ & $0.0010$ \\
Ames Housing & $+0.2300$ & $0.0010$ \\
YearPredictionMSD & $+0.1747$ & $0.0010$ \\
Fashion-MNIST & $+0.3659$ & $0.0010$ \\
E2006 & $+0.4494$ & $0.0010$ \\
Home Credit & $-0.0001$ & $1.0000$ \\
CC Default & $+0.0001$ & $1.0000$ \\
CBIS-DDSM & $+0.0088$ & $0.1215$ \\
\bottomrule
\end{tabular}
\captionof{table}{DP Prediction Leakage Results}
\label{tab:dp_pred_leak_results}
\end{center}
\begin{center}
\begin{tabular}{lcc}
\toprule
\textbf{Model (VEIL)} & \textbf{vs. Baseline} & \textbf{$p$-value} \\
\midrule
MNIST & $+0.4314$ & $0.0010$ \\
Ames Housing & $+0.0493$ & $0.5545$ \\
YearPredictionMSD & $+0.1387$ & $0.0099$ \\
Fashion-MNIST & $+0.2344$ & $0.0199$ \\
E2006 & $+0.4553$ & $0.0010$ \\
Home Credit & $-0.0051$ & $1.0000$ \\
CC Default & $+0.0000$ & $1.0000$ \\
CBIS-DDSM & $+0.0005$ & $0.4356$ \\
\bottomrule
\end{tabular}
\captionof{table}{VEIL Prediction Leakage Results}
\label{tab:veil_pred_leak_results}
\end{center}
Additional mitigations can be applied during validation and deployment. Before release, the operator should audit correlations and attack advantages between candidate outputs and sensitive attributes, using baselines that account for class imbalance and ordinary imputation. For binary or continuous sensitive attributes, this may include correlation and threshold attacks; for multiclass attributes, it should be evaluated as classification advantage over the relevant baseline rather than as a single Pearson correlation. If output leakage is unacceptable, the operator can redesign the task, coarsen the output, aggregate results, apply access controls, or add calibrated output noise. Differentially private output mechanisms may reduce leakage, but they reintroduce the privacy--utility trade-off that DP-based approaches must manage through privacy parameters and noise calibration \cite{abadi2016deep}. In VEIL deployments specifically, the encoder protects the path from raw input $\mathbf{x}$ to latent representation $\Psi$; it does not by itself make the downstream prediction $O$ nonsensitive. Prediction outputs, logs, and identifiers must therefore be governed as separate protected assets.\\
The DP results in Table \ref{tab:dp_pred_leak_results} show statistically significant prediction leakage for MNIST, Ames Housing, YearPredictionMSD, Fashion-MNIST, and E2006, each with a positive advantage and $p=0.0010$. Home Credit and CC Default are essentially at baseline, while CBIS-DDSM shows only a small, non-significant positive advantage. These results indicate that, for several evaluated pipelines, the downstream output itself exposed enough information about the selected sensitive attribute to support inference over the baseline. The result should not be interpreted as a reconstruction result: it does not show that the protected inputs were recoverable, only that the released predictions were informative about the tested attribute.\\
The VEIL results in Table \ref{tab:veil_pred_leak_results} show the same output-layer risk pattern, but with a different distribution across datasets. MNIST, YearPredictionMSD, Fashion-MNIST, and E2006 show statistically significant positive prediction-leakage advantage. Ames Housing has a positive numerical advantage but is not statistically significant, and Home Credit, CC Default, and CBIS-DDSM remain at or near baseline. Thus, VEIL substantially constrains reconstruction and representation-layer leakage in the earlier experiments, but it does not make downstream predictions nonsensitive when those predictions themselves encode, proxy, or reveal the selected attribute. This is the expected boundary of the architecture: VEIL protects raw inputs before they leave the Source Environment, while prediction leakage concerns what the deployed predictor is permitted to disclose after inference.\\
The practical conclusion is that prediction leakage must be managed independently of encoder non-invertibility. In both the DP and VEIL pipelines, significant results occur when the observable prediction contains task-aligned information that is also informative about the sensitive attribute. Conversely, the non-significant results show that prediction visibility alone is not sufficient; the exposed output must also carry usable sensitive-attribute signal. For VEIL deployments, the correct mitigation is therefore not to weaken the encoder or abandon the latent representation, but to govern prediction outputs as sensitive operational artifacts: restrict access, reduce unnecessary output granularity, separate logs from identifiers, aggregate where possible, and audit sensitive-output correlations before deployment.

\subsection{Model Extraction}

Model extraction, also called model stealing, is a black-box attack in which an adversary treats a deployed prediction interface as an oracle and uses its input-output behavior to train a local surrogate model \cite{tramèr2016stealingmachinelearningmodels, orekondy2019knockoffnets}. The attack does not require direct access to the target model's weights, architecture, training set, gradients, or loss function. Instead, the attacker needs the ability to submit valid inputs to the deployed model and observe the corresponding outputs. In the notation used throughout this paper, let $g_{\phi}:\mathcal{Z}\rightarrow\mathcal{Y}$ denote the deployed downstream predictor, where $\mathcal{Z}$ is the space of inputs accepted by the prediction API and $\mathcal{Y}$ is the output space. In a conventional or DP-protected pipeline, $\mathcal{Z}$ may be the raw feature space or a perturbed version of it. In the VEIL architecture, $\mathcal{Z}$ is the latent space produced by the protected encoder, so the exposed predictor maps latent vectors $\Psi$ to predictions $O$.\\
The model extraction threat model is therefore distinct from reconstruction, inversion, membership inference, and direct artifact theft. The adversary is assumed to have query access to the deployed model, but not privileged access to the training environment or model file. To set up the attack, the adversary must have five operational ingredients: a queryable endpoint for the target predictor; a way to construct syntactically valid inputs in the endpoint's input space; a query budget large enough to cover the complexity of the target mapping; observable outputs such as labels, scores, probabilities, logits, or regression values; and local compute for fitting and validating a surrogate model. The attacker does not need the surrogate architecture to match the target architecture. A linear model, tree ensemble, neural network, or other learner may be used, provided that it approximates the target input-output function with sufficient fidelity.\\
The attack applies whenever the deployed model can be queried repeatedly and the responses are specific enough to support supervised learning. It is especially relevant for prediction APIs, embedded models that can be instrumented, multi-tenant inference services, and compromised inference logs containing input-output pairs. It does not describe the case where an attacker simply steals the serialized model artifact; that is direct model theft rather than extraction. It is also substantially weakened when the attacker cannot construct valid inputs, receives only highly aggregated outputs, is constrained by strict rate limits, or is subject to monitoring that detects abnormal query patterns. Richer outputs reduce the cost of extraction: confidence scores, class probabilities, logits, and real-valued regression predictions carry more information per query than hard class labels, although label-only extraction can still be possible at higher query budgets \cite{tramèr2016stealingmachinelearningmodels, jagielski2020highaccuracyhighfidelity}.\\
The basic extraction procedure is straightforward. First, the attacker collects or synthesizes query inputs. These may come from public examples, the attacker's own legitimate queries, synthetic samples drawn from an estimated input distribution, active-learning query construction, or compromised logs. In a VEIL deployment targeting the downstream predictor, these inputs must be latent vectors $\Psi$ rather than raw records $\mathbf{x}$. Second, the attacker submits a finite query set $\mathcal{Q}=\left\{\Psi_i\right\}_{i=1}^B$ to the target model and records the outputs $O_i=g_{\phi}\left(\Psi_i\right)$. Third, the attacker trains a surrogate $\hat{g}_{\omega}$ on the extracted dataset $\left\{\left(\Psi_i,O_i\right)\right\}_{i=1}^B$ by minimizing disagreement with the target outputs. Finally, the attacker evaluates the surrogate against held-out target queries to estimate fidelity. More sophisticated versions adaptively choose new queries based on the current surrogate's uncertainty or disagreement, connecting model extraction to active learning and improving query efficiency when the attacker's query budget is limited \cite{chandrasekaran2019exploringconnectionsactivelearning}.\\
Fidelity is the central evaluation concept. It measures agreement with the target model, not necessarily agreement with the ground-truth label or response. Thus, a surrogate can have high fidelity even when the target model itself is inaccurate, and a surrogate can have useful accuracy while failing to reproduce the target's exact decision boundary \cite{jagielski2020highaccuracyhighfidelity}. In the experiments reported here, the query budget was fixed at $B=500$, and fidelity was measured using $R^2$ between the target model outputs and surrogate outputs on evaluation queries,
\begin{equation}
R^2 = 1 - \frac{\sum_i \norm{O_i - \hat{O}_i}_2^2}{\sum_i \norm{O_i - \bar{O}}_2^2},
\end{equation}
where $\hat{O}_i=\hat{g}_{\omega}\left(\Psi_i\right)$ and $\bar{O}$ is the empirical mean of the target outputs on the evaluation set. A value near $1$ indicates that the surrogate closely reproduces the target's numerical outputs under the evaluated query distribution, while a low or negative value indicates that the query budget, surrogate class, sampled input distribution, or output visibility was insufficient to reproduce the target behavior. The query budget is therefore inseparable from the reported fidelity. Simple, low-dimensional, smooth, or nearly linear target functions may be extracted with hundreds of queries; complex, high-dimensional, discontinuous, or highly nonlinear target functions generally require many more queries, better query selection, or richer output information.\\
Model extraction matters for two reasons. First, it is a form of intellectual-property theft: the target model embodies training data acquisition, feature engineering, architecture selection, hyperparameter tuning, compute, validation, compliance work, and operational deployment. A high-fidelity copy can allow an attacker or competitor to reproduce valuable behavior without paying those costs. Second, it is an attack multiplier. Once a surrogate has been trained, the adversary can run unlimited offline experiments without rate limits, access controls, billing, logging, anomaly detection, or account termination. The copied model can then be used to generate predictions for any inputs the attacker possesses, probe decision boundaries, prepare adversarial examples, or support follow-on privacy attacks such as membership inference or inversion under a much cheaper local threat model \cite{papernot2017practicalblackbox, 10.1145/3624010}. Extraction does not create new access to protected inputs; it only removes the operational constraints that would otherwise limit repeated use of the prediction function.\\
The results in Table \ref{tab:mod_ex_results} show that, under a fixed query budget of $500$, many of the evaluated downstream predictors were extracted with high numerical fidelity. This is expected whenever the deployed model is simple relative to the information revealed by each query. Several DP and VEIL results are close to $R^2=1$, indicating that the surrogate nearly reproduced the prediction API's output surface on the evaluated distribution. Other cases, such as MNIST, Fashion-MNIST, and especially the VEIL version of CBIS-DDSM, show substantially lower fidelity, indicating that $500$ queries were not sufficient to learn the exposed target behavior under the reported attack protocol. These values should be interpreted as extraction fidelity for the deployed predictor only, not as evidence that the attacker recovered the training data, reconstructed raw inputs, or learned the internal representation used by VEIL.
\begin{center}
\begin{tabular}{lcc}
\toprule
\textbf{Model} & \textbf{DP $R^2$} & \textbf{VEIL $R^2$} \\
\midrule
MNIST & $0.6689$ & $0.3843$ \\
Ames Housing & $0.9871$ & $0.9466$ \\
YearPredictionMSD & $1.0000$ & $1.0000$ \\
Fashion-MNIST & $0.5596$ & $0.4514$ \\
E2006 & $0.9999$ & $1.0000$ \\
Home Credit & $1.0000$ & $0.6937$ \\
CC Default & $1.0000$ & $0.9969$ \\
CBIS-DDSM & $0.9833$ & $0.1660$ \\
\bottomrule
\end{tabular}
\captionof{table}{Model Extraction Results}
\label{tab:mod_ex_results}
\end{center}
The VEIL-specific interpretation is particularly important. In VEIL, the end-to-end predictive function is the composite mapping $h\left(\mathbf{x}\right)=g_{\phi}\left(f_{\theta}\left(\mathbf{x}\right)\right)$, where $f_{\theta}$ is the encoder protected inside the trusted Source Environment and $g_{\phi}$ is the downstream predictor exposed in the less-trusted inference environment. Because $f_{\theta}$ is trained to produce task-aligned latent representations, much of the useful prediction logic can be transferred into the encoder. The deployed prediction API can therefore be intentionally simple: in many cases, the downstream model looks like a shallow or nearly linear model operating on a very low-dimensional latent vector, often with only two latent input variables in the experiments described in this paper.\\
This architectural fact changes what extraction means. If an attacker extracts $g_{\phi}$ with high $R^2$, the attacker has learned a map from latent coordinates to outputs. The attacker has not learned what those latent coordinates mean in terms of the original feature space, has not learned the protected encoder $f_{\theta}$, and has not inverted $\Psi$ back to $\mathbf{x}$. In many of the high-fidelity VEIL extraction cases, the attack is therefore almost trivial precisely because the exposed downstream model was deliberately made simple. The extracted surrogate may accurately reproduce a two-variable decision surface in latent space, but those two variables remain non-invertible, task-aligned encodings generated inside the Source Environment. From the adversary's perspective, the surrogate says how to transform unknown latent coordinates into predictions; it does not reveal how income, payment history, image pixels, clinical measurements, or any other raw source features were transformed into those coordinates.\\
For this reason, the table can be misleading if read as a measure of overall VEIL compromise. High downstream extraction fidelity does not imply that the attacker has uncovered the full prediction logic of the protected system. Under VEIL, the economically and privacy-relevant logic is split across the protected encoder and the exposed predictor. The prediction API may contain only the final, simplified head of the composite model, while the sensitive representation-learning function remains inside the Source Environment. Thus, model extraction remains a legitimate operational risk for the exposed downstream predictor and can enable offline misuse of predictions for latent vectors the attacker already has, but it does not defeat ICA's core protection: raw inputs and the encoder mapping from $\mathbf{x}$ to $\Psi$ are not exported to the untrusted inference environment.\\
Operational mitigations are still necessary. VEIL deployments should authenticate prediction access, rate-limit and quota queries, monitor input distributions for extraction-like behavior, restrict unnecessary output precision, avoid exposing logits or full probability vectors unless required, watermark or fingerprint deployed predictors when appropriate, and prevent bulk logging or reuse of latent vectors. Most importantly, the encoder endpoint must not be exposed outside the Source Environment. If the attacker can query $f_{\theta}$ directly by submitting raw inputs and observing latent vectors, the scenario becomes encoder extraction rather than ordinary model extraction, and the trust-boundary assumption of VEIL has been violated.

\subsection{Summary of Privacy Attack Simulation Results}

The simulations in Sections 10.1--10.8 evaluate privacy attacks at several distinct layers of the protected ML pipeline: reconstruction from protected inputs, inference of individual sensitive attributes, inference of aggregate population properties, membership inference under both label-informed and black-box assumptions, leakage through downstream predictions, and extraction of the exposed downstream predictor. Across these experiments, the VEIL architecture exhibits a consistent pattern. The latent vectors exported by VEIL preserve predictive utility, but they do not behave like recoverable substitutes for the original records. The strongest evidence for this conclusion comes from the reconstruction experiments: the DP pipelines produced statistically significant positive decoder advantage in four of the eight evaluated model--dataset pairs, whereas the VEIL pipelines produced no statistically significant positive reconstruction advantage in any of the eight cases. This empirical result is consistent with the non-invertibility guarantees developed in Section 9 and with the architectural premise that raw inputs and the encoder remain confined to the Source Environment.\\
The inference results refine this conclusion. VEIL does not eliminate every possible statistical relationship between protected artifacts and sensitive attributes, because the encoder is intentionally trained to preserve task-relevant information. When a sensitive attribute is itself correlated with task-relevant structure, some residual inference risk can remain. However, the observed VEIL leakage is narrower than the corresponding DP leakage in the evaluated protocols. Attribute inference yielded statistically significant positive advantage in two VEIL cases, MNIST and Fashion-MNIST, compared with five DP cases. Property inference yielded one significant VEIL result, MNIST, compared with two DP results. Label-informed membership inference yielded zero significant VEIL results, while DP yielded two. Black-box membership inference yielded zero significant results for both protection methods. Thus, the principal residual VEIL risks are not raw-record reconstruction or encoder inversion, but linkage-based inference against task-correlated attributes or populations when protected artifacts can be joined to auxiliary information.\\

\begin{center}
\setlength{\tabcolsep}{3pt}
\begin{tabular}{lcc}
\toprule
\textbf{Attack} & \textbf{DP Sig.} & \textbf{VEIL Sig.} \\
\midrule
Reconstruction & $4/8$ & $0/8$ \\
Attribute Inference & $5/8$ & $2/8$ \\
Property Inference & $2/8$ & $1/8$ \\
Membership (Informed) & $2/8$ & $0/8$ \\
Membership (Blackbox) & $0/8$ & $0/8$ \\
Prediction Leakage & $5/8$ & $4/8$ \\
\bottomrule
\end{tabular}
\captionof{table}{Summary of Statistically Significant Privacy Attack Results}
\label{tab:privacy_attack_summary}
\end{center}

\noindent Table \ref{tab:privacy_attack_summary} summarizes the number of statistically significant positive attack outcomes reported in Tables \ref{tab:dp_recon_results}--\ref{tab:veil_pred_leak_results}. The table should be interpreted carefully. It counts statistically significant attack advantage relative to the relevant baseline for each attack class; it does not claim that the same amount or severity of information leaked in every positive case. Reconstruction, attribute inference, property inference, membership inference, and prediction leakage measure different adversarial objectives. In particular, prediction leakage is an output-layer phenomenon: if the deployed model is allowed to return an output that is itself sensitive, or is strongly correlated with a sensitive attribute, then either a DP or VEIL pipeline can leak that attribute through the prediction interface. The VEIL encoder prevents raw input export and reconstruction; it does not make downstream predictions nonsensitive once those predictions are intentionally disclosed.\\
The model extraction experiment requires a separate interpretation because it measures surrogate fidelity rather than statistical leakage over a baseline. Under a fixed budget of $500$ queries, several downstream predictors were extracted with high $R^2$ in both DP and VEIL settings. For VEIL, however, high extraction fidelity means that the attacker has approximated $g_{\phi}:\mathcal{Z}\rightarrow\mathcal{Y}$, the exposed mapping from latent coordinates to outputs. It does not mean that the attacker has recovered $f_{\theta}$, inverted $\Psi$ to obtain $\mathbf{x}$, learned the meaning of the latent coordinates in the original feature space, or obtained raw source records. Model extraction remains an operational and intellectual-property risk for the exposed downstream predictor, especially when outputs are high precision and query access is weakly controlled, but it does not defeat ICA's core privacy claim unless the encoder itself is exposed or the Source Environment trust boundary is violated.\\
Taken together, the simulation results support the following practical conclusion. VEIL provides strong empirical resistance to reconstruction and downstream membership inference while maintaining predictive utility, and it reduces several forms of representation-layer inference relative to the DP baselines evaluated here. The remaining risks are concentrated at the pipeline and interface layers: joinable latent artifacts can support attribute or property inference when auxiliary labels are available; individual predictions can leak sensitive information when the output is itself informative; and exposed downstream predictors can be copied if query access is uncontrolled. These risks are not contradictions of non-invertibility. They identify the operational controls required for a complete deployment: keep the encoder inside the Source Environment, prevent arbitrary encoder queries, avoid bulk export or durable storage of latent vectors with join handles, minimize and govern prediction logs, limit unnecessary output precision, rate-limit and monitor prediction APIs, and explicitly test task-correlated sensitive attributes and properties during model validation. Under these controls, the attacker is left with non-invertible, task-aligned representations and constrained downstream outputs rather than raw, reconstructable records.

\section{Model Explainability and Counterfactual Analysis with the VEIL Architecture}

Explainability in the VEIL architecture is naturally performed inside the trusted Source Environment, as the raw input, $\mathbf{x}$, and the returned prediction, $\hat{\mathbf{y}}$, are co-located there even though the deployed ML model consumes only latent vectors, $\Psi$. This aspect of the VEIL architecture was mentioned in Section 8.3, and is shown in Figures \ref{fig:VEIL_inf_env} and \ref{fig:VEIL_architecture_full}.\\
Let $f_{\theta}$ be an encoder with properties described in Section 9 to ensure non-invertibility, and let $g_{\phi}$ be the downstream predictor. Define the composite predictor as follows.
\begin{dmath}
h\left(\mathbf{x}\right)=g_{\phi}\left(f_{\theta}\left(\mathbf{x}\right)\right)
\end{dmath}
Any explainer that requires only query access to $h$ can then be applied to the composite predictor inside the trusted Source Environment, thus enabling model explanations and counterfactual analysis without exposing raw data outside the trust boundary defined in Section 8. In particular, SHapley Additive exPlanations (SHAP) \cite{LundbergLee2017Arxiv} may attribute the output of the composite predictor directly to the original input features or pixels, even though the downstream model itself is trained only on latent representations, and therefore makes predictions only on latent representations. The following example makes this workflow concrete by applying SHAP to the VEIL-protected Home Credit Default Risk pipeline, where explanations are expressed in the original feature space while prediction still proceeds through the protected latent representation.

\subsection{SHAP Analysis of the Home Credit Default Risk Model}

The Home Credit Default Risk experiment provides an especially useful test case for VEIL because it joins the two central requirements of privacy-preserving supervised ML: privacy protection and predictive utility. As shown in Section 10, the Home Credit pipeline protected under VEIL withstood every simulated privacy attack under the reported protocols, while Table \ref{tab:home_credit_results} shows that the VEIL-protected SCRAE pipeline achieved a test ROC-AUC of $0.7734$, outperforming both the Raw Data baseline at $0.7582$ and the DP baseline at $0.7426$. Thus, for this dataset, VEIL did not merely preserve the usefulness of the model while protecting sensitive inputs; it improved predictive performance relative to both unprotected and differentially private alternatives.\\
In a real ML deployment, a model of this kind would naturally be used for credit decisioning. In the United States, adverse credit decisions implicate notice and explanation requirements under a combination of the Fair Credit Reporting Act (FCRA), the Equal Credit Opportunity Act (ECOA), and Regulation B. The FCRA requires notice when adverse action is taken based in whole or in part on information in a consumer report, including score-related disclosures where applicable \cite{uscode15_1681m}. ECOA and Regulation B require the creditor to provide, or make available, a statement of specific reasons for the adverse action \cite{uscode15_1691, cfpb_regulation_b_1002_9, cfpb_regulation_b_interpretation_1002_9}. The Consumer Financial Protection Bureau (CFPB) has further emphasized that these obligations apply when creditors use complex algorithms or AI/ML systems, and that generic or inaccurate reason codes are insufficient \cite{cfpb_circular_2022_03, cfpb_circular_2023_03}. In this setting, counterfactual analysis is not merely an optional interpretability feature. It is an operational mechanism for generating and validating applicant-facing explanations, because the organization must be able to identify which applicant characteristics materially influenced the decision and determine whether plausible changes to those characteristics would have altered the outcome.\\
Figure \ref{fig:home_cred_shap_global_importance} reports the global SHAP feature importance for the VEIL-protected Home Credit model by ranking raw input features according to their average contribution to the composite predictor $h\left(\mathbf{x}\right)=g_{\phi}\left(f_{\theta}\left(\mathbf{x}\right)\right)$. The largest contributions are concentrated in a relatively small set of credit-risk variables, led by the external source score features and followed by loan-structure, payment-burden, and applicant-history variables. This indicates that the protected model's decisions are being driven by interpretable credit-risk signals rather than by an unexplainable latent coordinate system.
\begin{center}
\includegraphics[scale=0.36]{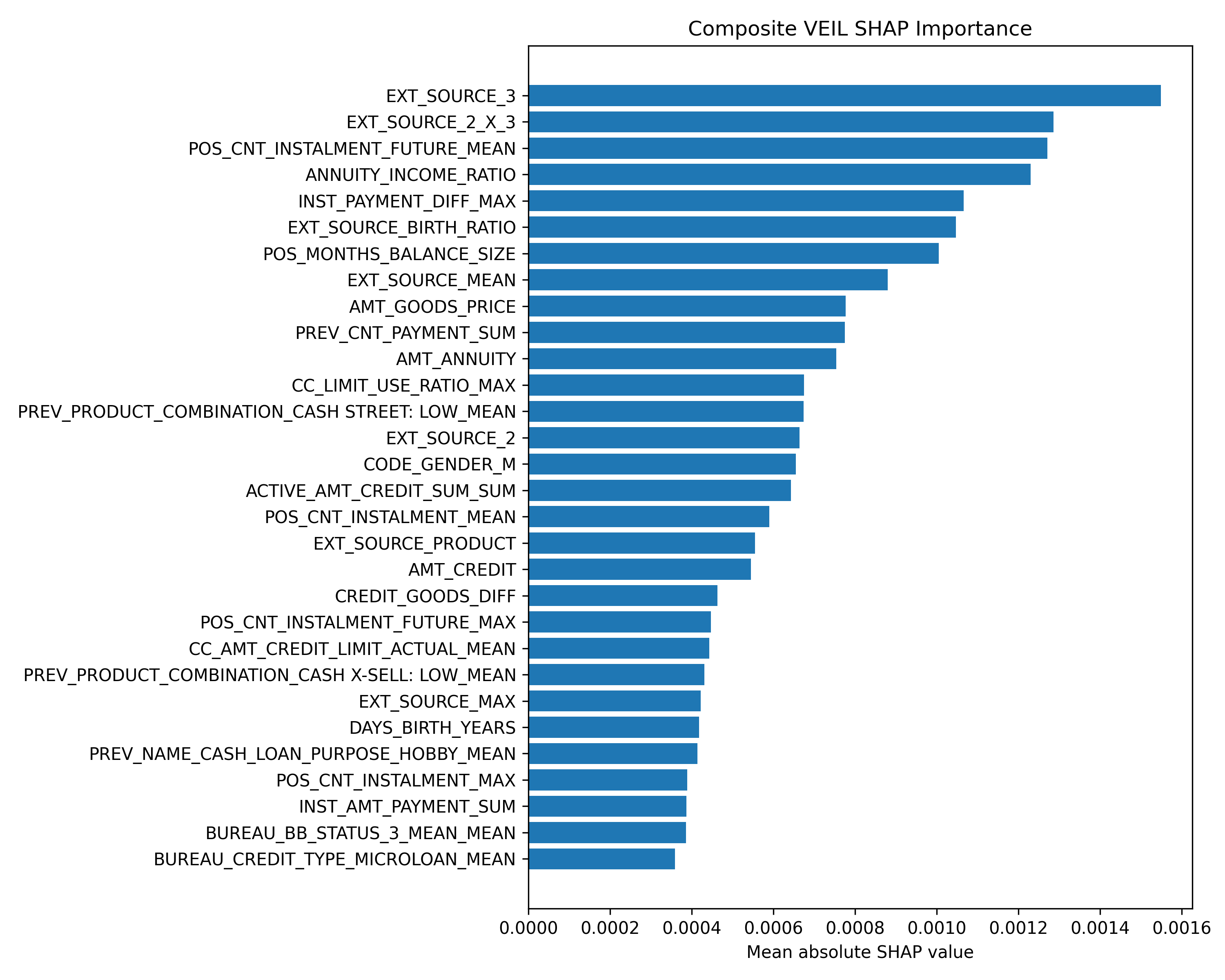}
\captionof{figure}{Global Feature Importance for Home Credit Default Risk}
\label{fig:home_cred_shap_global_importance}
\end{center}
This global view establishes which original applicant attributes carry the most explanatory weight. The next plot examines the leading feature, \texttt{EXT\_SOURCE\_3}, to determine whether its contribution follows a substantively plausible credit-risk pattern.
\begin{center}
\includegraphics[scale=0.4]{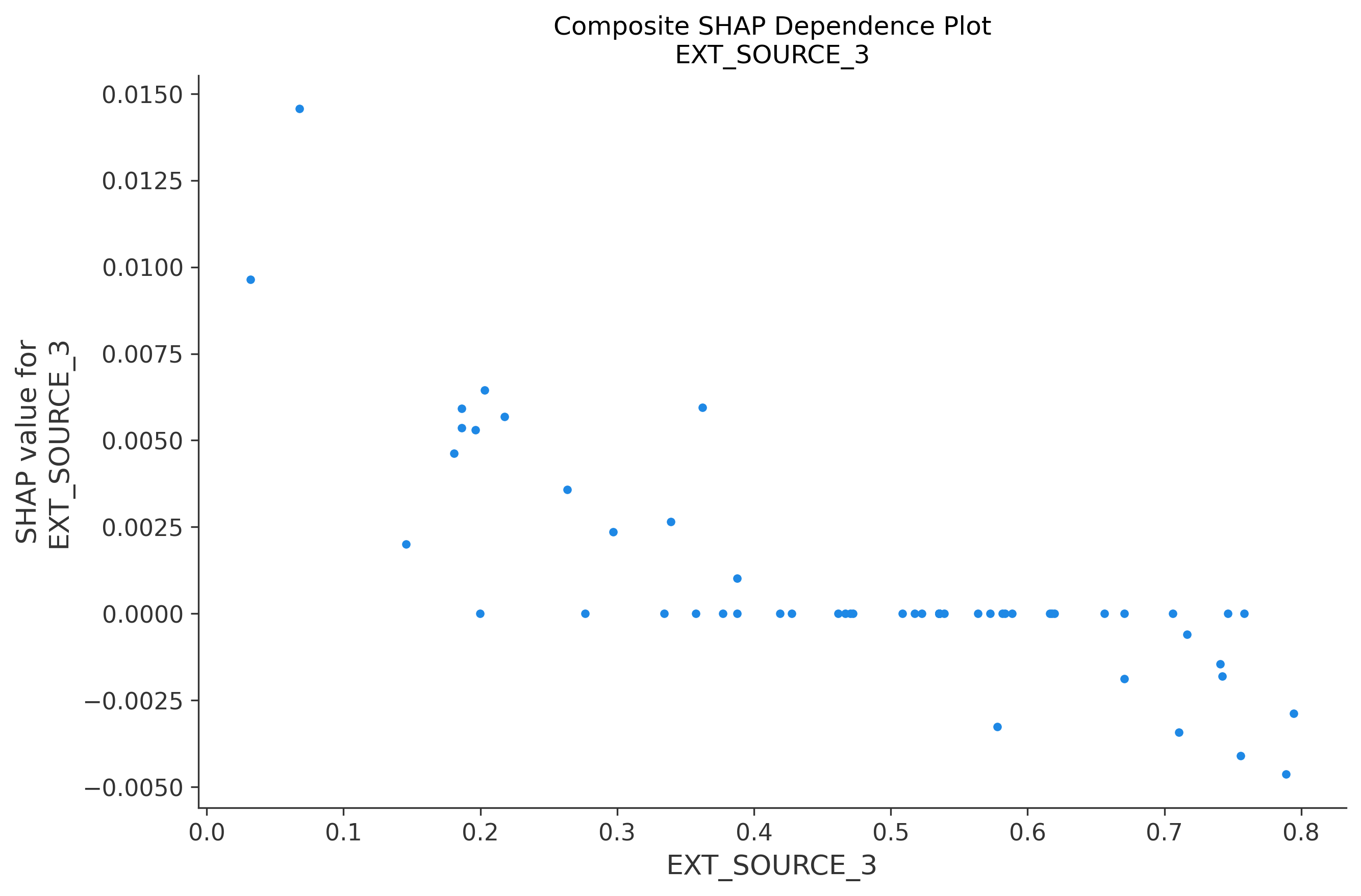}
\captionof{figure}{Dependence Plot for Top Feature of Home Credit Default Risk}
\label{fig:home_credit_shap_top_depend}
\end{center}
Figure \ref{fig:home_credit_shap_top_depend} gives a more detailed view of the leading feature identified in the global SHAP analysis. The dependence plot shows a clear negative relationship between \texttt{EXT\_SOURCE\_3} and its contribution to the composite predictor: low values of \texttt{EXT\_SOURCE\_3} produce positive SHAP values, thereby increasing the predicted risk of default, while larger values tend to produce SHAP values near zero or below zero, thereby reducing the predicted risk.
\begin{center}
\includegraphics[scale=0.275]{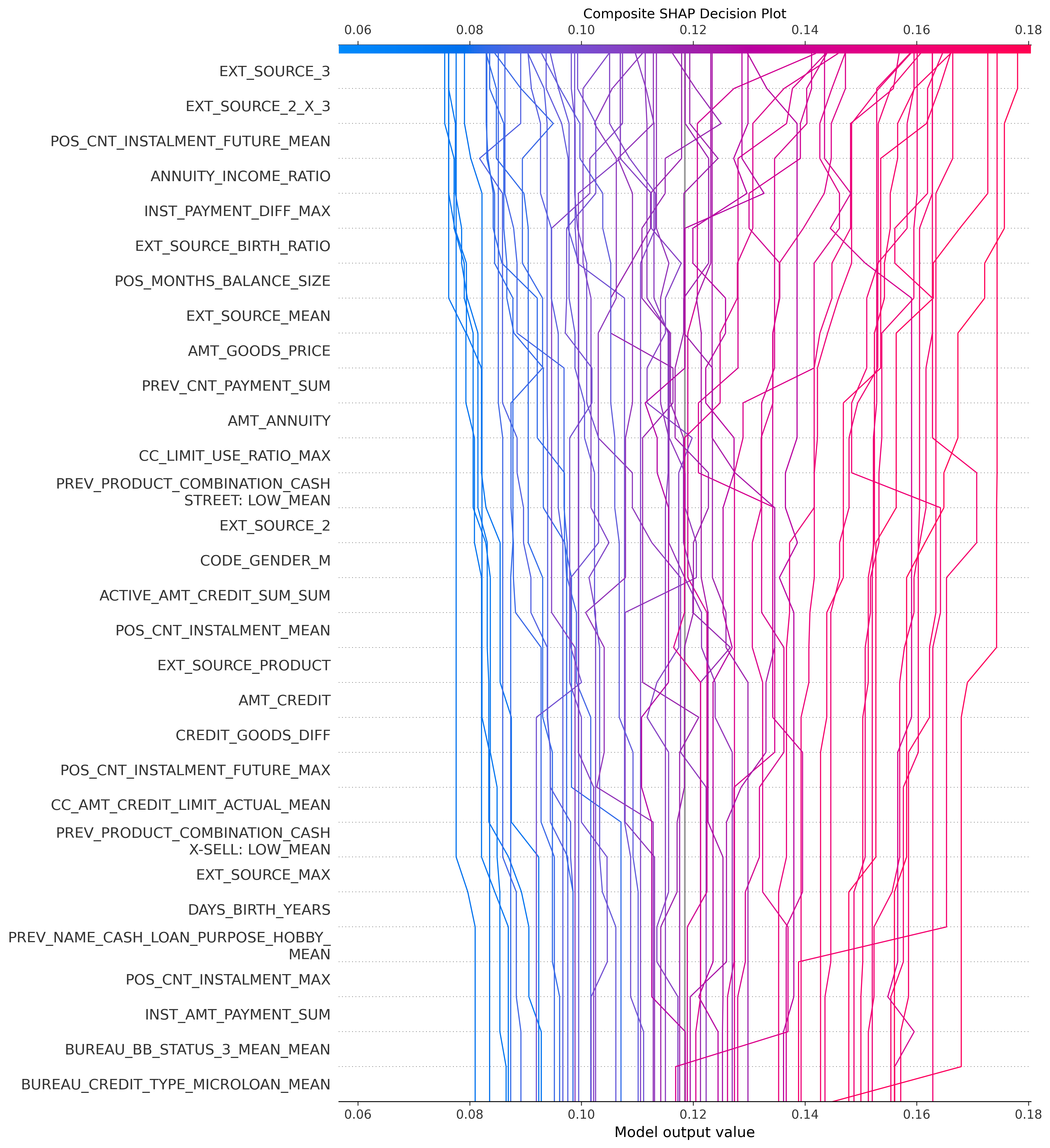}
\captionof{figure}{Decision Plot for Home Credit Default Risk}
\label{fig:home_cred_shap_decision}
\end{center}
This behavior is consistent with the interpretation of the external-source variables as credit-risk strength indicators, where stronger external scores should generally mitigate default risk. The effect also appears nonlinear and threshold-like. Most of the largest positive contributions occur at very low values of \texttt{EXT\_SOURCE\_3}, whereas mid-range and high values are concentrated close to zero with a small number of negative contributions at the upper end of the observed range. Thus, the protected model is most sensitive to \texttt{EXT\_SOURCE\_3} when the external score is weak, and the marginal influence of the feature diminishes once the score reaches a moderate level.\\
The dependence plot therefore complements the global importance ranking by showing not only that \texttt{EXT\_SOURCE\_3} is influential, but also that its influence is directionally sensible and concentrated where external-source scores are weakest. The following decision plot then moves from a single-feature view to the accumulation of multiple feature effects across applicants.\\
Figure \ref{fig:home_cred_shap_decision} extends the preceding global-importance and dependence analyses by showing how the most influential raw features accumulate into individual predictions for the VEIL-protected Home Credit model. Each trajectory represents one applicant, beginning near the model's expected output and then moving left or right as successive feature contributions are added. The separation of the trajectories across the output axis shows that the composite predictor produces meaningfully different risk estimates for different applicants, while the ordering of the features shows that much of this separation is generated by a small set of interpretable credit-risk variables. In particular, \texttt{EXT\_SOURCE\_3} appears at the top of the decision plot and induces substantial early divergence among applicants, which is consistent with its dominant position in Figure \ref{fig:home_cred_shap_global_importance} and the nonlinear dependence pattern shown in Figure \ref{fig:home_credit_shap_top_depend}. Additional variables, including interactions among external-source scores, installment-count features, annuity-to-income ratios, payment-difference measures, and credit-balance features, then refine the applicant-specific prediction paths.
\begin{center}
\includegraphics[scale=0.275]{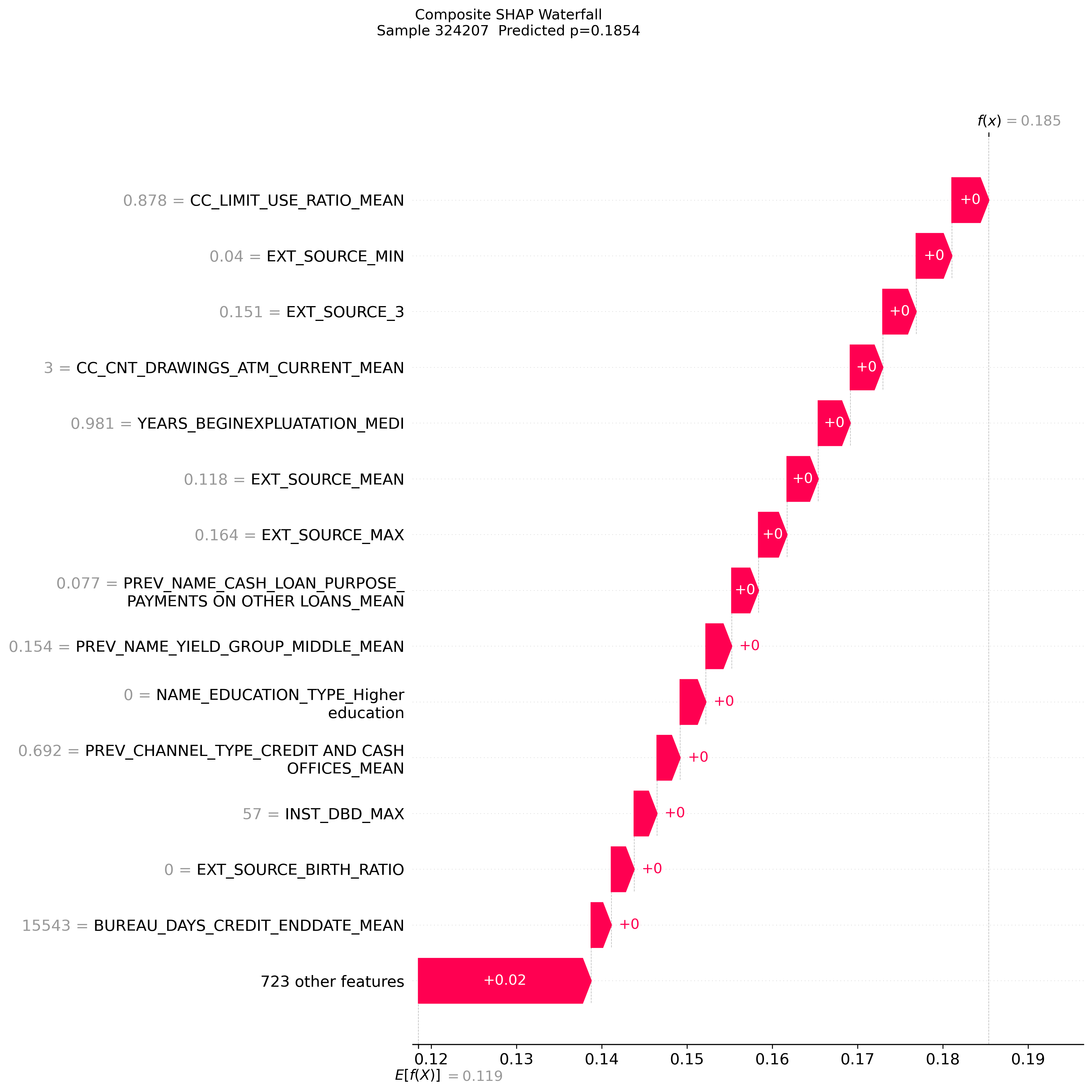}
\captionof{figure}{Waterfall Plot for Home Credit Default Risk}
\label{fig:home_cred_shap_waterfall}
\end{center}
This plot adds a cohort-level perspective to the preceding analyses. Rather than ranking features by average magnitude or isolating the marginal behavior of one feature, it shows how the same set of credit-risk variables produces distinct prediction paths for different applicants. The final plot narrows this view to one applicant-level explanation.\\
Figure \ref{fig:home_cred_shap_waterfall} provides a local explanation for one Home Credit applicant by decomposing the composite VEIL prediction into additive SHAP contributions in the original feature space. The prediction begins from the model's expected output, $\mathbb{E}\left[f\left(X\right)\right]=0.119$, and the displayed feature contributions move the applicant's predicted default probability upward to $f\left(x\right)=0.185$, corresponding to the reported predicted probability of approximately $0.1854$. The plot is therefore an applicant-level counterpart to the preceding global and decision-plot analyses: instead of showing which features matter on average, it shows why this particular applicant was assigned an elevated risk score relative to the model baseline.\\
The increase in predicted risk is not driven by a single opaque latent coordinate, but by a coherent accumulation of raw credit-risk signals. The largest combined contribution comes from the aggregate group of remaining features, while the named positive contributors include a high credit-limit utilization ratio, weak external-source scores, ATM drawing behavior, prior-loan purpose and channel variables, installment-history variables, and bureau-credit timing information. The low values of \texttt{EXT\_SOURCE\_MIN}, \texttt{EXT\_SOURCE\_3}, \texttt{EXT\_SOURCE\_MEAN}, and \texttt{EXT\_SOURCE\_MAX} are especially consistent with the earlier dependence plot, where weaker external-source scores increased predicted default risk. Because the waterfall contains only positive displayed contributions, the explanation indicates that few, if any, of the top-ranked local features materially offset the applicant's risk.\\
Taken together, the four SHAP visualizations provide a unified explanation workflow for the VEIL-protected Home Credit model. The global importance plot identifies the dominant raw credit-risk variables, the dependence plot verifies that the leading feature behaves in a substantively reasonable direction, the decision plot shows how those variables accumulate into applicant-specific prediction paths, and the waterfall plot explains an individual elevated-risk prediction. The operational conclusion is that VEIL protection has not converted the model into an opaque latent-space scoring system. Although the downstream predictor consumes only the protected VEIL representation, the trusted Source Environment can still query the composite function $h\left(\mathbf{x}\right)=g_{\phi}\left(f_{\theta}\left(\mathbf{x}\right)\right)$ and express the resulting explanations in terms of original applicant attributes. This allows an organization to inspect which raw features moved an applicant toward or away from default risk, compare similarly situated applicants, and validate adverse-action reasons or counterfactual scenarios without exporting raw applicant records outside the trusted Source Environment.\\

\subsection{Explanatory Visualizations of the CBIS-DDSM Model}

The CBIS-DDSM experiment provides the medical-imaging counterpart to the Home Credit explainability analysis. As shown in Table \ref{tab:cbis-ddsm_results}, the VEIL-protected SCRAE pipeline achieved a test ROC-AUC of $0.6249$, outperforming both the Raw Data baseline at $0.6076$ and the DP baseline at $0.5700$. Thus, as in the VEIL-protected Home Credit Default Risk experiment, the protected representation did not merely preserve predictive utility; it improved predictive utility relative to both the unprotected and differentially private alternatives. The same pattern also appears in the privacy simulations. For CBIS-DDSM, the VEIL pipeline produced no statistically significant positive reconstruction advantage, no statistically significant positive attribute-inference or property-inference advantage, no statistically significant positive membership-inference advantage in either the label-informed or black-box setting, and no statistically significant positive prediction-leakage advantage. In the model-extraction experiment, the CBIS-DDSM VEIL downstream predictor was also among the least extractable models, with surrogate fidelity of only $R^2=0.1660$ under the fixed-query protocol reported in Table \ref{tab:mod_ex_results}. Taken together, these results show that the VEIL-protected CBIS-DDSM model retained useful diagnostic signal while withstanding the simulated privacy attacks evaluated in Section 10.\\
Healthcare explainability is important for both compliance and clinical adoption. In the United States, the ONC HTI-1 Final Rule creates transparency requirements for predictive decision support interventions in certified health IT, including source-attribute information intended to help clinical users assess fairness, appropriateness, validity, effectiveness, and safety \cite{onc2024hti1, onc2024dsi}. In the European Union, the AI Act treats many AI systems connected to regulated medical devices as high-risk systems and requires high-risk AI systems to be designed with sufficient transparency, instructions for use, output interpretation support, and human oversight \cite{euAIAct2024}. For machine-learning-enabled medical devices, the FDA, Health Canada, and the United Kingdom's MHRA have also identified transparency, user-centered information, and logic or explainability as guiding principles for safe deployment \cite{fda2024mlmdtransparency}. Even where a particular deployment is not directly governed by one of these frameworks, the practical implication is the same: a diagnostic AI system must expose enough information for qualified healthcare professionals to understand, contest, and appropriately rely on the model output.\\
This need for explanation is not merely administrative. Clinical AI systems are used in high-stakes workflows where physicians remain accountable for patient care, and where poorly calibrated trust can be dangerous in either direction. Prior clinical explainability research emphasizes that explanations should support the clinician's actual decision context rather than simply expose model internals, because clinicians need to know when the model is likely to be useful, when it may be unreliable, and what evidence drove the recommendation \cite{tonekaboni2019clinicians, amann2020explainability}. In mammography review, an unexplained malignancy prediction would place the physician in an operational dilemma. To preserve throughput, the physician would have to blindly trust the model's score; to avoid blind reliance, the physician would have to review the image without guidance, which would eliminate much of the efficiency gain that motivated the ML system in the first place. A useful VEIL-protected imaging model must therefore return not only a prediction, but also a clinically meaningful indication of the image regions that most influenced that prediction.\\
Figures \ref{fig:original_cbis-ddsm01257}--\ref{fig:smoothgrad_cbis-ddsm01257} show how this explanation workflow operates for a single CBIS-DDSM mammography image. Figure \ref{fig:original_cbis-ddsm01257} presents the original image as it remains inside the trusted Source Environment. Figure \ref{fig:occlusion_cbis-ddsm01257} reports an occlusion analysis, in which local image patches are systematically masked and the resulting change in the model output identifies regions whose removal most affects the prediction \cite{zeiler2014visualizing}. Figure \ref{fig:int_grad_cbis-ddsm01257} reports Integrated Gradients, which attributes the prediction to pixels by accumulating gradients along a path from a reference image to the observed image \cite{sundararajan2017axiomatic}. Figure \ref{fig:smoothgrad_cbis-ddsm01257} reports SmoothGrad, which reduces visual noise in gradient explanations by averaging saliency over noisy perturbations of the same input \cite{smilkov2017smoothgrad}. Each method therefore asks a different explanatory question: what region changes the prediction when removed, what pixels receive path-integrated attribution, and what saliency pattern remains stable under perturbation.
\begin{center}
\includegraphics[scale=0.55]{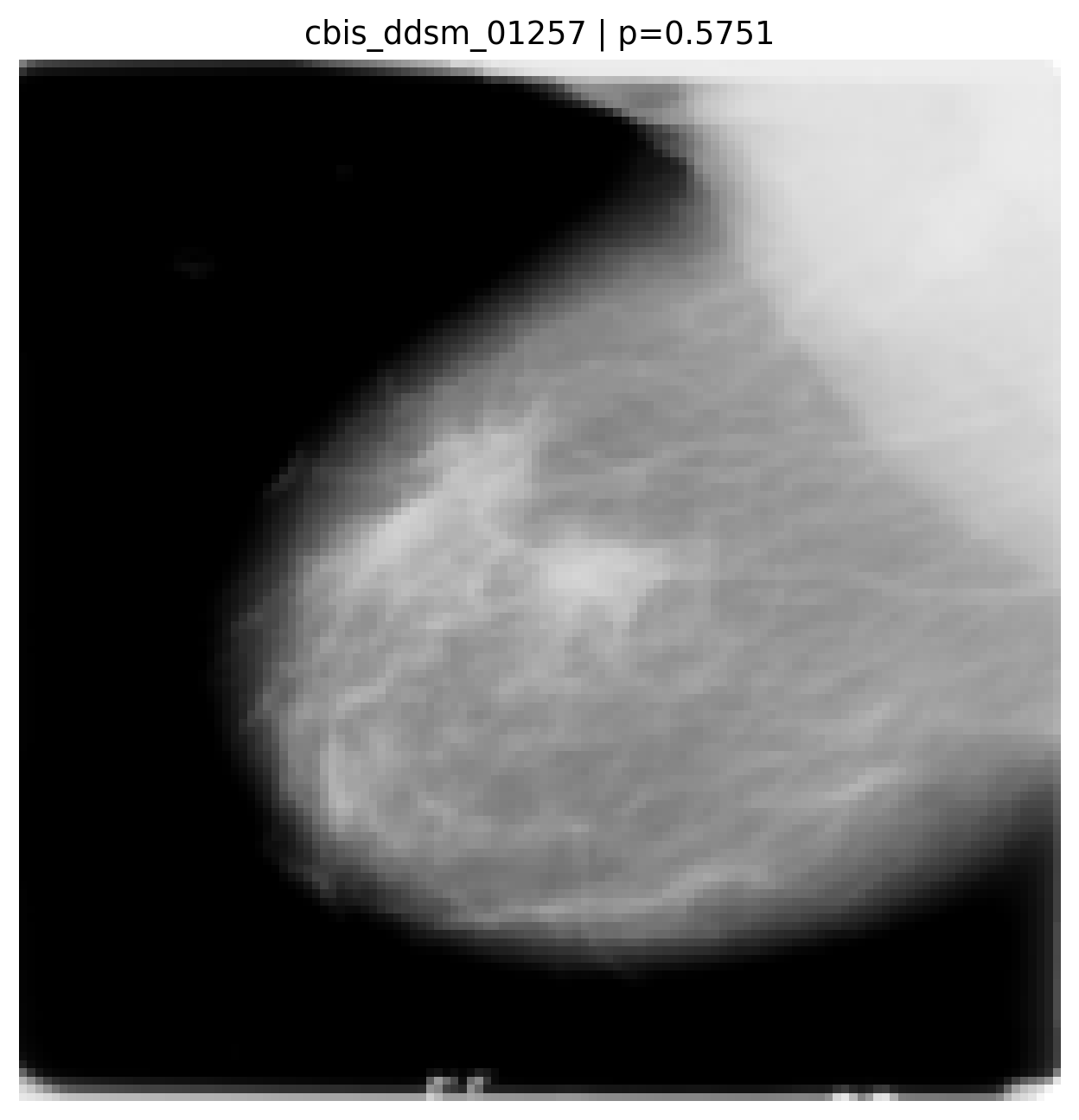}
\captionof{figure}{Original X-Ray Image CBIS-DDSM 01257}
\label{fig:original_cbis-ddsm01257}
\end{center}
\begin{center}
\includegraphics[scale=0.55]{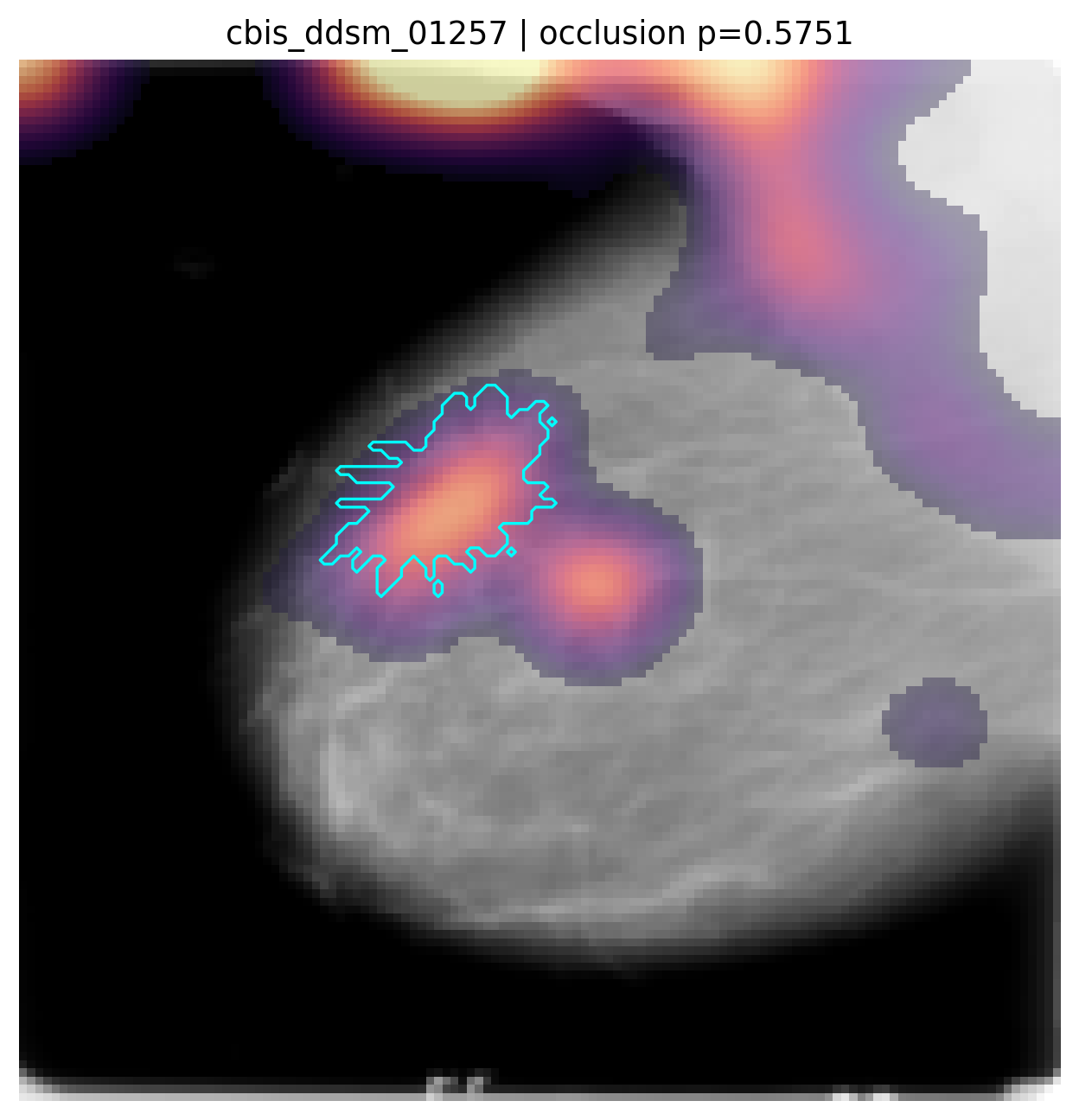}
\captionof{figure}{X-Ray Image CBIS-DDSM 01257 with Occlusion Overlay}
\label{fig:occlusion_cbis-ddsm01257}
\end{center}
\begin{center}
\includegraphics[scale=0.55]{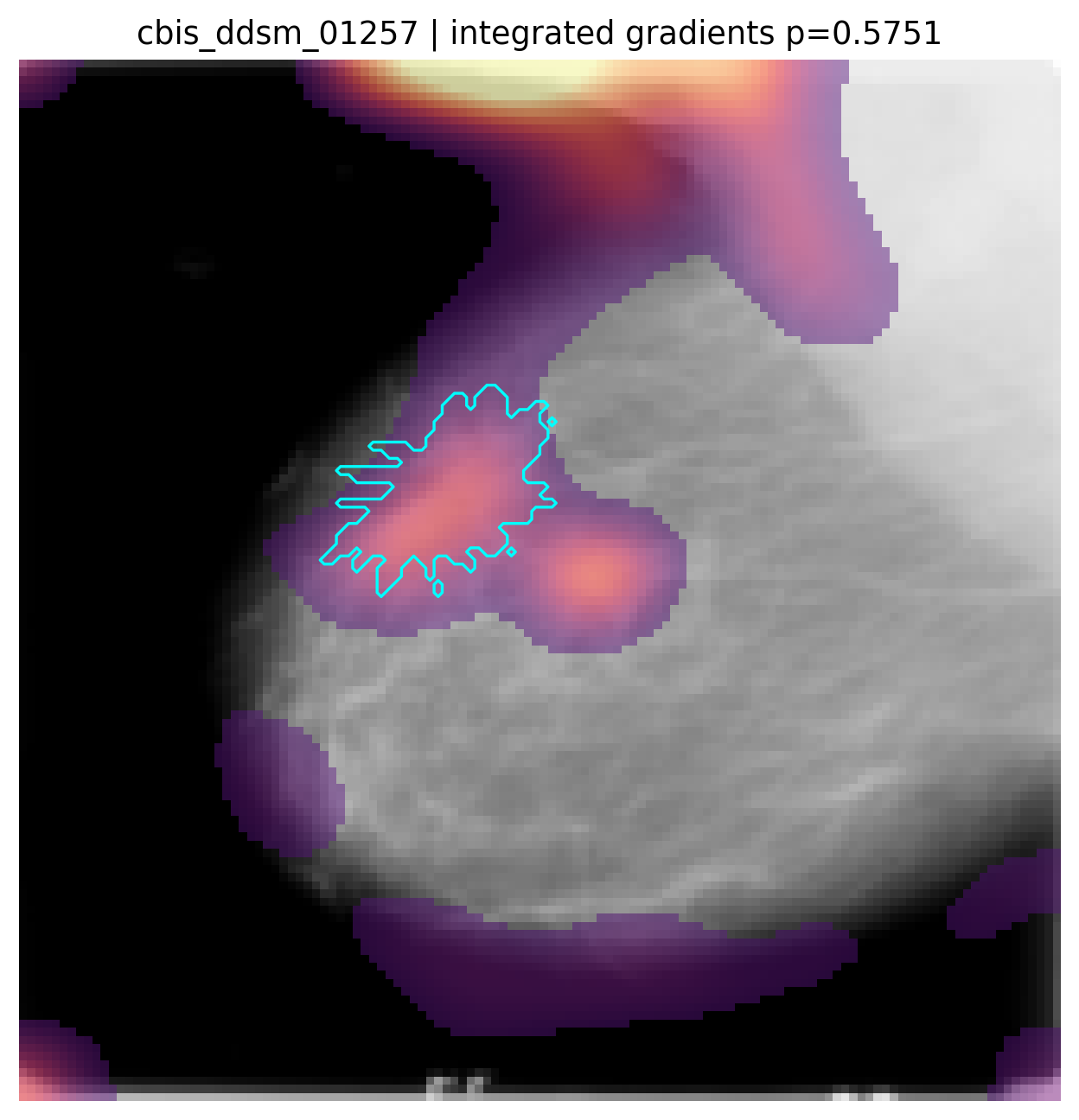}
\captionof{figure}{X-Ray Image CBIS-DDSM 01257 with Integrated Gradients Overlay}
\label{fig:int_grad_cbis-ddsm01257}
\end{center}
\begin{center}
\includegraphics[scale=0.55]{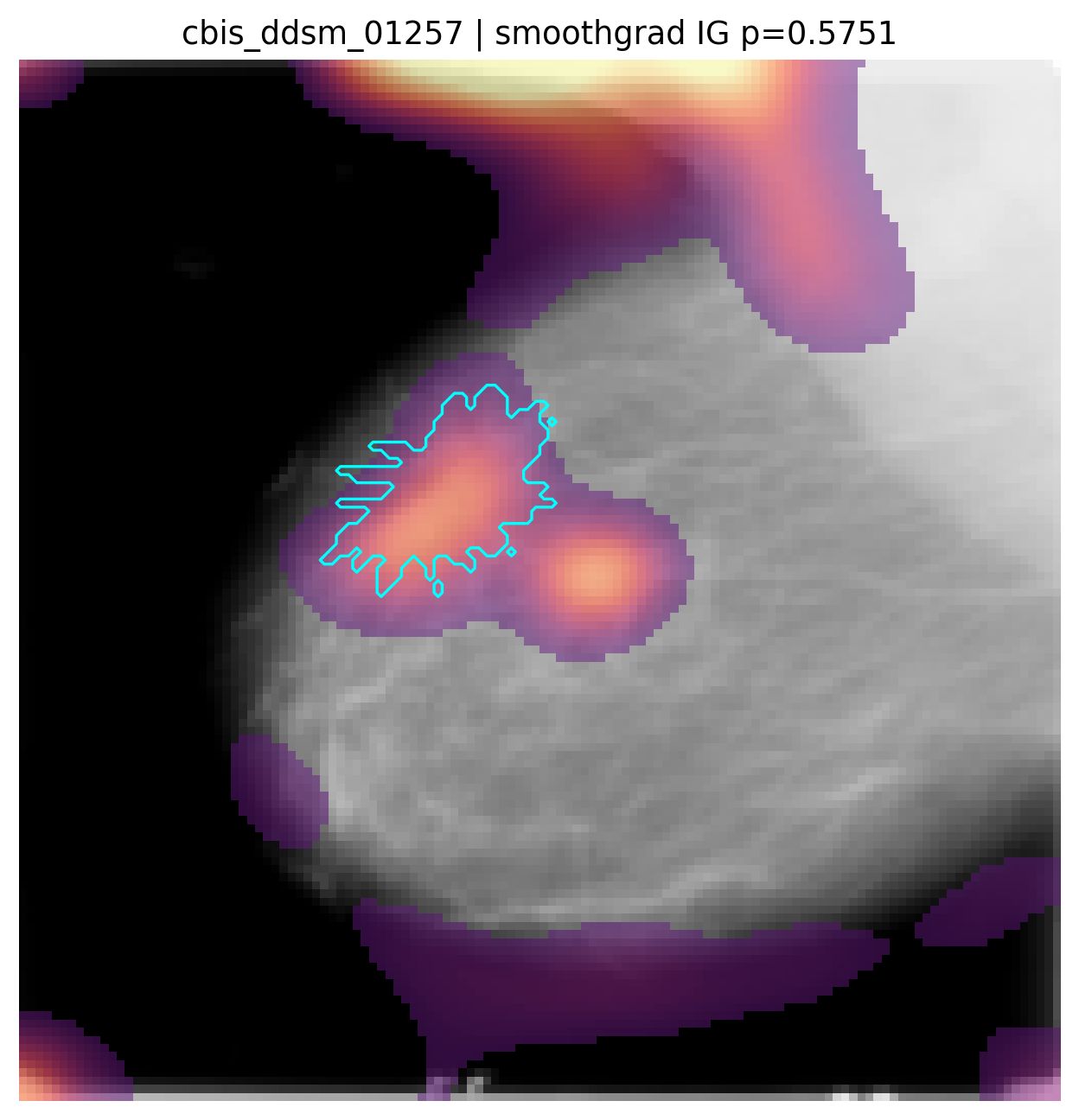}
\captionof{figure}{X-Ray Image CBIS-DDSM 01257 with SmoothGrad Overlay}
\label{fig:smoothgrad_cbis-ddsm01257}
\end{center}
The close agreement among the occlusion, Integrated Gradients, and SmoothGrad overlays is the key interpretive result. Because these methods rely on different perturbation and gradient mechanisms, agreement among them provides stronger evidence than any single heat map would provide alone. In this example, the overlays guide the physician back to a common region of diagnostic relevance on the original X-ray image, rather than dispersing attention across unrelated background artifacts. This permits a high-confidence explanatory analysis in the original image domain while preserving the VEIL trust boundary. The downstream classifier never receives the raw mammography image; it receives only the protected latent representation $\Psi=f_{\theta}\left(\mathbf{x}\right)$. The raw image, the encoder, and the explainer queries remain inside the trusted and tightly controlled Source Environment, where the composite model $h\left(\mathbf{x}\right)=g_{\phi}\left(f_{\theta}\left(\mathbf{x}\right)\right)$ can be queried for clinically meaningful visual explanations without exposing raw patient data outside the protected environment.\\

\section{Conclusions}

ML and AI have become central to enterprise and public-sector decision-making, but their continued adoption depends on whether organizations can use sensitive data without exposing it to downstream systems, cloud platforms, model-serving infrastructure, or logs. This paper has developed Informationally Compressive Anonymization (ICA) and the VEIL architecture as a response to that problem. The central claim is not that sensitive data can be made safe after export through a stronger wrapper around the same risky pipeline. Rather, the pipeline itself must be redesigned so that raw data, the encoder, and all operations requiring raw data remain inside a trusted Source Environment, while external Training and Inference Environments receive only task-aligned, non-invertible latent representations.\\
The technical development began with autoencoders and the SCRAE framework, but the method proposed here departs from conventional autoencoder privacy systems in a crucial way: reconstruction is not the objective of the deployed protection mechanism. The encoder is trained as a supervised, multi-objective, one-way representation learner. For classification, the representation objective organizes latent vectors into task-useful class geometry; for regression, the Graph-Laplacian loss replaces class clustering with a smooth target-aware geometry over continuous labels. The framework is further made practical through $k$-NN sparsification of the Graph-Laplacian loss, through diagnostics for monotonicity, calibration, and latent collapse, and through InfoNCE-style contrastive alternatives when the default Center or Graph-Laplacian losses are insufficient. The resulting methodology is therefore not limited to a single demonstration model; it is a family of supervised representation objectives for producing compressed, task-aligned, non-invertible model inputs.\\
The empirical utility results support this design across classification, regression, tabular credit-risk data, medical imaging, and high-dimensional sparse text-derived features. On MNIST, the SCRAE pipeline compressed each image from 784 pixels to a 2-dimensional latent vector, reducing input size by approximately 98.98\%, while improving test accuracy from $92.42\%$ for the raw baseline to $98.61\%$. On Ames Housing, the regression-adapted SCRAE compressed the engineered design space from 282 dimensions to 14 dimensions, reducing input size by approximately 95.04\%, while improving test $R^2$ from $0.9315$ to $0.9421$ and improving RMSLE from $0.1279$ to $0.1199$. The YearPredictionMSD experiments showed that the sparsified Graph-Laplacian variants remained competitive with, and in several metrics exceeded, the raw baseline while avoiding the quadratic cost of the dense loss. At $B=2,048$, the dense Graph-Laplacian step required $5.55$ seconds and approximately $10.0$ GB of peak allocated memory, whereas sparsified variants required only $18.4$--$42.8$ ms and $47$--$135$ MB, yielding speedups between $129.7\times$ and $301.8\times$. On Fashion-MNIST and E2006, the contrastive objectives preserved the same design pattern: the protected SCRAE pipeline outperformed the raw and DP baselines in Fashion-MNIST and matched the raw E2006 baseline while reducing the 150,360-dimensional representation to 64 dimensions, a storage and transmission reduction of approximately 99.96\%.\\
The later benchmarking and attack-simulation sections extend these results beyond illustrative datasets. In the Home Credit, Credit Card Default, and CBIS-DDSM studies, the VEIL-protected SCRAE pipelines matched or exceeded raw-baseline ROC-AUC while outperforming the corresponding DP and dense-autoencoder pipelines. These results are important because they show that ICA is not merely a privacy argument in isolation. In the evaluated pipelines, the protected representation remains useful for the downstream supervised task, and in several cases it improves generalization by discarding variation that is not useful for prediction. This is the practical advantage of informational compression when the bottleneck is aligned with the supervised objective rather than with input reconstruction.\\
The theoretical guarantees developed in this paper explain why these compressed representations should not be treated as disguised copies of the original records. Under the stated assumptions---a continuous encoder $f_{\theta}:\mathcal{X}\rightarrow\mathcal{Z}$, genuine input variability in dimension $D$, latent dimension $E<D$, no deployed decoder, and no external access to arbitrary encoder queries---the encoder is non-injective and therefore non-invertible. From the perspective of a Source Environment attacker who knows the encoder, the preimage of a latent vector remains underdetermined. From the perspective of a Training or Inference Environment attacker who does not know the encoder, the input dimension, or the input domain, the inverse problem expands into an unbounded family of possible input spaces and encoders. The privacy claim is therefore structural rather than computational: it does not depend on a cryptographic hardness assumption, a privacy budget, or an adversary being unable to afford enough computation. It depends on the fact that the information needed to reconstruct the original record is not present in the exported representation.\\
The empirical attack simulations are consistent with that claim while also clarifying its boundary. Across eight model--dataset pairs, DP pipelines produced statistically significant positive reconstruction advantage in four cases, while VEIL produced no statistically significant positive reconstruction advantage in any case. VEIL also produced zero statistically significant positive label-informed membership-inference results and zero statistically significant positive black-box membership-inference results. Attribute inference, property inference, prediction leakage, and model extraction require a more nuanced interpretation. VEIL reduced significant attribute-inference and property-inference outcomes relative to DP in the evaluated protocols, but it did not eliminate every task-correlated statistical relationship. Prediction leakage remained possible when the downstream output itself was correlated with a sensitive attribute, and model extraction could approximate the exposed downstream predictor $g_{\phi}:\mathcal{Z}\rightarrow\mathcal{Y}$ under repeated query access. These are not failures of non-invertibility; they are reminders that ICA protects raw inputs before they leave the Source Environment, while disclosed predictions, joinable artifacts, logs, and high-volume query interfaces must still be governed as part of the complete system.\\
For this reason, the conclusion of the security analysis is operational as well as mathematical. A correct VEIL deployment must keep the encoder and its parameters inside the Source Environment, prevent arbitrary external encoder queries, avoid exporting latent vectors with identifiers or join handles, govern prediction logs, minimize unnecessary output precision, rate-limit and monitor inference APIs, and test for task-correlated sensitive attributes during validation. Under those controls, the attacker is left with a constrained downstream interface and non-invertible, task-aligned latent vectors rather than raw, reconstructable records. This is the practical meaning of privacy-by-design in the VEIL architecture.\\
The paper also shows that privacy does not have to come at the expense of transparency. Because predictions are returned to the Source Environment, explainability can be performed on the composite model $h\left(\mathbf{x}\right)=g_{\phi}\left(f_{\theta}\left(\mathbf{x}\right)\right)$ while raw data remain protected. The Home Credit SHAP analysis demonstrates that a VEIL-protected credit-risk model can still produce global feature importance, dependence plots, decision plots, waterfall explanations, and applicant-level counterfactual reasoning in the original feature space. The CBIS-DDSM analysis shows the same principle for medical imaging: occlusion, Integrated Gradients, and SmoothGrad can identify diagnostically relevant regions of the original image inside the trusted environment, even though the downstream classifier receives only the protected latent representation. Thus, VEIL does not force a choice between privacy and explainability; it relocates explanation to the environment where raw data are already authorized to exist.\\
Taken together, the results of this paper support ICA and the VEIL architecture as a practical foundation for privacy-preserving supervised machine learning. Compared with DP, VEIL avoids gradient clipping, privacy-budget management, and the associated accuracy trade-offs observed in the evaluated pipelines. Compared with HE, VEIL avoids ciphertext expansion, specialized encrypted-computation stacks, and high-latency inference. Compared with unsupervised autoencoder encodings, VEIL aligns compression with the supervised task rather than with reconstruction. The result is an architecture that supports cloud-scale training, low-latency inference, multi-region and multi-organization deployment patterns, regulatory data-minimization goals, and source-domain explainability, while materially reducing the risk that downstream compromise exposes raw sensitive data.\\
ICA and VEIL should therefore be understood not as a single model architecture, but as a system-level design pattern: learn the smallest task-sufficient representation inside the trust boundary, export only that representation, train and serve downstream models only on that representation, and bring predictions back to the source for explanation and governance. Under the assumptions and operational controls described in this paper, this pattern replaces the traditional trade-off between privacy and performance with a more durable alternative: sensitive data remain where they belong, useful predictive signal moves safely, and the exported artifacts are structurally incapable of serving as reconstructable substitutes for the original records.

\end{multicols}

\pagebreak

\appendix
\section{Additional Figures}

\setcounter{figure}{0}
\renewcommand{\thefigure}{A\arabic{figure}}

\begin{figure}[htbp]
    \centering
    \includegraphics[width=\linewidth]{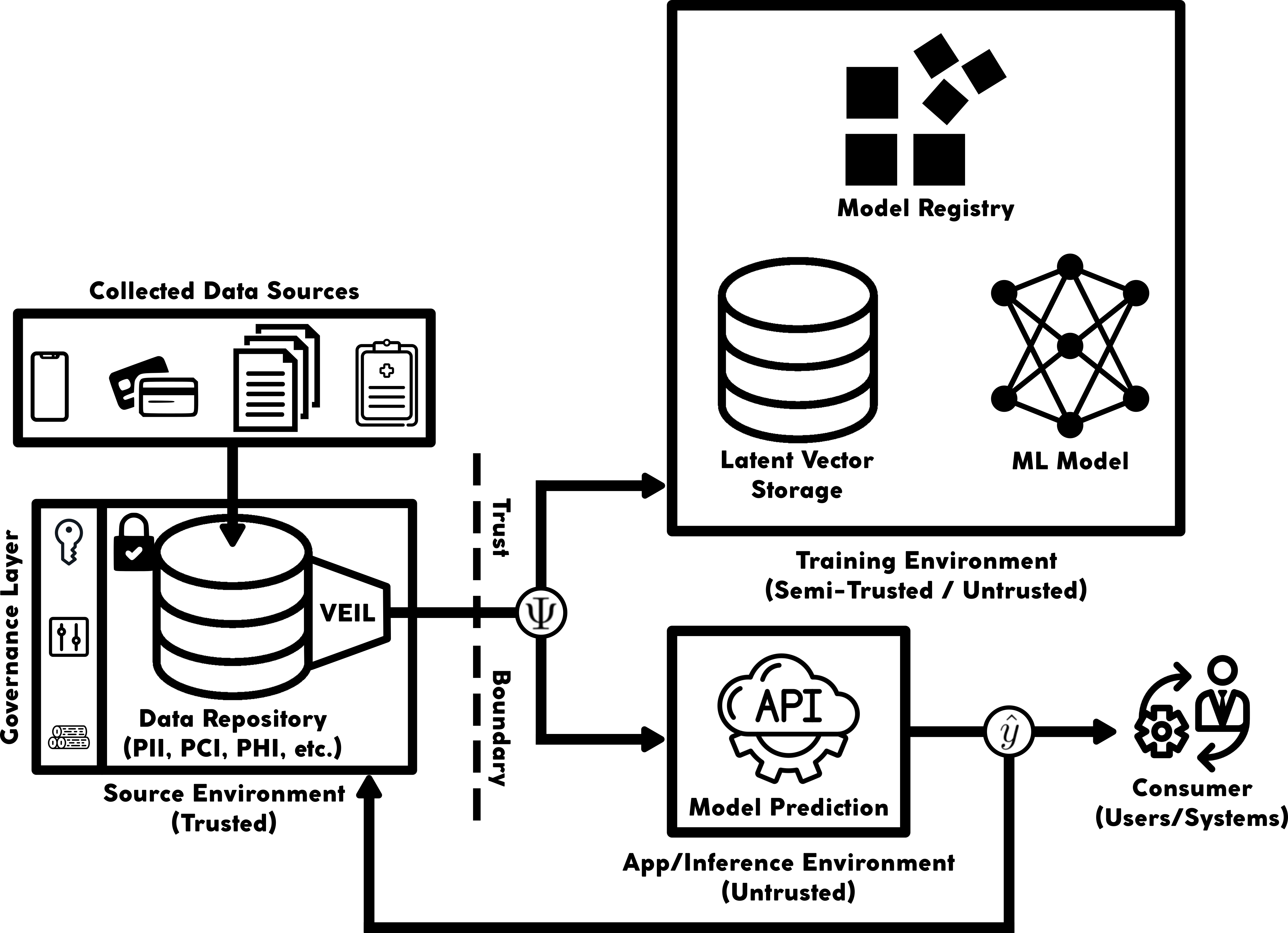}
    \caption{Complete VEIL Architecture}
    \label{fig:VEIL_architecture_full}
\end{figure}

\begin{figure}[htbp]
    \centering
    \includegraphics[width=\linewidth]{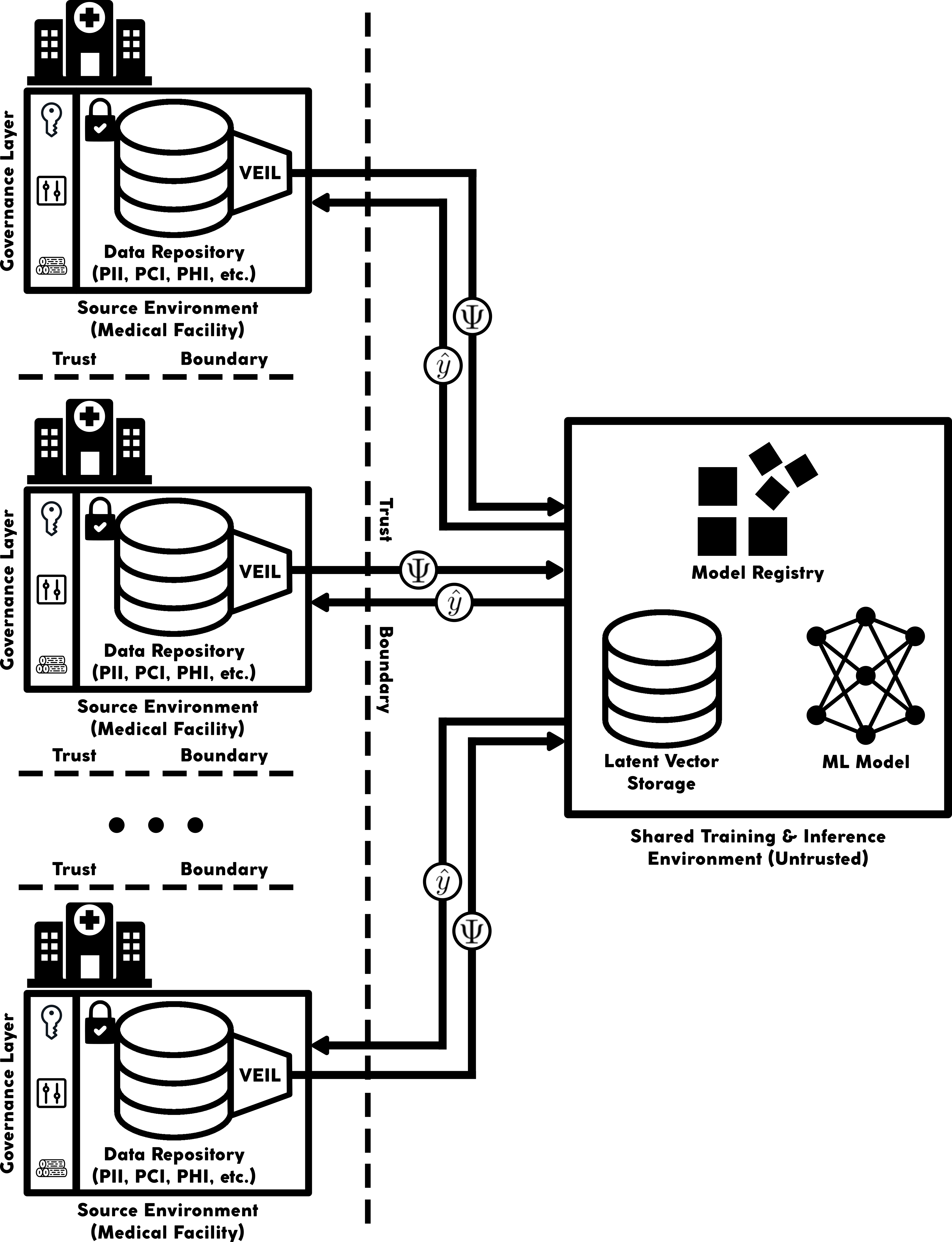}
    \caption{VEIL-enabled Shared Healthcare AI Infrastructure}
    \label{fig:shared_hc_infrastructure}
\end{figure}

\pagebreak

\printbibliography

\end{document}